\documentclass[11pt,english]{article}

\usepackage[cp1251]{inputenc}

\usepackage{amsfonts,amsmath,amsthm,amscd,amssymb,latexsym,
cite,verbatim,texdraw,pb-diagram}
\usepackage{caption}

\usepackage{multirow,array}

 \usepackage[usenames]{color}

\usepackage{graphicx}

\usepackage{booktabs}
\usepackage{multirow}
\usepackage{tabularx}
\usepackage{float}

\theoremstyle{plain}
\newtheorem{remark}{Remark}

\newtheorem{theorem}{Theorem}
\newtheorem{lemma}[theorem]{Lemma}
\newtheorem{proposition}[theorem]{Proposition}
\newtheorem{corollary}[theorem]{Corollary}
\newtheorem{assumption}[theorem]{Assumption}

\renewcommand{\abstract}{\textbf{Abstract. }\medskip}

\newcommand{\diag}{\text{diag}\medskip}

\newcommand{\z}{\mathbf{z}}

\newcommand{\X}{\mathsf{X}}
\newcommand{\F}{\mathcal{F}}
\newcommand{\K}{\mathsf{K}}
\newcommand{\HK}{\mathcal{H}_{\K}}

\newcommand{\Znu}{\z^{\nu}}
\newcommand{\PZnu}{\mathrm{P}_{\z^{\nu}}}

\newcommand{\Po}{\mathrm{P}}

\newcommand{\trace}{\text{trace}\medskip}
\numberwithin{equation}{section} \numberwithin{theorem}{section}
\usepackage{geometry}
\allowdisplaybreaks

\title{
Convergence Analysis of Nystr\"om Subsampling in Covariate Shift Adaptation for Misspecified Case
}
\author{Hanna Myleiko \footnotemark[2]
         \and Sergei Solodky \footnotemark[3] 
         \and Vasyl Semenov \footnotemark[4]  }

\date{}
\begin{document}

\maketitle
\renewcommand{\thefootnote}{\fnsymbol{footnote}}

\footnotetext[2]{ Institute of Mathematics NAS of Ukraine, 3 Tereschenkivska st., Kyiv, Ukraine. Email: hannamyleiko@gmail.com}
\footnotetext[3]{ Institute of Mathematics NAS of Ukraine, 3 Tereschenkivska st., Kyiv, Ukraine; University of Giessen, Department of Mathematics, Giessen, Germany. Email: solodky@imath.kiev.ua}
\footnotetext[4]{ Kyiv Academic University, 36 Vernadsky blvd., Kyiv, Ukraine; Email: vasyl.delta@gmail.com}

\makeatletter
\let\@fnsymbol\@arabic
\makeatother

\maketitle

\begin{abstract}

This paper investigates convergence properties of regularized Nystr\"om subsampling applied to the unsupervised domain adaptation problem under covariate shift. We focus on the low-smoothness (misspecified) case where the target function lies outside the reproducing kernel Hilbert space. By combining Tikhonov regularization with Nystr\"om projection onto a subsampled subspace, we obtain upper bounds on the excess risk that hold with high probability and are expressed in terms of the source condition, the effective dimension, and the sample sizes. We further extend the analysis to the setting where the Radon–Nikodym derivative between the target and source marginal distributions is unknown and must be approximated, and we identify the minimal additional sample sizes required to maintain the same convergence rate as in the oracle case.
\end{abstract}

{\bf Key words:} Domain adaptation, Reproducing Kernel Hilbert Space, Nystr\"om approximation, Radon-Nikodym derivative, low-smoothness, effective dimension, covariate shift

\section{Introduction} 

The goal of the present paper is to study the regularized Nystr\"om subsampling method for unsupervised domain adaptation under covariate shift when the target  function does not necessarily belong to the reproducing kernel Hilbert space (RKHS).

Recent advances in machine learning have enabled highly accurate solutions to diverse problems ranging from computer vision and natural language processing to speech recognition and recommendation. However, these achievements typically depend on access to abundant, high-quality labeled data, which in practice can be prohibitively costly to obtain. This motivates the transfer learning paradigm, in which a model trained on a label-rich source domain is adapted to perform well on a related target domain where labels are limited or unavailable.

Transferring a source-trained model directly to the target domain generally leads to a significant loss of accuracy because the two domains are governed by different data distributions. This challenge, commonly referred to as domain shift, has spurred active research in domain adaptation — the subfield of machine learning that develops methods for training models on source data so that they generalize effectively to a shifted target distribution.

We work under the covariate shift assumption, which posits that only the marginal distribution of the inputs changes across domains, while the input–output relationship remains the same. Our main contribution is a convergence analysis of the Nystr\"om-regularized estimator under general, low-smoothness source conditions, covering both the case of a known density ratio and the practically important case where the ratio is estimated from finite samples.

Despite its practical relevance, the misspecified setting, in which the target function lies outside the RKHS, has been studied far less extensively than the well-specified case (see, e.g., \cite{PerBook}, \cite{GizewskiMayer21}, \cite{HuangSmola06}, \cite{Shimod00}, \cite{SolMyl24}). Recently, several  studies have attempted to extend the theory beyond the well-specified setting. In the context of the Big Data settings, for the classical supervised learning, where it is typically assumed that the training and test samples are drawn from the same underlying distribution, this problem was addressed in \cite{LuMathePer-Jr}.  Furthermore, domain adaptation problems with covariate shift in the misspecified framework were studied in \cite{Gogal}, \cite{guo}. To the best of our knowledge, domain adaptation under covariate shift in Big Data settings for the misspecified case remains largely underexplored. The present work aims to bridge this gap. To establish convergence guarantees, we employ Tikhonov regularization, while the Nystr\"om subsampling technique is used to mitigate the challenges associated with large-scale datasets.
 
The paper is organized as follows. In Section 2, we give the strict problem settings and define the regularized Nystr\"om subsampling method. In the next section auxiliary statements and assumptions necessary for further research are presented.  In Sections 4 and 5, we obtain error estimates for the regularized Nystr\"om subsampling in the case when the Radon-Nikodym derivative $\beta$ is given, and also when $\beta$ is unknown, correspondingly.  Finally, in Section 7 we presents numerical tests illustrating the theoretical results. Proofs of some results  are given in Appendix.

\section{Problem Settings}
Domain adaptation addresses the challenge of transferring knowledge between domains that follow different data distributions.
Let $x \in X \subset
\mathbb{R}^d$ denote an input variable and $y \in Y\subset
\mathbb{R}$ the corresponding output variable. We assume that observations in the source and target domains are generated according to distinct joint probability measures $p(x,y)$ and $q(x,y)$ defined on $X \times Y$.
In domain adaptation settings, $p(x, y)$ and $q(x, y)$ are usually referred to as the source and target distributions.
In general, the mapping from inputs to outputs is stochastic rather than deterministic. Therefore, an input $x \in X$ does not uniquely determine an output $y\in Y$; instead, the relationship is described by an unknown conditional distribution $\rho(y | x)$ of $y$ given $x$. The inputs themselves are assumed to be random and distributed according to marginal measures $\rho_S(x)$ and $\rho_T(x)$ in the source and target domains, respectively.
Thus, the learning task can be formulated as minimizing the expected risk of predicting $y$ from $x$ under a target distribution  $q(x, y)$, using a training sample
 $\z =\{(x_i, y_i),\ x_i\in X,\ y_i\in Y,\ i=1,2,\ldots,n\},\ |\z|=n$,
drawn i.i.d. over the source distribution
$p(x, y).$  

The covariate shift assumption plays a fundamental role in making domain adaptation feasible (see, e.g., \cite{Shimod00},\cite{HuangSmola06}). Specifically, it assumes
that the conditional distribution  $\rho(y | x)$ remains invariant across domains, while the marginal distribution of the inputs changes, i.e.,  $\rho_S(x)\neq \rho_T(x).$ Under the covariate shift assumption, the source and target joint distributions can be factorized as
\begin{equation}\label{eq:fact}
p(x, y)=\rho(y | x)\rho_S(x),\quad q(x, y)=\rho(y | x)\rho_T(x).
\end{equation}

In the present study, we restrict ourselves to learning with the least square loss where the expected risk
of the prediction  $y$ from $x$ by means of a function $f\colon X\rightarrow Y$ is defined in the target domain as follows
$$
\mathcal{R}_q(f):=\int_{X\times Y}(f(x)-y)^{2}dq(x,y),
$$
which is minimized by so-called regression function
\begin{equation}\label{eq:fq}
f(x)=f_{q}(x) = \int_{Y} y d \rho(y | x) .
\end{equation}
However, in the unsupervised domain adaptation setting, neither $\mathcal{R}_q(f)$ nor $f_q(x)$ can be computed directly, since the underlying distribution $q(x,y)$ is not available. Instead, the target domain information is provided only through a set of unlabeled samples $X^{'}=(x_1^{'},x_2^{'},\ldots,
x_m^{'}),\quad |X^{'}|=m,$ 
where $x_i'$ are drawn i.i.d. from the target marginal distribution $\rho_T(x)$. Therefore, the goal is  to leverage this unlabeled target data together with the labeled training dataset $\mathbf{z}$ to construct an empirical estimator $f_{\mathbf{z}}$ that approximates the ideal minimizer $f_q$ in the term of excess risk
$$
\mathcal{R}_q(f_{\z})-\mathcal{R}_q(f_{q}):=\|f_{\z}-f_q\|^{2}_{L_{2,\rho_T}},
$$
where $L_{2,\rho_T}$ is the space of square integrable functions $f\colon X\rightarrow\mathbb{R}$ with respect to the marginal probability measure $\rho_T$.
Following \cite{GizewskiMayer21}, we assume that unsupervised domain adaptation and standard supervised learning aim to approximate the same regression function given by (\ref{eq:fq}).

As we mentioned earlier, most existing theoretical analysis has been carried out primarily in the well-specified case, when the regression function $f_q$ belongs to a some Reproducing Kernel  Hilbert Space (RKHS) $\HK$. 
In our research we focused on the case when the regression function $f_q$ minimizing expected risk 
 $\mathcal{R}_q(f)$ belongs to the space $L_{2,\rho_T} \backslash \HK.$ 
 Let $J_T : \mathcal{H}_\K \hookrightarrow L_{2, \rho_T}$ and $J_S : \mathcal{H}_\K \hookrightarrow
L_{2, \rho_S}$ be the inclusion operators. Recall that the information about the source and the target marginal distributions are only provided in the form of samples  $X_S=\{x_1,x_2,\ldots, x_n\}$ and $X_T=\{x_1^{'},x_2^{'},\ldots, x_m^{'}\}$, drawn 
i.i.d. from $\rho_S$ and $\rho_T$, respectively. In the sequel, we define two sample operators 
$$
S_{X_T}f=(f(x_1^{'}),f(x_2^{'}),\ldots, f(x_m^{'}))\in\mathbb{R}^m,
$$
$$
S_{X_S}f=(f(x_1),f(x_2),\ldots,f(x_n))\in\mathbb{R}^n,
$$
acting from $\mathcal{H}_\K $ to $\mathbb{R}^m$ and $\mathbb{R}^n$, where the norms in  later spaces are 
$m^{-1}$-times and $n^{-1}$-times the standard Euclidian norms, such that the adjoint operators  $S^{*}_{X_T}\colon
\mathbb{R}^m\rightarrow \mathcal{H}_\K$ and $S^{*}_{X_S}\colon \mathbb{R}^n\rightarrow \mathcal{H}_\K$ are given as
$$
S^{*}_{X_T}u(\cdot)=\frac{1}{m}\sum_{j=1}^{m}\K(\cdot,x_{j}^{'})u_j,\quad\quad u=(u_1,u_2,\ldots,u_m)\in\mathbb{R}^m,
$$
$$
S^{*}_{X_S}v(\cdot)=\frac{1}{n}\sum_{i=1}^{n}\K(\cdot,x_{i})v_i,\quad\quad v=(v_1,v_2,\ldots,v_n)\in\mathbb{R}^n.
$$
Since we have no direct access to both the target probability measure $\rho_T$ and the space $L_{2,\rho_T}$ in which we are going to approximate the regression function $f^{*}=f_q$, then an assumption should be put  on the relation between the source probability $\rho_S$ and the target probability $\rho_T$.

As in \cite{HuangSmola06}, we assume that the target measure $\rho_T$
is absolutely continuous with respect to the source measure $\rho_S$.
Hence, there exists a nonnegative function $\beta\colon X\rightarrow\mathbb{R}_{+}$ such that
$$
d\rho_T(x)=\beta(x)d\rho_S(x).
$$
The function $\beta(x)$ is considered as the Radon-Nikodym derivative $\frac{d\rho_{T}}{d\rho_S}$ of the target probability measure with respect to the source one.

 We also assume that we only have access to  the values  $\beta(x_i)$  of the Radon-Nikodym derivative $\beta(x)=\frac{d\rho_{T}}{d\rho_S}$
at the points $x_i,\, i=1,2,\ldots,n$,
 drawn i.i.d. from $\rho_S(x)$ and  we consider a diagonal $n\times n$ matrix
 $B=\diag(\beta(x_1),\beta(x_2),\ldots,\beta(x_n)).$
Moreover, we assume  that $\beta(x)$ is uniformly bounded on  $X$, such that $0\le \beta(x)\le b_0$ for some
 $b_0>0$ and any $x\in X.$

The subsequent analysis bases on two additional assumptions which  are  common and not restrictive.   We assume that $\K\colon X\times X\rightarrow \mathbb{R}$  is a continuous and bounded  kernel that for any  $x\in X$ it holds
 \begin{eqnarray}\label{ker}
 \|\K(\cdot,x)\|^{2}_{\HK}=\left<\K(\cdot,x) \K(\cdot,x)\right>_{\HK}=\K(x,x)\le\kappa<\infty.
 \end{eqnarray}
 In addition, we assume that for any input $x\in X$  corresponding output 
 $y\in Y\subset \mathbb{R}$ is bounded $|y|\le y_0$ with $y_0>0.$
 
Further, we are going to approximate a solution of the equation arising from the minimization of the excess risk

\begin{equation}\label{eq:exrisk}
\mathcal{R}_{q}(f)-\mathcal{R}_{q}(f_q)=\|f-f_q\|_{L_{2},\rho_{T}}^{2}.
\end{equation}

The minimization problem (\ref{eq:exrisk}) can be rewritten by means of the inclusion operator 
$J_T\colon \HK\hookrightarrow L_{2,\rho_T}$ as a variational problem
$$
\|J_Tf-f_q\|_{L_{2},\rho_{T}}\rightarrow \min,
$$
and it leads to the equation
\begin{equation}\label{eq:eqnorm}
J_T f=f_q.
\end{equation}

Note that the equation (\ref{eq:eqnorm}) is ill-posed since (\ref{eq:eqnorm})  is the Fredholm integral equation of the first kind with compact operator $J_T.$\\

{\bf{Source condition.}}
Let $\mathcal{H}$ be a Hilbert space  
and  $T\colon \mathcal{H}\rightarrow\mathcal{H},\, \|T\|_{\mathcal{H}\rightarrow\mathcal{H}}\le l, \, l>0,$
be a compact, injective, self-adjoint, and non-negative linear operator. For every $h\in\mathcal{H}$ there is a continuous, strictly increasing function $\varphi\colon [0,l] \rightarrow \mathbb{R}$, such that  $\varphi(0)=0$ and $\varphi^2$ is concave. The set of all such functions we denote as $\F(\mathcal{H}).$ 
If $h\in\mathcal{H}$ it can be presented  as 
\begin{eqnarray}\label{source_cond}
h=\varphi(T)\mu,\quad\quad \|\mu\|_{\mathcal{H}}\le \varkappa,
\end{eqnarray}
where $\varkappa>0,$ then the expression (\ref{source_cond}) is usually  called  "source condition"$\,$  and $\varphi$ is the index function of the source condition (see, e.g., \cite{LuPer}).

This function specifies a smoothness properties of $h$ and describes the convergence rate of the regularization method. 
It is known that in the low-smoothness case  the function $t\longmapsto \sqrt{t}/\varphi(t) $ is nondecreasing. It means that the misspecified case studied here corresponds to (\ref{source_cond}) with $\varphi(t)$ increasing not faster than $\sqrt{t}.$

It is known that for functions $\varphi\in \F(\mathcal{H})$ to guarantee  the optimal order of accuracy, it is sufficient to apply  the standard Tikhonov method with a generating function of the form $g_{\lambda}(t)=(t+\lambda)^{-1},\, t,\lambda>0.$ Recall, that for the generating function of the Tikhonov regularization the following statements are known (see, e.g., \cite{VV}, \cite{LuPer}, \cite{MP2003}):\\ 
there are positive constants $\gamma_{0}, \overline{\gamma}, \tilde{\gamma},$ and $\gamma_{p}$ such that
\begin{equation}\label{eq:g}
\underset{0<t\leq l}{\sup}\,|1-t(t+\lambda)^{-1}|\leq\gamma_{0},\qquad
\underset{0<t\leq
l}{\sup}\,\sqrt{t}|(t+\lambda)^{-1}|\leq\frac{\overline{\gamma}}{\sqrt{\lambda}},\qquad
\underset{0<t\leq
l}{\sup}\,|(t+\lambda)^{-1}|\leq\frac{\tilde{\gamma}}{\lambda},
\end{equation}
\begin{equation}\label{eq:gp}
\underset{0<t\leq l}{\sup}\, t^{p}|1-t(t+\lambda)^{-1}|\leq
\gamma_{p}\lambda^{p} ,
\end{equation}

\begin{equation}\label{cond_qualific} 
\underset{0<t\leq l}{\sup}\,|1-t(t+\lambda)^{-1}|\varphi(t) \leq
\gamma_{\ast} \varphi(\lambda) ,\quad \gamma_{\ast} = \max\{\gamma_{0}, \gamma_{p}\};
\end{equation}
 
\begin{equation}\label{qualific_root} 
\underset{0<t\leq l}{\sup}\,|1-t (t+\lambda)^{-1}|\sqrt{t}\varphi(t) \leq
\gamma_{*} \sqrt{\lambda}\phi(\lambda).
\end{equation}

It is worth noting (see, e.g., \cite{LitPMDiscrStr}) that any $\varphi\in \F(\mathcal{H})$ is an operator monotone function. In particularly,  when $\mathcal{H}=L_{2,\rho_T},$ then for any non-negative self-adjoint operators 
$A,B\colon L_{2,\rho_T}\rightarrow L_{2,\rho_T}$ with spectra in $[0,l]$ it holds
\begin{equation} \label{mon_op}
\|\varphi(A)-\varphi(B)\|_{L_{2,\rho_T}\rightarrow L_{2,\rho_T}}\leq c_1
\varphi\left(\|A-B\|_{L_{2,\rho_T}\rightarrow L_{2,\rho_T}}\right).
\end{equation}

{\bf{Nystr\"om subsampling.}}\\
The Nystr\"om type subsampling provides an efficient strategy to conquer the big data challenges. This technique consists of the methods replacing  the entire kernel matrix by a smaller matrix of significantly lower rank, obtained by a random  columns subsampling. 
It is known (see, e.g., \cite{RudiComRos15}) that the Nystr\"om subsampling can be considered as a combination of the standard Tikhonov regularization and a projection scheme on the subset

\begin{eqnarray}\label{HK}
\HK^{\Znu}\colon=\left\{f\colon f=\sum_{i=1}^{|\Znu|}\,c_i\K(\cdot,x_i)+\sum_{j=1}^{|\Znu|}\,c^{'}_j\K(\cdot,x^{'}_j)\right\}, 
\end{eqnarray}
where $|\Znu|\ll \min\{n,m\}.$

Thus, the approximation to $f_q$ we will seek as follows 
\begin{equation}\label{eq2}
f_{\z,\Znu}^{\lambda_{m,n}} =  \left( \lambda I+P_{\z^{\nu}}S_{X_S}^{\ast} BS_{X_S}P_{\z^{\nu}} \right)^{-1} \PZnu S_{X_S}^{\ast} B\overline{y},
\end{equation}
where $\Po_{\Znu}: \HK\rightarrow\HK^{\Znu}$,
$\|\PZnu\|_{\HK\rightarrow\HK}=1$ is the orthogonal projection operator with the range $\HK^{\Znu}$ and $\overline{y}=(y_1,y_2,\ldots,y_n)$ is the vector of outputs corresponding to inputs $\X_{S}=\{x_1,x_2,\ldots,x_n\}$.
Note  (see, e.g, \cite{RudiComRos15}), to compute (\ref{eq2}) it is not necessary to construct   $\Po_{\Znu}$ explicitly.
\\ \\
{\bf{Effective dimension.}}
\\
For each $\lambda>0$ it holds
$$
\mathcal{N}_{x}(\lambda):=\left\langle \K(\cdot,x),(\lambda I+J_T^{\ast}J_T)^{-1} \K(\cdot,x)\right\rangle_{\HK}=\|(\lambda I+J_T^{\ast}J_T)^{-\frac{1}{2}}\K(\cdot,x)\|^2_{\HK}< \infty.
$$
We highlight the related quantities
\begin{eqnarray}\label{N_inf}
\mathcal{N}_{\infty}(\lambda):=\sup_{x\in X} \mathcal{N}_{x}(\lambda)
\end{eqnarray}
and
\begin{eqnarray}\label{N_lambda}
\mathcal{N}(\lambda):=\int_{X}\mathcal{N}_{x}(\lambda) d\rho_T(x)=\trace\{(\lambda I+J_T^{\ast}J_T)^{-1} J_T^{\ast}J_T\}.
\end{eqnarray}
The function $\mathcal{N}$ measures the capacity of the RKHS $\HK$ in the space $L_{2,\rho_T}$ and it is called the effective dimension. 

\section{Auxiliary statements and assumptions}  
In this section we provide some auxiliary statements and assumptions that will be used in the proofs in next sections.
\begin{assumption}\label{source_f}
Assume that  $f_q\in L_{2,\rho_{T}}/ \HK$ satisfies the source condition 
(\ref{source_cond}) with $\varphi\in\mathcal{F}(L_{2,\rho_{T}})$ and $T=J_{T}J^{*}_{T},$ then
\begin{eqnarray}\label{source_cond_f}
f_q=\varphi(J_{T}J^{*}_{T})\mu_{q},
\end{eqnarray}
where $\|\mu_{q}\|_{L_{2},\rho_{T}}\le \overline{\varkappa}, \, \overline{\varkappa}>0.$\\
\end{assumption}
\begin{assumption}\label{source_ker} Let index function
 $\zeta\in\mathcal{F}(\HK)$, $T=J^{*}_{T}J_{T}$,  and $\zeta^{2}$ is covered by qualification $p=1$ such that, for all $x\in \X,$
\begin{equation}\label{ker_source}
\K(\cdot,x)=\zeta(J^{*}_{T}J_{T})\mu_{\K},
\end{equation}
where $\|\mu_{\K}\|_{\HK}\le \hat{\varkappa},\, \hat{\varkappa}>0$  does not depend on $x.$
\end{assumption}
Note that the condition (\ref{ker_source}) is the source condition for kernel section $\K(\cdot,x).$ As before, we will consider such $\zeta$ which allows a representation in the power scale $t^r,\, 0<r\leq\frac{1}{2},$ as well as all less smooth ones.  
\begin{assumption}\label{source_beta}
Assume that  $\beta=\frac{d\rho_T}{d\rho_S}\in\HK$ satisfies the source condition 
(\ref{source_cond}) with $\psi\in\mathcal{F}(\HK)$ and $T=J^{*}_{S}J_{S},$ then
\begin{eqnarray}\label{source_cond_beta}
\beta=\psi(J_{S}^{*}J_S)\mu_{\beta},
\end{eqnarray}
where $\|\mu_{\beta}\|_{\HK}\le \tilde{\varkappa}, \, \tilde{\varkappa}>0.$
\end{assumption}
Here and in the sequel, we adopt the convention that $C$ denotes a generic positive coefficient, which can vary from inequality to inequality and does not depend on the values of $n,m,\lambda,$ and $\delta.$

\begin{lemma}\label{lem:dem}\cite[Lemma 5]{NgPer23}
Under Assumption \ref{source_ker}, it holds
$$
\mathcal{N}_{\infty}(\lambda)\le C\frac{\zeta^{2}(\lambda)}{\lambda}.
$$
\end{lemma}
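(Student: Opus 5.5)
The bound follows by inserting the source representation (\ref{ker_source}) of the kernel section into the definition of $\mathcal{N}_x(\lambda)$ and applying spectral calculus for the operator $T:=J_T^{\ast}J_T$, which is compact, self-adjoint and non-negative on $\HK$ with $\|T\|\le\kappa$ by (\ref{ker}). For fixed $x\in X$, Assumption \ref{source_ker} gives $\K(\cdot,x)=\zeta(T)\mu_{\K}$ with $\|\mu_{\K}\|_{\HK}\le\hat{\varkappa}$ uniformly in $x$. Hence
$$
\mathcal{N}_x(\lambda)=\|(\lambda I+T)^{-1/2}\zeta(T)\mu_{\K}\|_{\HK}^2\le \hat{\varkappa}^2\,\sup_{0<t\le \kappa}\frac{\zeta^2(t)}{t+\lambda}.
$$
Since the right-hand side does not depend on $x$, taking the supremum over $x$ reduces the lemma to the scalar inequality $\sup_{0<t\le\kappa}\zeta^2(t)/(t+\lambda)\le C\zeta^2(\lambda)/\lambda$.

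The scalar bound is proved by splitting into two ranges of $t$.

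\emph{Case $t\le\lambda$.} Because $\zeta$ is increasing, $\zeta^2(t)\le\zeta^2(\lambda)$, and $t+\lambda\ge\lambda$. So the ratio is at most $\zeta^2(\lambda)/\lambda$.

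\emph{Case $t>\lambda$.} Concavity of $\zeta^2$ with $\zeta(0)=0$ makes $t\mapsto\zeta^2(t)/t$ nonincreasing. Hence $\zeta^2(t)\le (t/\lambda)\,\zeta^2(\lambda)$, and therefore
$$
\frac{\zeta^2(t)}{t+\lambda}\le\frac{t}{t+\lambda}\cdot\frac{\zeta^2(\lambda)}{\lambda}\le\frac{\zeta^2(\lambda)}{\lambda}.
$$
This is exactly where the hypothesis that $\zeta^2$ is covered by qualification $p=1$ enters. Equivalently, one can write $\zeta^2(t)/(t+\lambda)=\zeta^2(t)\,\bigl(1-t(t+\lambda)^{-1}\bigr)/\lambda$ and apply (\ref{cond_qualific}) with $\zeta^2$ in place of $\varphi$, which gives the bound $\gamma_\ast\zeta^2(\lambda)/\lambda$.

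Combining the two cases gives $\mathcal{N}_\infty(\lambda)\le C\zeta^2(\lambda)/\lambda$ with $C=\hat{\varkappa}^2$, or $C=\hat{\varkappa}^2\gamma_\ast$ if the qualification route is used.

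The only delicate point is the range $t>\lambda$, where the monotonicity of $\zeta^2(t)/t$ (or the qualification property) is essential. A purely monotone $\zeta$ would only yield $\zeta^2(\kappa)/\lambda$, which is too weak. Everything else is routine functional calculus on the spectrum of $T$, which lies in $[0,\kappa]$.
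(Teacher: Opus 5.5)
Your proof is correct. The paper gives no argument of its own here and simply imports the bound from \cite[Lemma 5]{NgPer23}. Your route is the standard spectral-calculus proof behind that citation: you write $\mathcal{N}_x(\lambda)=\|(\lambda I+J_T^{\ast}J_T)^{-1/2}\zeta(J_T^{\ast}J_T)\mu_{\K}\|_{\HK}^2\le\hat{\varkappa}^2\sup_{t}\zeta^2(t)/(t+\lambda)$ and then bound the scalar supremum by $\zeta^2(\lambda)/\lambda$ using the monotonicity of $\zeta^2(t)/t$ (equivalently, qualification $p=1$ of the Tikhonov generating function).

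One cosmetic remark. That monotonicity already follows from the concavity of $\zeta^2$ built into $\zeta\in\F(\HK)$, so the separately stated qualification hypothesis is not what makes the step work. Also, the supremum is naturally taken over $\sigma(J_T^{\ast}J_T)\subset[0,l]$, where $\zeta$ is defined, rather than over $(0,\kappa]$.
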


For operator $J_{T}\colon \HK\hookrightarrow L_{2,\rho_{T}}$, \cite[Lemma 3.3]{SolMylRecov25} ensure that

\begin{eqnarray}\label{lem:polar}
\|J_{T}(\lambda I + J_{T}^{*}J_{T})^{-\frac{1}{2}}\|_{\HK\rightarrow L_{2,\rho_{T}}}
\le 1.
\end{eqnarray}

Further, we give the following relation for the regularization parameter $\lambda>0,$ sample size $n, $ and the subsample size $|\Znu|.$
For $0<\delta<1$, with probability at least $1-\delta$, we require that
\begin{equation}\label{sample_choice}
|\z^{\nu}|\ge C\mathcal{N}_{\infty}(\lambda)\log\frac{1}{\lambda}\log\frac{1}{\delta}
\end{equation}
and
\begin{equation}\label{par_choice}
\lambda \in \left[C n^{-1}\log\frac{n}{\delta}, l\right].
\end{equation}
If $\Znu$ is subsampled according to the plain Nystr\"om approach, then (see, e.g.,\cite[Lemma 6]{RudiComRos15}, \cite[Corollary 1]{MylSolPer-Jr19}) with probability at least $1-\delta$ it holds  
\begin{eqnarray}\label{eq:prob10}
\|(I-\PZnu)(\lambda I+J_T^{*}J_T)^{1/2}\|^{2}_{\HK\rightarrow \HK}\le 3\lambda,
\end{eqnarray}
\begin{eqnarray}\label{eq:prob1}
\|J_T(I-\PZnu)\|^{2}_{\HK\rightarrow L_{2,\rho_{T}}}\le 3\lambda,
\end{eqnarray}
and   for any $\varphi\in\F(\HK)$ (see \cite[Proposition 2]{LitPMDiscrStr})  it holds
\begin{eqnarray}\label{saturat}
\|(I - P_{\z^{\nu}})\varphi(J_T^{*}J_T)\|_{\HK\rightarrow\HK}\leq C
\varphi(\|(J_T^{*}J_T)^{1/2}(I - P_{\z^{\nu}})\|^{2}_{\HK\rightarrow\HK} ).
\end{eqnarray}

\begin{lemma}\cite[Lemma 3.4]{SolMylRecov25}\label{lem:add0}
For every choice $\Znu$ from the sample $\X_{S}$ we have that
\begin{eqnarray}\label{eq:prob2}
\|(\lambda I+S_{\X_{S}}^{*}BS_{\X_{S}})^{\frac{1}{2}}\PZnu(\lambda I+\PZnu S_{\X_{S}}^{*}BS_{\X_{S}}\PZnu)^{-1}\PZnu(\lambda I+S_{\X_{S}}^{*}BS_{\X_{S}})^{\frac{1}{2}}\|_{\HK\rightarrow\HK}\le 1,
\end{eqnarray}
\begin{eqnarray}\label{eq:prob20}
\|(\lambda I+S_{\X_{S}}^{*}BS_{\X_{S}})^{\frac{1}{2}}(\lambda I+\PZnu S_{\X_{S}}^{*}BS_{\X_{S}}\PZnu)^{-1}\PZnu(\lambda I+S_{\X_{S}}^{*}BS_{\X_{S}})^{\frac{1}{2}}\|_{\HK\rightarrow\HK}\le 1.
\end{eqnarray}
\end{lemma}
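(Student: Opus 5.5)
The statement to prove is Lemma \ref{lem:add0}. Both inequalities are deterministic: they hold for any positive semidefinite operator $A:=S_{\X_{S}}^{*}BS_{\X_{S}}$ on $\HK$ and any orthogonal projection $P:=\PZnu$. The entries of $B$ are nonnegative and the norm on $\mathbb{R}^n$ is the weighted Euclidean one, so $A=S_{\X_{S}}^{*}B^{1/2}(S_{\X_{S}}^{*}B^{1/2})^{*}\ge 0$. Set $G:=(\lambda I+A)^{1/2}$, which is self-adjoint, positive and invertible. For $\lambda>0$ the operator $\lambda I+PAP$ is also invertible. My plan is to rewrite the operators in \eqref{eq:prob2} and \eqref{eq:prob20} as a projection and as something closely related to a projection.

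For \eqref{eq:prob2}, I would first show that $P(\lambda I+PAP)^{-1}P=P\,(P(\lambda I+A)P)^{\dagger}P$. Here $(\cdot)^{\dagger}$ is the inverse of the compression $P(\lambda I+A)P=PG^2P$ restricted to $\operatorname{ran}P$. The identity holds because $\lambda I+PAP$ commutes with $P$ and agrees with $PG^2P$ on $\operatorname{ran}P$. Next put $V:=GP$. Then
\[
G\,P(\lambda I+PAP)^{-1}P\,G=V(V^{*}V)^{\dagger}V^{*},
\]
and this is the orthogonal projection onto $\overline{\operatorname{ran}V}$. Its norm is at most $1$, which gives \eqref{eq:prob2}. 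In infinite dimensions I have to check that $V^{*}V=PG^2P\ge\lambda P$ is invertible on $\operatorname{ran}P$. That holds, so the projection identity is rigorous. One can verify it by checking $Q^2=Q=Q^{*}$ for $Q:=V(V^{*}V)^{\dagger}V^{*}$.

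For \eqref{eq:prob20}, the left $P$ is missing. I would decompose $(\lambda I+PAP)^{-1}P=P(\lambda I+PAP)^{-1}P$. This works because $\lambda I+PAP$ maps $\operatorname{ran}P$ to itself and maps $\operatorname{ran}(I-P)$ to itself, acting there as $\lambda I$. Hence its inverse commutes with $P$, and $(\lambda I+PAP)^{-1}P=P(\lambda I+PAP)^{-1}P$. This reduces \eqref{eq:prob20} to exactly the operator in \eqref{eq:prob2}, so the same bound applies.

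The only delicate point is the commutation $P(\lambda I+PAP)^{-1}=(\lambda I+PAP)^{-1}P$ and the identification with the restricted inverse. I would verify it directly by applying $\lambda I+PAP$ to both sides. The rest is the standard fact that $V(V^{*}V)^{\dagger}V^{*}$ is an orthogonal projection, as used in \cite{RudiComRos15}.
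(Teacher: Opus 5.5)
Your argument is correct. Both operators reduce to $V(V^{*}V)^{\dagger}V^{*}$ with $V=(\lambda I+A)^{1/2}\PZnu$, an orthogonal projection; for the second bound this uses the commutation of $(\lambda I+\PZnu A\PZnu)^{-1}$ with $\PZnu$. This is the standard projection argument of \cite{RudiComRos15}, and it is the kind of reasoning behind the result the paper imports without proof from \cite{SolMylRecov25}. The only property of $A=S_{\X_{S}}^{*}BS_{\X_{S}}$ you actually use is that $\lambda I+A$ is positive definite, which here follows from $B\ge 0$.
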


We recourse to the following inequality from \cite[Lemma 3.5]{SolMylRecov25} which assert that
for any $\varphi\in \F(L_{2,\rho_T})$ it holds
\begin{equation}\label{norm_forL2}
\|(\lambda I+J_TJ_T^{*})^{-1/2}\varphi(J_TJ_T^{*})\|_{L_{2,\rho_{T}}\rightarrow L_{2,\rho_{T}}}\le\frac{1}{\sqrt{\lambda}}\varphi(\lambda).
\end{equation}

Following \cite{NgPer23},\cite{LuMathePer}, we introduce the supplemental functions
 \begin{eqnarray}\label{eq:B}
 \mathcal{B}_{m,\lambda}:=\frac{2\kappa}{\sqrt{m}}\left(\frac{\kappa}{\sqrt{m\lambda}}+\sqrt{\mathcal{N}(\lambda)}\right),
 \end{eqnarray}
  \begin{eqnarray}\label{eq:G}
  \mathcal{G}(\lambda):=\left(\frac{\mathcal{B}_{m,\lambda}}{\sqrt{\lambda}}\right)^2+1.
  \end{eqnarray}
  In the sequel, we give the following statement
\begin{lemma}\cite[Lemma 4.6]{LuMathePer}\label{lem:LuPer}
There exists $\lambda_{*}$ such that $\frac{\mathcal{N}(\lambda_{*})}{\lambda_{*}}=m.$ 
For 
\begin{equation}\label{lambda_*}
\lambda_{*}\le\lambda\le\kappa
\end{equation}
 there holds
\begin{equation}\label{est:B}
\mathcal{B}_{m,\lambda}\le\frac{2\kappa}{\sqrt{m}}(\sqrt{2}\kappa+\sqrt{\mathcal{N}(\lambda)}).
\end{equation}
This yields
\begin{equation}\label{est:G}
\mathcal{G}\le1+(4\kappa^2+2\kappa)^{2}
\end{equation}
and also 
\begin{equation}\label{est:B2}
\mathcal{B}_{m,\lambda}(\mathcal{B}_{m,\lambda}+\sqrt{\lambda})\le(1+4\kappa)^{4}\min\left\{\lambda,\sqrt{\frac{\kappa}{m}}\right\}.
\end{equation}
\end{lemma}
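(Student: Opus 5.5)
The first claim follows from an intermediate value argument applied to $h(\lambda):=\mathcal{N}(\lambda)/\lambda$; the three estimates then follow from the inequality $\mathcal{N}(\lambda)\le m\lambda$, valid for $\lambda\ge\lambda_{*}$. Write $T=J_T^{*}J_T$ with eigenvalues $\sigma_k>0$. Then $\mathcal{N}(\lambda)=\sum_k\sigma_k/(\sigma_k+\lambda)$ is continuous and nonincreasing in $\lambda$, so $h$ is continuous and strictly decreasing on $(0,\kappa]$.

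\textbf{Existence of $\lambda_{*}$.} As $\lambda\to0$ we have $\mathcal{N}(\lambda)\ge\sigma_1/(\sigma_1+\lambda)\to1$, hence $h(\lambda)\to\infty$. At the other end, $\mathcal{N}(\kappa)\le \kappa^{-1}\,\mathrm{trace}(T)\le \kappa^{-1}\int_X \K(x,x)\,d\rho_T\le1$, using (\ref{ker}). Hence $h(\kappa)\le\kappa^{-1}\le m$ once $m\ge\kappa^{-1}$, which we take as the standing assumption for $m$ large enough. The intermediate value theorem and monotonicity then give a unique $\lambda_{*}$ with $h(\lambda_{*})=m$. Moreover, for $\lambda\ge\lambda_{*}$ we have
\[
\mathcal{N}(\lambda)\le m\lambda .
\]

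\textbf{Bound (\ref{est:B}).} We must show that $\kappa/\sqrt{m\lambda}\le\sqrt2\,\kappa$, i.e.\ that $m\lambda\ge 1/2$. Since $m\lambda\ge m\lambda_{*}=\mathcal{N}(\lambda_{*})$, it suffices to have $\mathcal{N}(\lambda_{*})\ge1/2$. This holds whenever $\lambda_{*}\le\sigma_1=\|T\|$, because then $\sigma_1/(\sigma_1+\lambda_{*})\ge1/2$. This is the step I expect to be delicate. It needs $\lambda_{*}$ to lie below the top of the spectrum, which holds for $m$ large, since $\lambda_{*}\to0$ as $m\to\infty$. If necessary I would state this as the implicit range restriction inherited from \cite{LuMathePer}.

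\textbf{Bound (\ref{est:G}).} Divide (\ref{est:B}) by $\sqrt\lambda$:
\[
\frac{\mathcal{B}_{m,\lambda}}{\sqrt\lambda}\le 2\kappa\left(\frac{\sqrt2\,\kappa}{\sqrt{m\lambda}}+\sqrt{\frac{\mathcal{N}(\lambda)}{m\lambda}}\right).
\]
Using $1/\sqrt{m\lambda}\le\sqrt2$ and $\mathcal{N}(\lambda)/(m\lambda)\le1$, this is at most $2\kappa(2\kappa+1)=4\kappa^2+2\kappa$. Squaring and adding $1$ gives (\ref{est:G}).

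\textbf{Bound (\ref{est:B2}).} Set $a:=4\kappa^2+2\kappa$, so that $\mathcal{B}_{m,\lambda}\le a\sqrt\lambda$.
\begin{itemize}
\item First branch: $\mathcal{B}_{m,\lambda}(\mathcal{B}_{m,\lambda}+\sqrt\lambda)\le a(a+1)\lambda\le(1+4\kappa)^4\lambda$.
\item Second branch: $\mathcal{B}_{m,\lambda}(\mathcal{B}_{m,\lambda}+\sqrt\lambda)\le(a+1)\sqrt\lambda\,\mathcal{B}_{m,\lambda}\le(a+1)\frac{2\kappa}{\sqrt m}\left(\sqrt2\,\kappa\sqrt\lambda+\sqrt{\lambda\mathcal{N}(\lambda)}\right)$. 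Now use $\lambda\le\kappa$ and $\lambda\mathcal{N}(\lambda)\le\mathrm{trace}(T)\le\kappa$. This gives a bound $C(\kappa)\sqrt{\kappa/m}$, and an elementary comparison of the constants shows $C(\kappa)\le(1+4\kappa)^4$.
\end{itemize}
Taking the smaller of the two branches yields (\ref{est:B2}).
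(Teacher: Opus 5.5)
Your argument is correct and is essentially the standard proof behind the cited Lemma~4.6; the paper gives no proof of its own, only the citation. The steps are: strict monotonicity of $\mathcal{N}(\lambda)/\lambda$ gives $\mathcal{N}(\lambda)\le m\lambda$ on $[\lambda_{*},\kappa]$; the bound $m\lambda\ge m\lambda_{*}=\mathcal{N}(\lambda_{*})\ge\frac12$ turns $\kappa/\sqrt{m\lambda}$ into $\sqrt2\,\kappa$; and (\ref{est:G}), (\ref{est:B2}) then reduce to the constant comparisons you carry out, which check out using $\lambda\mathcal{N}(\lambda)\le\mathrm{trace}(J_T^{*}J_T)\le\kappa$.

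The extra hypothesis you flagged is genuinely needed and not an artifact of your method. Without something guaranteeing $\mathcal{N}(\lambda_{*})\ge\frac12$ (such as your $\lambda_{*}\le\|J_T^{*}J_T\|$, i.e.\ $m\ge\mathcal{N}(\|J_T^{*}J_T\|)/\|J_T^{*}J_T\|$), the statement is false. Take the constant kernel $\K\equiv s$, so that $J_T^{*}J_T=sI$ on the one-dimensional $\HK$, with $\kappa=1$ and $ms=0.01$. Then $m\lambda_{*}\approx0.095$ and $\lambda_{*}\le\kappa$, yet (\ref{est:B}) fails at $\lambda=\lambda_{*}$.
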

\begin{lemma}\cite[Lemma 1]{GizewskiMayer21}\label{lem:perturb}
With probability at least $1-\delta$ it holds
 \begin{eqnarray}\label{perturb_cond}
\begin{split}
\|S_{\X_{T}}^{*}S_{\X_{T}}-S_{X_S}^{\ast} BS_{X_S}\|_{\HK\rightarrow\HK}&\leq C\log^{\frac{1}{2}}\frac{2}{\delta}(m^{-\frac{1}{2}}+n^{-\frac{1}{2}}),\\
\|S_{X_S}^{\ast} BS_{X_S}f_q-S_{X_S}^{\ast} B\overline{y}\|_{\HK}&\le C\log^{\frac{1}{2}}\frac{2}{\delta}(m^{-\frac{1}{2}}+n^{-\frac{1}{2}}).
\end{split}
\end{eqnarray}
\end{lemma}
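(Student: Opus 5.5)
The two inequalities in Lemma~\ref{lem:perturb} are concentration bounds for sums of independent Hilbert-space-valued random variables. I would prove them with a Hoeffding/Pinelis-type inequality for bounded random elements of $\HK$ (or of the Hilbert--Schmidt operators on $\HK$), followed by a triangle inequality through the population quantities. The bridge between the two domains is the identity $d\rho_T=\beta\,d\rho_S$: it makes the source-side weighted operators unbiased for the target-side population operators.

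\textbf{First inequality.} Set $T_\K:=J_T^*J_T=\int_X \K(\cdot,x)\langle \K(\cdot,x),\cdot\rangle_{\HK}\,d\rho_T(x)$. Then split
\[
\|S_{X_T}^*S_{X_T}-S_{X_S}^*BS_{X_S}\|\le \|S_{X_T}^*S_{X_T}-T_\K\|+\|T_\K-S_{X_S}^*BS_{X_S}\|.
\]
\begin{itemize}
\item The operator $S_{X_T}^*S_{X_T}=\frac1m\sum_{j}\xi_j$, where $\xi_j=\K(\cdot,x_j')\otimes\K(\cdot,x_j')$. These are i.i.d. with mean $T_\K$ and satisfy $\|\xi_j\|_{HS}\le\kappa$ by (\ref{ker}).
\item The operator $S_{X_S}^*BS_{X_S}=\frac1n\sum_i\eta_i$, where $\eta_i=\beta(x_i)\K(\cdot,x_i)\otimes\K(\cdot,x_i)$. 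Since $\mathbb E_{\rho_S}\eta_i=\int\beta\,\K\otimes\K\,d\rho_S=T_\K$ by absolute continuity, these are unbiased, and $\|\eta_i\|_{HS}\le b_0\kappa$.
\end{itemize}
Apply the Hilbert-space Hoeffding inequality (Pinelis): for i.i.d. $\xi_i$ with $\|\xi_i\|\le M$, with probability at least $1-\delta$,
\[
\Big\|\frac1N\sum\xi_i-\mathbb E\xi\Big\|\le 2M\sqrt{2\log(2/\delta)/N}.
\]
This gives the two terms $C\log^{1/2}(2/\delta)\,m^{-1/2}$ and $C\log^{1/2}(2/\delta)\,n^{-1/2}$. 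Each event is taken with $\delta/2$ and combined by a union bound; the resulting change $\log(4/\delta)$ versus $\log(2/\delta)$ is absorbed into $C$.

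\textbf{Second inequality.} Write
\[
S_{X_S}^*BS_{X_S}f_q-S_{X_S}^*B\overline y=\frac1n\sum_i \beta(x_i)\big(f_q(x_i)-y_i\big)\K(\cdot,x_i).
\]
The summands $\theta_i$ are i.i.d. under $p(x,y)$. Conditioning on $x_i$ and using (\ref{eq:fq}) with the covariate shift factorization (\ref{eq:fact}) gives $\mathbb E[y_i\mid x_i]=f_q(x_i)$, so $\mathbb E\theta_i=0$. No centering term is needed, and Pinelis applies directly once $\theta_i$ is bounded.

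\textbf{Main obstacle: boundedness in the misspecified case.} Here $f_q\notin\HK$, so the operator $S_{X_S}^*BS_{X_S}$ cannot legitimately act on $f_q$ as an element of $\HK$. I would interpret $S_{X_S}f_q$ as the vector of point values $(f_q(x_i))_i$ and check that $f_q$ is pointwise bounded. This follows from $|f_q(x)|\le\int|y|\,d\rho(y|x)\le y_0$. Consequently
\[
\|\theta_i\|_{\HK}\le b_0(y_0+y_0)\sqrt\kappa ,
\]
and Pinelis yields the bound $C\log^{1/2}(2/\delta)\,n^{-1/2}$, which is at most the claimed right-hand side. The $m^{-1/2}$ term is then harmless. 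A final union bound over all three events, with $\delta$ split into thirds, adjusts only the constant $C$.
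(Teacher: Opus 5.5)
Your proof is correct. The paper gives no argument of its own for this lemma; it is quoted from Gizewski et al. The natural comparison is with Lemma~\ref{hu:06}, which the paper lists among its auxiliary tools but never invokes.

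\textbf{First inequality.} Apply Lemma~\ref{hu:06} to $\psi(x)=\K(\cdot,x)\langle \K(\cdot,x),\cdot\rangle_{\HK}$, viewed in the Hilbert space of Hilbert--Schmidt operators on $\HK$ (its proof uses only the Hilbert structure), with $R=\kappa$. It gives the bound in one stroke, since $(1+\sqrt{2\log(2/\delta)})\sqrt{b_0^2/n+1/m}\le C\log^{1/2}(2/\delta)(n^{-1/2}+m^{-1/2})$. This compares the two empirical operators directly, in a single event, and uses $d\rho_T=\beta\,d\rho_S$ inside its proof rather than through an explicit detour via $J_T^*J_T$. Your route replaces this two-sample bound by two one-sample Pinelis bounds plus a union bound. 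That is more self-contained and makes the unbiasedness identity $\int_X\beta(x)\K(\cdot,x)\langle\K(\cdot,x),\cdot\rangle_{\HK}\,d\rho_S(x)=J_T^*J_T$ explicit. The cost is only slightly worse constants.

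\textbf{Second inequality.} Lemma~\ref{hu:06} does not apply directly here, because the summands depend on $y_i$ and have no target-side counterpart. Your observation that the summands are already centered is the right one. It even gives the sharper rate $n^{-1/2}$, so the $m^{-1/2}$ in the statement is superfluous.

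\textbf{Two choices worth keeping explicit.} First, using the Hoeffding--Pinelis inequality rather than the Bernstein-type Proposition~\ref{concentrat} (used elsewhere in the paper) is what produces the factor $\log^{1/2}(2/\delta)$; Proposition~\ref{concentrat} would only give $\log(2/\delta)$. Second, you check that the argument needs only $|f_q|\le y_0$, not $f_q\in\HK$. The paper's introduction itself classifies the cited work as well-specified, so this check is exactly what justifies importing the lemma into the misspecified setting. The paper leaves that step implicit.
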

In what follows, we will use the auxiliary estimates (see, e.g., \cite{NgPer23})
\begin{eqnarray}\label{bound:1}
\|(\lambda I + J_{T}^{*}J_{T})^{-\frac{1}{2}}(J_{T}^{*}J_{T}-S_{\X_{T}}^{*}S_{\X_{T}})\|_{\HK\rightarrow\HK}\le\mathcal{B}_{m,\lambda}\log\frac{2}{\delta},
\end{eqnarray}
\begin{eqnarray}\label{bound:2}
\|(\lambda I + J_{T}^{*}J_{T})(\lambda I+S_{\X_{T}}^{*}S_{\X_{T}})^{-1}\|_{\HK\rightarrow\HK}\le 2\left[\left(\frac{\mathcal{B}_{m,\lambda}\log\frac{2}{\delta}}{\sqrt{\lambda}}\right)^{2}+1\right],
\end{eqnarray}
\begin{eqnarray}\label{bound:3}
\|(\lambda I+S_{\X_{T}}^{*}S_{\X_{T}})(\lambda I + J_{T}^{*}J_{T})^{-1}\|_{\HK\rightarrow\HK}\le \frac{\mathcal{B}_{m,\lambda}\log\frac{2}{\delta}}{\sqrt{\lambda}}+1.
\end{eqnarray}

\begin{proposition}\label{Cordes}\cite[Proposition 4 (Cordes Inequality)]{RudiComRos15}
Let $A,B$ be two positive self-adjoint bounded operators on a separable Hilbert space $\mathcal{H}$. Then for all $0\le s\le 1$ it holds
$$
\|A^{s}B^{s}\|_{\mathcal{H}\rightarrow\mathcal{H}}\le\|AB\|_{\mathcal{H}\rightarrow\mathcal{H}}^{s}.
$$
\end{proposition}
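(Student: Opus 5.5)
The inequality is the classical Cordes inequality. I would prove it by complex interpolation, via the Hadamard three-lines theorem applied to the operator-valued function $z\mapsto A^{z}B^{z}$. The cases $s=0$ (both sides equal $1$, with the convention $A^0=I$) and $s=1$ are trivial, so fix $0<s<1$.

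\emph{Reduction to invertible operators.} First replace $A,B$ by $A_\varepsilon=A+\varepsilon I$ and $B_\varepsilon=B+\varepsilon I$ with $\varepsilon>0$. These are positive, self-adjoint and boundedly invertible, with spectra in $[\varepsilon,\|A\|+\varepsilon]$ and $[\varepsilon,\|B\|+\varepsilon]$ respectively. For such operators the powers $A_\varepsilon^{z}=\exp(z\log A_\varepsilon)$ are defined for all complex $z$ by the functional calculus. The map $z\mapsto A_\varepsilon^{z}B_\varepsilon^{z}$ is then an entire operator-valued function, bounded on the closed strip $\Sigma=\{0\le \mathrm{Re}\, z\le 1\}$, since $\|A_\varepsilon^{x+it}\|\le \max\{\varepsilon^{x},(\|A\|+\varepsilon)^{x}\}$.

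\emph{Three-lines argument.} Fix unit vectors $u,v\in\mathcal{H}$ and set $F(z)=\langle A_\varepsilon^{z}B_\varepsilon^{z}u,v\rangle_{\mathcal{H}}$. This is a bounded analytic scalar function on $\Sigma$. On the boundary line $\mathrm{Re}\, z=0$, both $A_\varepsilon^{it}$ and $B_\varepsilon^{it}$ are unitary, so $|F(it)|\le 1$. On the line $\mathrm{Re}\, z=1$ we have
\[
A_\varepsilon^{1+it}B_\varepsilon^{1+it}=A_\varepsilon^{it}\,(A_\varepsilon B_\varepsilon)\,B_\varepsilon^{it},
\]
so $|F(1+it)|\le \|A_\varepsilon B_\varepsilon\|$. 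Hadamard's three-lines theorem then gives
\[
|F(s)|\le 1^{1-s}\,\|A_\varepsilon B_\varepsilon\|^{s}.
\]
Taking the supremum over $u,v$ yields $\|A_\varepsilon^{s}B_\varepsilon^{s}\|\le\|A_\varepsilon B_\varepsilon\|^{s}$.

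\emph{Passage to the limit.} The function $t\mapsto t^{s}$ is uniformly continuous on $[0,\|A\|+1]$, and $|(t+\varepsilon)^s-t^s|\le\varepsilon^s$. Hence, by the spectral theorem, $\|A_\varepsilon^{s}-A^{s}\|\le\varepsilon^{s}\to0$, and similarly for $B$; therefore $A_\varepsilon^sB_\varepsilon^s\to A^sB^s$ in norm. Moreover
\[
A_\varepsilon B_\varepsilon=AB+\varepsilon(A+B)+\varepsilon^2 I\to AB
\]
in norm, and $x\mapsto x^{s}$ is continuous on $[0,\infty)$. Letting $\varepsilon\to0$ gives the claimed inequality, including the degenerate case $AB=0$.

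The main point requiring care is the analyticity and boundedness of $F$ on the strip, which is exactly why the regularization by $\varepsilon I$ is needed: it keeps the spectra away from $0$, so that the imaginary powers are unitary and $\log A_\varepsilon$ is bounded. Once that is in place, the three-lines estimate is routine.
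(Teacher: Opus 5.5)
Your proof is correct. The shift by $\varepsilon I$ makes $\log A_\varepsilon$ and $\log B_\varepsilon$ bounded, so $F(z)=\langle A_\varepsilon^{z}B_\varepsilon^{z}u,v\rangle$ is a bounded entire function with $|F|\le 1$ on $\mathrm{Re}\,z=0$ and $|F|\le\|A_\varepsilon B_\varepsilon\|$ on $\mathrm{Re}\,z=1$. The limit $\varepsilon\to0$ is properly justified by $|(t+\varepsilon)^s-t^s|\le\varepsilon^s$ and the norm convergence $A_\varepsilon B_\varepsilon\to AB$.

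The paper gives no proof of this statement; it imports it as a black box from Rudi, Camoriano and Rosasco, who take it from the operator-inequality literature. There it is usually derived from (and is known to be equivalent to) the L\"owner--Heinz inequality. If $\|AB\|\le 1$ and $B$ is invertible, then $BA^{2}B\le I$, i.e.\ $A^{2}\le B^{-2}$. Operator monotonicity of $t\mapsto t^{s}$ gives $A^{2s}\le B^{-2s}$, which says exactly $\|A^{s}B^{s}\|\le 1$. Homogeneity in $A$, plus a regularization like yours when $B$ is not invertible, gives the general case.

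That route is a two-line reduction once one accepts a nontrivial theorem on operator monotone functions. Your interpolation argument is instead fully self-contained, using only the continuous functional calculus and Hadamard's three-lines theorem; like the classical proofs, it never uses separability of $\mathcal{H}$.

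The paper only invokes the proposition with $s=\frac12$ and for operators such as $\lambda I+J_T^{*}J_T$ and $(\lambda I+J_TP_{\mathbf{z}^{\nu}}J_T^{*})^{-1}$, which are already boundedly invertible. For its purposes your three-lines step alone would suffice, and the $\varepsilon$-limit is needed only for the statement in full generality.
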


\begin{proposition}\label{concentrat}\cite[Proposition 2]{CapVito}
Let $(\Omega, \mathcal{F}, P)$ be a probability space and let $\xi$ be a random variable on $\Omega$ taking value in real separable Hilbert space $H.$ Assume that there are two positive constants $L$ and $\sigma$ such that 
 \begin{eqnarray}\label{concentrat1}
 \mathbb{E}\|\xi-\mathbb{E}\xi\|^{p}_{H}\le \frac{1}{2} p!\sigma^2 L^{p-2}, 
 \end{eqnarray}
for any $p\ge 2.$ Then for all $ l \in \mathbb{N} $ with probability at least $1-\delta$
it holds
 \begin{eqnarray}\label{concentrat2}
 \|\frac{1}{l} \sum_{i=1}^{l}\, \xi(\omega_i)-\mathbb{E}\xi\|_{H}\le 2\left( \frac{L}{l}+\frac{\sigma}{\sqrt{l}}\right)\log\frac{2}{\delta}. 
 \end{eqnarray}
\end{proposition}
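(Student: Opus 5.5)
This result is quoted from \cite[Proposition 2]{CapVito}, and I would prove it as a Bernstein inequality for independent, centered, Hilbert-space-valued random variables. Put $\eta_i:=\xi(\omega_i)-\mathbb{E}\xi$, $i=1,\ldots,l$, and $S_l:=\sum_{i=1}^{l}\eta_i$. These are i.i.d. and centered, and by (\ref{concentrat1}) they satisfy $\mathbb{E}\|\eta_i\|_H^p\le\frac12 p!\sigma^2L^{p-2}$ for $p\ge2$. The target is the tail bound
$$
P\big(\|S_l\|_H\ge t\big)\le 2\exp\Big(-\frac{t^2}{2(l\sigma^2+Lt)}\Big),\qquad t>0,
$$
after which (\ref{concentrat2}) follows by inverting in $t$.

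\textbf{Step 1: moment generating function.} I would use Pinelis' argument for Hilbert spaces, which are $(2,1)$-smooth. Its key fact is that the sequence $\cosh(s\|S_k\|_H)$, $k=0,1,\ldots,l$, satisfies
$$
\mathbb{E}\cosh(s\|S_k\|_H)\le \mathbb{E}\cosh(s\|S_{k-1}\|_H)\,\Big(1+\mathbb{E}\big(e^{s\|\eta_k\|_H}-1-s\|\eta_k\|_H\big)\Big).
$$
This is obtained by a second-order Taylor expansion of $\cosh(s\|\cdot\|_H)$ around $S_{k-1}$. The first-order term vanishes because $\mathbb{E}\eta_k=0$ and $\eta_k$ is independent of $S_{k-1}$. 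The remainder is controlled using the parallelogram identity in $H$. I expect this to be the main technical obstacle, and I would follow Pinelis--Sakhanenko essentially verbatim rather than re-derive it.

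\textbf{Step 2: bounding each factor.} Expanding the exponential and applying the moment condition gives, for $0<s<1/L$,
$$
\mathbb{E}\big(e^{s\|\eta_k\|}-1-s\|\eta_k\|\big)\le\sum_{p\ge2}\frac{s^p}{p!}\,\frac{p!}{2}\sigma^2L^{p-2}=\frac{s^2\sigma^2}{2(1-sL)}.
$$
Iterating Step 1 over $k=1,\ldots,l$ and using $1+x\le e^x$ yields
$$
\mathbb{E}\cosh(s\|S_l\|_H)\le \exp\Big(\frac{l s^2\sigma^2}{2(1-sL)}\Big).
$$

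\textbf{Step 3: Chernoff bound.} Since $e^{x}\le 2\cosh x$, Markov's inequality gives
$$
P(\|S_l\|_H\ge t)\le 2\exp\Big(-st+\frac{ls^2\sigma^2}{2(1-sL)}\Big).
$$
Choosing $s=t/(l\sigma^2+Lt)$ produces the Bernstein tail bound displayed above.

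\textbf{Step 4: inversion.} Write $\tau:=\log\frac{2}{\delta}$ and set the right-hand side of the tail bound equal to $\delta$. This gives the quadratic equation $t^2-2L\tau t-2l\sigma^2\tau=0$, whose positive root is
$$
t=L\tau+\sqrt{L^2\tau^2+2l\sigma^2\tau}\le 2L\tau+\sqrt{2l\sigma^2\tau}.
$$
Since $0<\delta<1$, we have $\tau\ge\log2>\frac12$, hence $\sqrt{2\tau}\le 2\tau$. Therefore $t\le 2\big(L+\sigma\sqrt{l}\big)\tau$. Dividing by $l$ shows that, with probability at least $1-\delta$,
$$
\Big\|\frac1l\sum_{i=1}^{l}\xi(\omega_i)-\mathbb{E}\xi\Big\|_H\le 2\Big(\frac{L}{l}+\frac{\sigma}{\sqrt{l}}\Big)\log\frac{2}{\delta},
$$
which is (\ref{concentrat2}).
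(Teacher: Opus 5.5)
Your proposal is correct and follows essentially the route behind the cited result. The paper gives no proof of its own and simply quotes Caponnetto and De Vito. They obtain the bound by applying the Pinelis--Sakhanenko (Yurinsky) Bernstein inequality $P(\|S_l\|_H\ge t)\le 2\exp\left(-t^2/(2(l\sigma^2+Lt))\right)$ to the centered summands, then inverting it exactly as in your Step 4 using $\log\frac{2}{\delta}\ge\log 2>\frac12$. Your Steps 1--3 just re-derive that inequality via Pinelis' $\cosh$-recursion, which holds in a Hilbert space with smoothness constant $1$, so nothing is missing.
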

In particular, (\ref{concentrat1}) holds if 
\begin{eqnarray}\label{concentrat3}
\begin{split}
\| \xi(\omega)\|_{H}&\le \frac{L}{2},\quad
\mathbb{E}\|\xi\|_{H}^{2}&\le\sigma^2. 
\end{split}
\end{eqnarray}

\begin{lemma}\label{hu:06}\cite[Lemma 4]{HuangSmola06}
Let $b_0>0$ be such that $|\beta(x)|\le b_0$ for every $x\in \X$ and let $\psi$ be a map from $\X$ into $\HK$ such that $\|\psi(x)\|_{\HK}\le R$ for all $x\in \X.$ Then with probability at least $1-\delta$ it holds
$$\|\frac{1}{m}\sum_{j=1}^{m}\psi(x_{j}^{'})-\frac{1}{n}\sum_{i=1}^{n}\beta(x_i)\psi(x_{i})\|_{\HK}\le (1+\sqrt{2\log\frac{2}{\delta}})R\sqrt{\frac{b_{0}^{2}}{n}+\frac{1}{m}}.
$$
\end{lemma}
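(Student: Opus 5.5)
The statement is \cite[Lemma 4]{HuangSmola06}. I would prove it by a bounded-difference (McDiarmid) argument on the Hilbert-space-valued empirical difference. I would set
$$
\Phi(x_1,\dots,x_n,x_1',\dots,x_m') := \Big\|\frac{1}{m}\sum_{j=1}^{m}\psi(x_j')-\frac{1}{n}\sum_{i=1}^{n}\beta(x_i)\psi(x_i)\Big\|_{\HK}
$$
and split the proof into a bound on $\mathbb{E}\Phi$ and a concentration bound for $\Phi$ around its mean. The key identity is $\mathbb{E}_{\rho_S}[\beta(x)\psi(x)]=\int\psi\,d\rho_T=\mathbb{E}_{\rho_T}[\psi(x')]$. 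It holds because $d\rho_T=\beta\,d\rho_S$, with the integrals taken in the Bochner sense in $\HK$. Consequently the two empirical means have the same expectation, and $\Phi$ is the norm of a centered sum of independent $\HK$-valued terms.

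\textbf{Expectation.} By Jensen, $\mathbb{E}\Phi\le(\mathbb{E}\Phi^2)^{1/2}$. Expand the squared norm of the centered sum. Cross terms between independent centered summands vanish, and the source and target samples are independent. This gives
$$
\mathbb{E}\Phi^2=\frac{1}{m}\operatorname{Var}_{\rho_T}\psi+\frac{1}{n}\operatorname{Var}_{\rho_S}(\beta\psi)\le\frac{R^2}{m}+\frac{b_0^2R^2}{n},
$$
using $\|\psi\|_{\HK}\le R$ and $0\le\beta\le b_0$. Hence $\mathbb{E}\Phi\le R\sqrt{b_0^2/n+1/m}$, which supplies the "$1\cdot$" part of the constant $(1+\sqrt{2\log\frac{2}{\delta}})$.

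\textbf{Concentration.} By the triangle inequality, replacing one $x_j'$ changes $\Phi$ by at most $c_j=2R/m$. Replacing one $x_i$ changes $\Phi$ by at most $c_i=2b_0R/n$. Then
$$
\sum c^2 = m\cdot\frac{4R^2}{m^2}+n\cdot\frac{4b_0^2R^2}{n^2}=4R^2\Big(\frac{1}{m}+\frac{b_0^2}{n}\Big).
$$
McDiarmid's inequality gives $P(\Phi-\mathbb{E}\Phi\ge\varepsilon)\le\exp(-2\varepsilon^2/\sum c^2)$. Setting the right-hand side equal to $\delta$ yields, with probability at least $1-\delta$,
$$
\Phi-\mathbb{E}\Phi\le R\sqrt{2\log\tfrac{1}{\delta}}\,\sqrt{\tfrac{b_0^2}{n}+\tfrac1m}\le R\sqrt{2\log\tfrac{2}{\delta}}\,\sqrt{\tfrac{b_0^2}{n}+\tfrac1m}.
$$
The second inequality only weakens the bound, to match the stated $\log\frac{2}{\delta}$. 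Adding the expectation bound gives the claimed estimate.

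\textbf{Main obstacle.} The only delicate point is justifying the vector-valued expectation identity and the vanishing of the cross terms. This needs Bochner integrability of $\psi$ and $\beta\psi$, which follows from the boundedness of both and the separability of $\HK$; the latter holds because $\K$ is continuous on $X$. If one preferred to stay within the tools already quoted in the paper, Proposition \ref{concentrat} could be used in place of McDiarmid, applied separately to the two samples with $L\sim R$ (respectively $b_0R$) and $\sigma\sim R$ (respectively $b_0R$). That route gives a bound of the same order but with a worse constant, so I would use McDiarmid to obtain the exact constant stated.
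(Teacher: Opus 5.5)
Your proof is correct. The paper gives no proof of this lemma: it is imported verbatim from \cite[Lemma 4]{HuangSmola06}, and your argument is the standard one for it, namely Jensen's inequality applied to $\mathbb{E}\Phi$ using $\mathbb{E}_{\rho_S}[\beta\psi]=\mathbb{E}_{\rho_T}[\psi]$, combined with McDiarmid's inequality with bounded differences $2R/m$ and $2b_0R/n$. As you observe, this route actually gives $\log\frac{1}{\delta}$ rather than $\log\frac{2}{\delta}$. It also relies on $d\rho_T=\beta\,d\rho_S$ and on independence of the source and target samples, both of which the paper's statement leaves implicit.
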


\begin{lemma}\label{my:lem2}
For any $0<\delta<1$, with probability at least $1-\delta$, it holds
\begin{eqnarray}\label{eq_my2}
\begin{split}
\|\left(\lambda I+J^{\ast}_{T}J_{T}\right)^{-\frac{1}{2}}\left(S_{X_S}^{\ast} BS_{X_S}-S_{X_{T}}^{\ast}S_{X_{T}}\right)\|_{\HK\rightarrow\HK}\le C\sqrt{\mathcal{N}_{\infty}(\lambda)}\left(\frac{1}{\sqrt{n}}+\frac{1}{\sqrt{m}}\right)\log\frac{2}{\delta}.
\end{split}
\end{eqnarray}
\end{lemma}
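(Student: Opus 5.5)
Write the operator difference as a difference of two empirical means of rank-one operators and bound each against the common population operator. For $x\in X$ let $\K_x=\K(\cdot,x)$ and $A_x:=\langle\cdot,\K_x\rangle_{\HK}\K_x$. Then $S_{X_T}^{*}S_{X_T}=\frac1m\sum_{j}A_{x'_j}$ and $S_{X_S}^{*}BS_{X_S}=\frac1n\sum_i\beta(x_i)A_{x_i}$. Both have the same expectation, since $\mathbb{E}_{\rho_S}[\beta(x)A_x]=\mathbb{E}_{\rho_T}[A_x]=J_T^{*}J_T$ by $d\rho_T=\beta\,d\rho_S$. By the triangle inequality,
\begin{equation*}
\|(\lambda I+J_T^*J_T)^{-\frac12}(S_{X_S}^*BS_{X_S}-S_{X_T}^*S_{X_T})\|\le \|(\lambda I+J_T^*J_T)^{-\frac12}(S_{X_S}^*BS_{X_S}-J_T^*J_T)\|+\|(\lambda I+J_T^*J_T)^{-\frac12}(J_T^*J_T-S_{X_T}^*S_{X_T})\|,
\end{equation*}
and each term is bounded with probability $1-\delta/2$ before taking a union bound. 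The change $\log\frac2\delta\to\log\frac4\delta$ is absorbed into $C$.

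For the target term, apply Proposition~\ref{concentrat} in the Hilbert space $H$ of Hilbert--Schmidt operators on $\HK$, whose norm dominates the operator norm. Take $\xi(x')=(\lambda I+J_T^*J_T)^{-1/2}A_{x'}$ with $x'\sim\rho_T$. Since $\|(\lambda I+J_T^*J_T)^{-1/2}A_x\|_{HS}=\|(\lambda I+J_T^*J_T)^{-1/2}\K_x\|_{\HK}\|\K_x\|_{\HK}$, the definition of $\mathcal{N}_\infty$ and (\ref{ker}) give $\|\xi\|_H\le\sqrt{\kappa\,\mathcal{N}_\infty(\lambda)}$. Hence we may take $L=2\sqrt{\kappa\mathcal{N}_\infty(\lambda)}$ and $\sigma^2=\kappa\mathcal{N}_\infty(\lambda)$ in (\ref{concentrat3}), and (\ref{concentrat2}) yields the bound $C\sqrt{\mathcal{N}_\infty(\lambda)}(m^{-1}+m^{-1/2})\log\frac2\delta\le C\sqrt{\mathcal{N}_\infty(\lambda)}\,m^{-1/2}\log\frac2\delta$. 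Alternatively, one could quote (\ref{bound:1}) and bound $\mathcal B_{m,\lambda}$ using $\mathcal N(\lambda)\le\mathcal N_\infty(\lambda)$, but that bound carries a $\kappa/\sqrt{m\lambda}$ term, so the direct route is cleaner.

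For the source term, use the same variable multiplied by $\beta$: $\xi(x)=\beta(x)(\lambda I+J_T^*J_T)^{-1/2}A_x$ with $x\sim\rho_S$. Its mean is $(\lambda I+J_T^*J_T)^{-1/2}J_T^*J_T$, and $0\le\beta\le b_0$ gives $\|\xi\|_H\le b_0\sqrt{\kappa\mathcal N_\infty(\lambda)}$. Proposition~\ref{concentrat} again gives $C\sqrt{\mathcal N_\infty(\lambda)}\,n^{-1/2}\log\frac2\delta$, and adding the two bounds proves (\ref{eq_my2}).

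The main obstacle is the bookkeeping in the reduction step, which is otherwise routine. One must check that $\mathcal{N}_x(\lambda)$ is defined through $J_T$ and not $J_S$, so that the common preconditioner $(\lambda I+J_T^*J_T)^{-1/2}$ controls both sums. The identity for the source mean uses the Radon--Nikodym relation. The factor $\mathcal N_\infty$, rather than $\mathcal N$, is needed for the uniform bound $L$, and the lower-order terms $1/n$ and $1/m$ must be absorbed, which requires $n,m\ge1$.
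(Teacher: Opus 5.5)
Your proof is correct and has the same skeleton as the paper's. Both split through the common mean $J_T^{*}J_T$ using $d\rho_T=\beta\,d\rho_S$, then apply Proposition~\ref{concentrat} to each empirical average, with the uniform bound supplied by $\mathcal{N}_\infty(\lambda)$.

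The difference is the space in which you concentrate. The paper fixes $f\in\HK$ and uses the $\HK$-valued variables $(\lambda I+J_T^{*}J_T)^{-1/2}\K(\cdot,x)f(x)$ and $(\lambda I+J_T^{*}J_T)^{-1/2}\K(\cdot,x)\beta(x)f(x)$. Strictly, that only bounds $\|(\lambda I+J_T^{*}J_T)^{-1/2}(S_{X_S}^{*}BS_{X_S}-S_{X_T}^{*}S_{X_T})f\|_{\HK}$ on an event that depends on $f$. Taking the supremum over $\|f\|_{\HK}\le 1$ is therefore not justified by that argument alone.

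You instead lift to the rank-one operators $A_x$ in the Hilbert--Schmidt space, where $\|(\lambda I+J_T^{*}J_T)^{-1/2}A_x\|_{HS}=\sqrt{\mathcal{N}_x(\lambda)\K(x,x)}$. This controls the operator norm on a single event, at no cost in the rate. You also make the union bound over the two samples explicit, whereas the paper simply adds two bounds that each hold with probability $1-\delta$.

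So your route is the rigorous version of the paper's argument, and nothing is lost relative to it.
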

\begin{lemma}\label{my:lem1}
For any $0<\delta<1$, with probability at least $1-\delta$, it holds
 \begin{eqnarray}\label{eq_my1}
 \begin{split}
\|(\lambda I&+J_{T}^{*}J_{T})^{\frac{1}{2}}(\lambda I + S_{\X_{S}}^{*}BS_{\X_{S}})^{-\frac{1}{2}}\|_{\HK\rightarrow\HK}\\
&\le C \left(\left(\frac{\mathcal{B}_{m,\lambda}\log\frac{2}{\delta}}{\sqrt{\lambda}}+\sqrt{\frac{\mathcal{N}_{\infty}(\lambda)}{\lambda}}\left(\frac{1}{\sqrt{n}}+\frac{1}{\sqrt{m}}\right)\log\frac{2}{\delta}+\frac{1}{2}\right)^{2}+\frac{3}{4}\right)^{\frac{1}{2}}.
\end{split}
\end{eqnarray}
\end{lemma}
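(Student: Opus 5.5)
We bound the operator norm in Lemma~\ref{my:lem1} by reducing it to a bound on $\|(\lambda I+J_T^*J_T)^{1/2}(\lambda I+S_{X_S}^*BS_{X_S})^{-1}(\lambda I+J_T^*J_T)^{1/2}\|$. Write $A=\lambda I+J_T^*J_T$ and $\hat A=\lambda I+S_{X_S}^*BS_{X_S}$. Since $\|A^{1/2}\hat A^{-1/2}\|^2=\|A^{1/2}\hat A^{-1}A^{1/2}\|$, it suffices to bound the latter. The standard route is the identity
$$
A^{-1/2}\hat A A^{-1/2}=I-A^{-1/2}(A-\hat A)A^{-1/2}.
$$
If $\|A^{-1/2}(A-\hat A)A^{-1/2}\|\le\theta<1$, a Neumann series argument gives $\|A^{1/2}\hat A^{-1}A^{1/2}\|\le(1-\theta)^{-1}$. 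That route, however, forces a smallness condition. The shape of the right-hand side, $(\cdot)^2+\tfrac34$ under a square root, instead suggests the quadratic approach used in \cite{NgPer23} for (\ref{bound:2}). I would follow that route.

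First I would split the perturbation through the target empirical operator:
$$
A-\hat A=(J_T^*J_T-S_{X_T}^*S_{X_T})+(S_{X_T}^*S_{X_T}-S_{X_S}^*BS_{X_S}).
$$
The first piece is controlled by (\ref{bound:1}), which gives $\|A^{-1/2}(J_T^*J_T-S_{X_T}^*S_{X_T})\|\le\mathcal{B}_{m,\lambda}\log\frac2\delta$. The second is controlled by Lemma~\ref{my:lem2}, which gives $C\sqrt{\mathcal N_\infty(\lambda)}(n^{-1/2}+m^{-1/2})\log\frac2\delta$. Combining them, and using $\|\hat A^{-1/2}\|\le\lambda^{-1/2}$,
$$
\|A^{-1/2}(A-\hat A)\hat A^{-1/2}\|\le \frac{\mathcal{B}_{m,\lambda}\log\frac2\delta}{\sqrt\lambda}+C\sqrt{\frac{\mathcal N_\infty(\lambda)}{\lambda}}\Big(\frac1{\sqrt n}+\frac1{\sqrt m}\Big)\log\frac2\delta=:\Delta.
$$
A union bound over the two events, with $\delta$ replaced by $\delta/2$, costs only a constant absorbed into $C$.

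Next, set $X=A^{1/2}\hat A^{-1/2}$. Writing $A=\hat A+(A-\hat A)$ gives
$$
X^*X=\hat A^{-1/2}A\hat A^{-1/2}=I+\hat A^{-1/2}(A-\hat A)\hat A^{-1/2},
$$
and the last term factors as $\hat A^{-1/2}A^{1/2}\cdot A^{-1/2}(A-\hat A)\hat A^{-1/2}$. Hence
$$
\|X\|^2\le 1+\|X\|\,\Delta .
$$
Solving this quadratic inequality for $\|X\|$ gives $\|X\|\le \frac{\Delta}{2}+\sqrt{\frac{\Delta^2}{4}+1}$. Equivalently, $\|X\|^2\le 2\big((\Delta+\tfrac12)^2+\tfrac34\big)$ after elementary manipulation, since $(\tfrac{\Delta}{2}+\sqrt{\Delta^2/4+1})^2\le \Delta^2+\Delta+2$. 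This is exactly the claimed form with a constant $C$.

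The main technical point is the adjoint bookkeeping: $X^*X$ must be expressed so that only $A^{-1/2}(\cdot)$-type factors appear on the left, matching (\ref{bound:1}) and Lemma~\ref{my:lem2}. The difference $S_{X_T}^*S_{X_T}-S_{X_S}^*BS_{X_S}$ is self-adjoint, so the one-sided bounds suffice by taking adjoints. Everything else reduces to the quadratic inequality.
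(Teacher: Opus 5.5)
Your proof is correct, but it takes a different route from the paper's.

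\textbf{The paper's route.} Write $A=\lambda I+J_T^*J_T$ and $\hat A=\lambda I+S_{X_S}^*BS_{X_S}$. The paper uses the algebraic identity $A\hat A^{-1}=(A-\hat A)A^{-1}(A-\hat A)\hat A^{-1}+(A-\hat A)A^{-1}+I$. It splits $A-\hat A$ through $S_{X_T}^*S_{X_T}$ exactly as you do, and bounds the non-symmetric product by $\|A\hat A^{-1}\|\le\Delta^2+\Delta+1=(\Delta+\tfrac12)^2+\tfrac34$. It then passes to $\|A^{1/2}\hat A^{-1/2}\|\le\|A\hat A^{-1}\|^{1/2}$ via the Cordes inequality (Proposition~\ref{Cordes} with $s=\tfrac12$).

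\textbf{Your route.} You work directly with $X=A^{1/2}\hat A^{-1/2}$, which is a priori bounded, and use the self-bounding inequality $\|X\|^2=\|X^*X\|\le 1+\Delta\|X\|$.
\begin{itemize}
\item This needs neither the three-term identity nor Cordes.
\item It involves only the one-sided quantity $\|A^{-1/2}(A-\hat A)\|$, so your closing remark about adjoints is not actually needed.
\item It yields the slightly sharper bound $\|X\|\le\tfrac{\Delta}{2}+\sqrt{\tfrac{\Delta^2}{4}+1}$.
\end{itemize}
Since $\sqrt{\Delta^2/4+1}\le\Delta/2+1$, the square of this bound is at most $\Delta^2+\Delta+1$. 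So you recover the paper's expression exactly, without your extra factor $2$.

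\textbf{Trade-off.} In exchange, the paper's route gives a bound on the unsquare-rooted operator $\|A\hat A^{-1}\|$ itself, the analogue of (\ref{bound:2}), which can be of independent use. Your route gives only the square-root version, but more economically.

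Your union bound and your absorption of the constant in front of the Lemma~\ref{my:lem2} term are both fine. For the latter, note $(C\Delta'+\tfrac12)^2+\tfrac34\le C^2\bigl((\Delta'+\tfrac12)^2+\tfrac34\bigr)$ for $C\ge1$.
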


\begin{lemma}\label{my:lem3}
For any $0<\delta<1$, with probability at least $1-\delta$, it holds
\begin{eqnarray}\label{eq_my3}
\begin{split}
\|\left(\lambda I+J^{\ast}_{T}J_{T}\right)^{-\frac{1}{2}}
\left [J_{T}^{\ast}f_q-S_{X_S}^{\ast} B\overline{y}\right]\|_{\HK}\le C \sqrt{\mathcal{N}_{\infty}(\lambda)}\frac{1}{\sqrt{n}}\log\frac{1}{\delta}.
\end{split}
\end{eqnarray}
\end{lemma}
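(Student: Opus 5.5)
The plan is to write the quantity in Lemma \ref{my:lem3} as the deviation of an empirical mean from its expectation and then apply the concentration inequality of Proposition \ref{concentrat}. By the covariate shift factorization (\ref{eq:fact}) and $d\rho_T=\beta\,d\rho_S$ we have
$$
J_T^{*}f_q=\int_X f_q(x)\K(\cdot,x)\,d\rho_T(x)=\int_{X\times Y} y\,\beta(x)\K(\cdot,x)\,dp(x,y).
$$
On the other hand, $S_{X_S}^{*}B\overline{y}=\frac1n\sum_{i=1}^n \beta(x_i)y_i\K(\cdot,x_i)$. We therefore introduce the $\HK$-valued random variable
$$
\xi(x,y):=(\lambda I+J_T^{*}J_T)^{-\frac12}\,\beta(x)\,y\,\K(\cdot,x),
$$
with $(x,y)\sim p$. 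Then $\mathbb{E}\xi=(\lambda I+J_T^{*}J_T)^{-1/2}J_T^{*}f_q$, and the left-hand side of (\ref{eq_my3}) is exactly $\|\frac1n\sum_i\xi(x_i,y_i)-\mathbb{E}\xi\|_{\HK}$. This is an i.i.d. mean over the labeled source sample alone, so the target sample plays no role here. That explains why only $n^{-1/2}$ appears, in contrast with Lemma \ref{my:lem2}.

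Next we verify the sufficient conditions (\ref{concentrat3}). For the uniform bound, $|\beta(x)|\le b_0$, $|y|\le y_0$ and the definition (\ref{N_inf}) give $\|\xi\|_{\HK}\le b_0y_0\|(\lambda I+J_T^{*}J_T)^{-1/2}\K(\cdot,x)\|_{\HK}\le b_0y_0\sqrt{\mathcal{N}_\infty(\lambda)}$. So we may take $L=2b_0y_0\sqrt{\mathcal{N}_\infty(\lambda)}$. For the variance, $\mathbb{E}\|\xi\|^2\le b_0y_0^2\int \beta(x)\mathcal{N}_x(\lambda)\,d\rho_S(x)=b_0y_0^2\,\mathcal{N}(\lambda)\le b_0y_0^2\,\mathcal{N}_\infty(\lambda)$. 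Proposition \ref{concentrat} then yields, with probability at least $1-\delta$,
$$
\Big\|\tfrac1n\sum_i\xi_i-\mathbb{E}\xi\Big\|\le 2\Big(\frac{2b_0y_0\sqrt{\mathcal{N}_\infty(\lambda)}}{n}+\frac{\sqrt{b_0}\,y_0\sqrt{\mathcal{N}_\infty(\lambda)}}{\sqrt n}\Big)\log\frac2\delta .
$$

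The last step is to absorb the $1/n$ term into the $1/\sqrt n$ term, using $1/n\le 1/\sqrt n$ for $n\ge1$. We also replace $\log\frac2\delta$ by $C\log\frac1\delta$. This replacement needs $\delta$ bounded away from $1$, say $\delta\le1/2$, where $\log\frac2\delta\le 2\log\frac1\delta$. Otherwise we simply keep the constant-adjusted form, consistent with the paper's convention on $C$. All constants depend only on $b_0$ and $y_0$.

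I expect the main obstacle to be the first step: correctly identifying $J_T^{*}f_q$ as the expectation of $\beta(x)y\K(\cdot,x)$ under $p$. This needs the change of measure together with $\int_Y y\,d\rho(y|x)=f_q(x)$, with the same conditional law in both domains. A further technical point is that $f_q\notin\HK$. This causes no difficulty, because $J_T^{*}$ is defined on all of $L_{2,\rho_T}$ and no source condition is needed for this lemma.
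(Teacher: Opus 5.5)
Your proposal is correct and follows the paper's proof essentially step for step. It uses the same random variable $\xi(x,y)=(\lambda I+J_T^{*}J_T)^{-1/2}\K(\cdot,x)\beta(x)y$, the same identification of $\mathbb{E}\xi$ via $d\rho_T=\beta\,d\rho_S$ and the invariance of $\rho(y|x)$, and the same bounds $\|\xi\|\le b_0y_0\sqrt{\mathcal{N}_\infty(\lambda)}$ and $\mathbb{E}\|\xi\|^2\le b_0y_0^2\mathcal{N}_\infty(\lambda)$ (your detour through $\mathcal{N}(\lambda)$ is the same computation, written more transparently), then Proposition \ref{concentrat} and absorption of the $1/n$ term. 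The wrinkle you flag is also present, unacknowledged, in the paper's own proof: the concentration bound gives $\log\frac{2}{\delta}$, which becomes $C\log\frac{1}{\delta}$ only for $\delta$ bounded away from $1$, so your restriction (e.g.\ $\delta\le 1/2$) is the honest reading of the statement rather than a gap in your argument.
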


The proofs of Lemmas  \ref{my:lem2}, \ref{my:lem1},  and \ref{my:lem3}  are given in Appendix \ref{ap:a}.

\section{Accuracy estimate of the Nystr\"om subsampling }
In this section, we estimate the approximation accuracy of $f_q$ by means of the Nystr\"om subsampling (\ref{eq2}).

For convenience, we introduce the following quantity:
\begin{eqnarray}\nonumber
\begin{split}
\mathcal{D}^{\lambda}_{m,n}&=\mathcal{D}(\lambda, m,n):=\left(\frac{\mathcal{B}_{m,\lambda}\log\frac{2}{\delta}}{\sqrt{\lambda}}+\sqrt{\frac{\mathcal{N}_{\infty}(\lambda)}{\lambda}}\left(\frac{1}{\sqrt{n}}+\frac{1}{\sqrt{m}}\right)\log\frac{2}{\delta}+\frac{1}{2}\right)^{2}+\frac{3}{4}.
\end{split}
\end{eqnarray}

\begin{lemma}\label{my:lem4}
 If $f_q$ satisfies Assumption \ref{source_f} then for any $|\Znu|$  satisfies (\ref{sample_choice}) it holds
\begin{equation}\label{sig_22}
\| \left(\lambda I+P_{\z^{\nu}}J_{T}^{\ast}J_{T}P_{\z^{\nu}}\right)^{-1}P_{\z^{\nu}}J_{T}^{\ast}f_q\|_{\HK} \le \frac{C}{\sqrt{\lambda}}\varphi(\lambda).
\end{equation}
\end{lemma}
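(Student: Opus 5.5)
The plan is to bound the norm by factoring the projected regularized inverse through $(\lambda I+J_T^*J_T)^{1/2}$. We use the source condition $f_q=\varphi(J_TJ_T^*)\mu_q$ from Assumption \ref{source_f}, and then (\ref{norm_forL2}) produces the factor $\varphi(\lambda)/\sqrt{\lambda}$.

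Write $A=J_T^*J_T$ and $A_\nu=\PZnu A\PZnu$, and set $R=(\lambda I+A_\nu)^{-1}\PZnu$. The polar identity gives $J_T^*\varphi(J_TJ_T^*)=\varphi(A)J_T^*$, and we also have $J_T^*=(\lambda I+A)^{1/2}\,[(\lambda I+A)^{-1/2}J_T^*]$. From these we obtain
\begin{equation*}
\|R J_T^* f_q\|_{\HK}\le \|R(\lambda I+A)^{1/2}\|_{\HK\to\HK}\,\|(\lambda I+A)^{-1/2}J_T^*\varphi(J_TJ_T^*)\|_{L_{2,\rho_T}\to\HK}\,\overline{\varkappa}.
\end{equation*}
To handle the second factor, rewrite it as $J_T^*(\lambda I+J_TJ_T^*)^{-1/2}\varphi(J_TJ_T^*)$. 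This is a function of $J_TJ_T^*$, namely $t\mapsto \sqrt t\,\varphi(t)/\sqrt{t+\lambda}$. This function is at most $\varphi(t)$ for $t\le\lambda$. For $t\ge\lambda$, the monotonicity of $\sqrt t/\varphi(t)$ in the low-smoothness case gives $\varphi(t)\le\varphi(\lambda)\sqrt{t/\lambda}$, so the function is at most $C\varphi(\lambda)$. Hence the second factor is $\le C\varphi(\lambda)$, not $\varphi(\lambda)/\sqrt\lambda$. This is better than (\ref{norm_forL2}), which would lose a factor.

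It remains to show $\|R(\lambda I+A)^{1/2}\|\le C/\sqrt\lambda$. Split $R(\lambda I+A)^{1/2}=(\lambda I+A_\nu)^{-1/2}\cdot(\lambda I+A_\nu)^{-1/2}\PZnu(\lambda I+A)^{1/2}$.

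The first factor has norm $\le \lambda^{-1/2}$.

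For the second factor, call it $M$. We have $\PZnu(\lambda I+A)\PZnu=\lambda\PZnu+A_\nu$, which is dominated by $\lambda I+A_\nu$. Therefore $MM^*=(\lambda I+A_\nu)^{-1/2}\PZnu(\lambda I+A)\PZnu(\lambda I+A_\nu)^{-1/2}\le I$, so $\|M\|\le1$. This is the population analogue of (\ref{eq:prob2}), with $A$ in place of $S_{\X_S}^*BS_{\X_S}$, and it holds for every projection.

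Combining the three bounds gives (\ref{sig_22}) with $C=\overline{\varkappa}\cdot\max(1,\cdot)$.

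This route apparently does not even need (\ref{sample_choice}). If the authors instead use (\ref{eq:prob10}) to control the commutator between $\PZnu$ and $(\lambda I+A)^{1/2}$, that is the place where the subsample condition would enter. The main point needing care is the domination step $MM^*\le I$, which requires that $\PZnu$ commutes with $A_\nu$. It does, since $\PZnu A_\nu=A_\nu=A_\nu\PZnu$.
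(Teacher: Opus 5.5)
Your argument breaks at the bound for your second factor. In your notation, $A=J_T^*J_T$, $A_\nu=\PZnu A\PZnu$ and $R=(\lambda I+A_\nu)^{-1}\PZnu$.

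Since $(\lambda I+A)^{-1/2}J_T^*\varphi(J_TJ_T^*)=J_T^*(\lambda I+J_TJ_T^*)^{-1/2}\varphi(J_TJ_T^*)$, its norm is $\sup_t\sqrt{t}\,\varphi(t)/\sqrt{t+\lambda}$ over the spectrum of $J_TJ_T^*$. For $t\ge\lambda$ this function lies between $\varphi(t)/\sqrt{2}$ and $\varphi(t)$. So once $\lambda\le\|J_T^*J_T\|$, the norm is at least $\varphi(\|J_T^*J_T\|)/\sqrt{2}$, which does not depend on $\lambda$; it is not $O(\varphi(\lambda))$. Your own inequality $\varphi(t)\le\varphi(\lambda)\sqrt{t/\lambda}$ only gives $\varphi(\lambda)\sqrt{l/\lambda}$ on $[\lambda,l]$. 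The comparison with (\ref{norm_forL2}) does not apply: that operator has no $J_T^*$ in front, and the extra $J_T^*$ contributes exactly the factor $\sqrt{t}$ that cancels the decay. Your bounds $\|R(\lambda I+A)^{1/2}\|\le\lambda^{-1/2}$ and $MM^*\le I$ are correct, but the chain as a whole yields only $C/\sqrt{\lambda}$.

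Your remark that (\ref{sample_choice}) is not needed should have been a warning. The argument uses nothing about $\PZnu$ beyond its being an orthogonal projection, and for general projections (\ref{sig_22}) is false. Concretely:
take $\mu_q=\overline{\varkappa}u_1$ with $J_TJ_T^*u_1=t_1u_1$ and $\|u_1\|=1$. Then $J_T^*f_q=\overline{\varkappa}\varphi(t_1)\sqrt{t_1}\,v_1$, where $v_1=J_T^*u_1/\sqrt{t_1}$. Let $\PZnu$ project onto the span of $w=\cos\theta\,v_1+\sin\theta\,v_k$, where $J_T^*J_Tv_k=t_kv_k$, $\|v_k\|=1$, $t_k\le\lambda$ and $\cos^2\theta=\lambda/t_1$. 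The left-hand side of (\ref{sig_22}) then equals
\begin{equation*}
\frac{\overline{\varkappa}\varphi(t_1)\sqrt{t_1}\cos\theta}{\lambda+t_1\cos^2\theta+t_k\sin^2\theta}\ge\frac{\overline{\varkappa}\varphi(t_1)}{3\sqrt{\lambda}},
\end{equation*}
while $\varphi(t_1)/\varphi(\lambda)\to\infty$ as $\lambda\to0$. A small perturbation of $\mu_q$ moves $f_q$ out of $\HK$ without changing this order. For this projection $\|J_T(I-\PZnu)\|^2\ge(t_1-\lambda)^2/t_1$, so it is precisely (\ref{eq:prob1}), i.e.\ the subsample condition, that rules it out.

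The paper keeps the full resolvent acting on the source element. It rewrites $(\lambda I+\PZnu J_T^*J_T\PZnu)^{-1}\PZnu J_T^*=\PZnu J_T^*(\lambda I+J_T\PZnu J_T^*)^{-1}$ via (\ref{opp_sim}). It then splits $\varphi(J_TJ_T^*)=\varphi(J_T\PZnu J_T^*)+[\varphi(J_TJ_T^*)-\varphi(J_T\PZnu J_T^*)]$:
the first piece is controlled by $\sup_t\sqrt{t}\,\varphi(t)/(t+\lambda)\le C\varphi(\lambda)/\sqrt{\lambda}$, which is (\ref{qualific_root}) divided by $\lambda$.
The second piece is at most $\frac{C}{\sqrt{\lambda}}\varphi(\|J_T(I-\PZnu)\|^2)$ by (\ref{mon_op}). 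Then (\ref{eq:prob1}) and $\varphi(3\lambda)\le3\varphi(\lambda)$ finish the proof.

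You can repair your factorization in the same spirit by factoring through $\lambda I+A$ instead of its square root. By (\ref{lem:polar}) and (\ref{norm_forL2}), $\|(\lambda I+A)^{-1}J_T^*\varphi(J_TJ_T^*)\|\le\varphi(\lambda)/\sqrt{\lambda}$. Meanwhile
\begin{equation*}
R(\lambda I+A)=\PZnu+(\lambda I+A_\nu)^{-1}\PZnu J_T^*\,J_T(I-\PZnu)
\end{equation*}
has norm at most $1+\frac{1}{2\sqrt{\lambda}}\sqrt{3\lambda}$ by (\ref{eq:prob1}). That is where the subsample condition genuinely enters, and this variant even avoids the operator-monotonicity step.
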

The proof of Lemma \ref{my:lem4} is given in Appendix \ref{ap:a}.
\begin{proposition}\label{resultHK}
Assume that in the plain Nystr\"om subsampling the values $|\Znu|$ and $\lambda$ satisfy (\ref{sample_choice}), (\ref{par_choice}), correspondingly. If $f_q$ satisfies Assumption \ref{source_f}, then with probability at least $1-\delta$ it holds 

 \begin{eqnarray}\label{err_gen}\nonumber
\begin{split}
&\|f_q-J_Tf_{\z,\Znu}^{\lambda_{m,n}}\|_{L_{2},\rho_{T}} \le C\varphi(\lambda)+C\mathcal{D}^{\lambda}_{m,n}\frac{\mathcal{B}_{m,\lambda}}{\sqrt{\lambda}}\varphi(\lambda)\log\frac{2}{\delta}\\
&+ C\mathcal{D}^{\lambda}_{m,n}\sqrt{\frac{\mathcal{N}_{\infty}(\lambda)}{\lambda}}\left(\frac{1}{\sqrt{n}}+\frac{1}{\sqrt{m}}\right)\varphi(\lambda)\log\frac{2}{\delta}
+\mathcal{D}^{\lambda}_{m,n}
\sqrt{\mathcal{N}_{\infty}(\lambda)}\frac{1}{\sqrt{n}}\log\frac{2}{\delta}.
\end{split}
\end{eqnarray}
where $f_{\z,\z^{\nu}}^{\lambda_{m,n}}$ is of the form (\ref{eq2}). 
\end{proposition}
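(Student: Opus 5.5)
We split the error with the auxiliary element $f^{\lambda}_{\Znu}:=(\lambda I+\PZnu J_T^{*}J_T\PZnu)^{-1}\PZnu J_T^{*}f_q$, the Nystr\"om-regularized solution built from exact (population) data. By the triangle inequality,
\[
\|f_q-J_Tf_{\z,\Znu}^{\lambda_{m,n}}\|_{L_{2,\rho_T}}\le \|f_q-J_Tf^{\lambda}_{\Znu}\|_{L_{2,\rho_T}}+\|J_T(f^{\lambda}_{\Znu}-f_{\z,\Znu}^{\lambda_{m,n}})\|_{L_{2,\rho_T}} .
\]
The first term should give $C\varphi(\lambda)$ and the second should produce the three $\mathcal{D}^{\lambda}_{m,n}$-terms.

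\textbf{Approximation term.} Write $f_q-J_Tf^{\lambda}_{\Znu}$ and split it as $(I-J_T\PZnu g_\lambda(\PZnu J_T^*J_T\PZnu)\PZnu J_T^*)f_q$. The spectral residual of the operator $J_T\PZnu$ then acts on $f_q=\varphi(J_TJ_T^*)\mu_q$. Insert $\varphi(J_T\PZnu J_T^*)$:
\begin{itemize}
\item The residual applied to $\varphi(J_T\PZnu\PZnu J_T^*)\mu_q$ is bounded by (\ref{cond_qualific}) with $\gamma_*\varphi(\lambda)\overline{\varkappa}$.
\item The difference $\varphi(J_TJ_T^*)-\varphi(J_T\PZnu J_T^*)$ is bounded by operator monotonicity (\ref{mon_op}) by $c_1\varphi(\|J_T(I-\PZnu)J_T^*\|)\le c_1\varphi(\|J_T(I-\PZnu)\|^2)\le c_1\varphi(3\lambda)\le C\varphi(\lambda)$. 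This uses (\ref{eq:prob1}) and concavity of $\varphi^2$; the residual itself has norm $\le\gamma_0$.
\end{itemize}
Together these give $C\varphi(\lambda)$.

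\textbf{Sample term.} Put $A=S_{X_S}^*BS_{X_S}$, $A_\nu=\PZnu A\PZnu$ and $T_\nu=\PZnu J_T^*J_T\PZnu$. Using the resolvent identity,
\[
f_{\z,\Znu}^{\lambda_{m,n}}-f^{\lambda}_{\Znu}=(\lambda+A_\nu)^{-1}\PZnu\big[(S_{X_S}^*B\overline y-J_T^*f_q)+(J_T^*J_T-A)\PZnu f^{\lambda}_{\Znu}\big],
\]
where $\PZnu f^\lambda_{\Znu}=f^\lambda_{\Znu}$ and $T_\nu f^\lambda_{\Znu}=\PZnu J_T^*J_T f^\lambda_{\Znu}$. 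Next insert $(\lambda+J_T^*J_T)^{\pm1/2}$ and $(\lambda+A)^{\pm1/2}$. With this factorization,
\[
\|J_T(\cdot)\|\le \|J_T(\lambda+J_T^*J_T)^{-1/2}\|\,\|(\lambda+J_T^*J_T)^{1/2}(\lambda+A)^{-1/2}\|^2\,\|(\lambda+A)^{1/2}(\lambda+A_\nu)^{-1}\PZnu(\lambda+A)^{1/2}\|\,\|(\lambda+J_T^*J_T)^{-1/2}[\cdots]\|.
\]
\begin{itemize}
\item The first factor is $\le1$ by (\ref{lem:polar}).
\item The middle squared factor is $\le C\mathcal{D}^\lambda_{m,n}$ by Lemma \ref{my:lem1}.
\item The Nystr\"om factor is $\le1$ by (\ref{eq:prob20}).
\end{itemize}
The bracket splits into two pieces:
\begin{itemize}
\item $\|(\lambda+J_T^*J_T)^{-1/2}(J_T^*f_q-S_{X_S}^*B\overline y)\|$ is handled by Lemma \ref{my:lem3} and gives the last term.
\item $\|(\lambda+J_T^*J_T)^{-1/2}(J_T^*J_T-A)\|\,\|f^\lambda_{\Znu}\|_{\HK}$ is handled by splitting $J_T^*J_T-A=(J_T^*J_T-S_{X_T}^*S_{X_T})+(S_{X_T}^*S_{X_T}-A)$. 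Apply (\ref{bound:1}) and Lemma \ref{my:lem2} to the two parts, which gives $\big(\mathcal{B}_{m,\lambda}+\sqrt{\mathcal{N}_\infty(\lambda)}(n^{-1/2}+m^{-1/2})\big)\log\frac2\delta$. Multiplying by $\|f^\lambda_{\Znu}\|_{\HK}\le C\varphi(\lambda)/\sqrt\lambda$ from Lemma \ref{my:lem4} yields the second and third terms.
\end{itemize}

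Each of these probabilistic estimates holds with probability at least $1-\delta$, so a union bound with $\delta$ rescaled completes the proof, with constants absorbed into $C$.

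The main obstacle is the approximation term in the misspecified setting. Since $f_q\notin\HK$, we cannot use the $\HK$-residual on $\varphi(J_T^*J_T)$ or (\ref{saturat}) directly; the argument has to be carried out in $L_{2,\rho_T}$ via operator monotonicity, comparing $J_TJ_T^*$ with $J_T\PZnu J_T^*$. The commutation step $J_Tg_\lambda(\PZnu J_T^*J_T\PZnu)\PZnu J_T^*=g_\lambda(J_T\PZnu J_T^*)J_T\PZnu J_T^*$ also needs to be checked.
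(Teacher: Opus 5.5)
Your proposal is correct. For the stochastic part it is the paper's own argument. Your two-term split is the paper's $\sigma_1$ together with $\sigma_2+\sigma_3$, merged through the resolvent identity. The paper estimates $\sigma_2$ and $\sigma_3$ separately, but with the same chain of factors through $(\lambda I+J_T^*J_T)^{\pm1/2}$ and $(\lambda I+S_{X_S}^*BS_{X_S})^{\pm1/2}$. It also uses the same ingredients: (\ref{lem:polar}), Lemma \ref{my:lem1} applied twice, (\ref{eq:prob20}), (\ref{bound:1}), and Lemmas \ref{my:lem2}, \ref{my:lem3}, \ref{my:lem4}.

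Where you genuinely differ is the approximation term.
\begin{itemize}
\item The paper writes $\sigma_1=\lambda(\lambda I+J_T\PZnu J_T^*)^{-1}\varphi(J_TJ_T^*)\mu_q$ and sandwiches the resolvent between $(\lambda I+J_TJ_T^*)^{\pm1/2}$. It bounds $\|(\lambda I+J_T\PZnu J_T^*)^{-1}(\lambda I+J_TJ_T^*)\|\le 3\tilde\gamma+1$ directly from (\ref{eq:prob1}) (see (\ref{b_add_err})). It then passes to square roots by the Cordes inequality and finishes with (\ref{norm_forL2}).
\item You instead insert $\varphi(J_T\PZnu J_T^*)$, apply the qualification (\ref{cond_qualific}) to the Tikhonov residual of $J_T\PZnu J_T^*$, and control $\varphi(J_TJ_T^*)-\varphi(J_T\PZnu J_T^*)$ by the perturbation bound (\ref{mon_op}). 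This is essentially the device the paper uses inside the proof of Lemma \ref{my:lem4}, transplanted to $\sigma_1$.
\end{itemize}

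What each route buys is as follows. The paper's route needs no operator-perturbation estimate for $\varphi$ at this step, only the projection bound. Its dependence on $\varphi$ is entirely through (\ref{norm_forL2}), which holds because $\varphi(t)/\sqrt{t}$ is nonincreasing, i.e.\ the low-smoothness regime. Your route uses the shape of $\varphi$ only through the qualification and the harmless bound $\varphi(3\lambda)\le C\varphi(\lambda)$, but it imports (\ref{mon_op}). Since Lemma \ref{my:lem4}, used by both proofs, already rests on (\ref{mon_op}), your version adds no new hypothesis within this paper.

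Your explicit verification of $J_T(\lambda I+\PZnu J_T^*J_T\PZnu)^{-1}\PZnu J_T^*=(\lambda I+J_T\PZnu J_T^*)^{-1}J_T\PZnu J_T^*$ is correct. It follows because the resolvent commutes with $\PZnu$. This identity, and your union bound over the probabilistic events, are details the paper leaves implicit.
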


The proof of Proposition \ref{resultHK} is deferred to Appendix \ref{ap:b}. \\\\
For $\lambda>\lambda^{*}$ (see Lemma \ref{lem:LuPer}) we can make the statement of Proposition \ref{resultHK} more transparent.

\begin{theorem}\label{main_HK}
Let $\K$ satisfies Assumption \ref{source_ker} and  $\lambda>\lambda^{*}$. Then, under the assumptions of Proposition \ref{resultHK}, the following holds with probability at least $1-\delta$,
 \begin{eqnarray}\nonumber
\begin{split}
\|f_q-J_Tf_{\z,\Znu}^{\lambda_{m,n}}\|_{L_{2},\rho_{T}} 
&\le C\varphi(\lambda)\left(1+\frac{\zeta(\lambda)}{\lambda}\left(\frac{1}{\sqrt{n}}+\frac{1}{\sqrt{m}}\right)\right)\log\frac{2}{\delta}.
\end{split}
\end{eqnarray}
\end{theorem}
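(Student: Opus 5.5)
The plan is to start from the bound of Proposition \ref{resultHK} and simplify each of its four terms. We use Lemma \ref{lem:LuPer} for the $\mathcal{B}$-terms and Lemma \ref{lem:dem} for the $\mathcal{N}_\infty$-terms. Since $\lambda>\lambda^{*}$, the condition (\ref{lambda_*}) holds once $\lambda\le\kappa$, which we may assume by adjusting constants. Then (\ref{est:G}) gives $\mathcal{B}_{m,\lambda}/\sqrt{\lambda}\le C$. Hence the second term of Proposition \ref{resultHK} is bounded by $C\mathcal{D}^{\lambda}_{m,n}\varphi(\lambda)\log\frac{2}{\delta}$. Lemma \ref{lem:dem} gives $\sqrt{\mathcal{N}_\infty(\lambda)}\le C\zeta(\lambda)/\sqrt{\lambda}$, so $\sqrt{\mathcal{N}_\infty(\lambda)/\lambda}\le C\zeta(\lambda)/\lambda$. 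The third term is then at most $C\mathcal{D}^{\lambda}_{m,n}\frac{\zeta(\lambda)}{\lambda}(n^{-1/2}+m^{-1/2})\varphi(\lambda)\log\frac{2}{\delta}$.

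Next we control $\mathcal{D}^{\lambda}_{m,n}$ itself and show it is bounded by a constant, possibly up to powers of $\log\frac{2}{\delta}$ that are absorbed into $C$ or into the overall log factor. Its first summand is bounded as above. For the second summand, $\sqrt{\mathcal{N}_\infty(\lambda)/\lambda}\,(n^{-1/2}+m^{-1/2})\le C\frac{\zeta(\lambda)}{\lambda}(n^{-1/2}+m^{-1/2})$, and I would argue this is bounded. Because $\zeta$ is dominated by $t^{1/2}$, we have $\zeta(\lambda)/\lambda\le C\lambda^{-1/2}$. The choice (\ref{par_choice}) gives $\lambda\ge Cn^{-1}\log\frac{n}{\delta}$, so $\lambda^{-1/2}n^{-1/2}\le C$. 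For the $m$-part, I would use $\lambda\ge\lambda^{*}$, i.e.\ $m\lambda\ge \mathcal{N}(\lambda)$, together with $\mathcal{N}(\lambda)$ being bounded below for $\lambda\le\kappa$, to get $m\lambda\ge c$. Thus $\mathcal{D}^{\lambda}_{m,n}\le C$.

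It remains to handle the last term, $\mathcal{D}^{\lambda}_{m,n}\sqrt{\mathcal{N}_\infty(\lambda)}\,n^{-1/2}\log\frac{2}{\delta}\le C\frac{\zeta(\lambda)}{\sqrt{\lambda}}n^{-1/2}\log\frac{2}{\delta}$. To match the claimed form we need $\zeta(\lambda)/\sqrt{\lambda}\le C\varphi(\lambda)\zeta(\lambda)/\lambda$, that is, $\sqrt{\lambda}\le C\varphi(\lambda)$. This is exactly the low-smoothness property: $\sqrt{t}/\varphi(t)$ is nondecreasing on $[0,l]$, so $\sqrt{\lambda}/\varphi(\lambda)\le\sqrt{l}/\varphi(l)=C$. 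Collecting the four terms gives the stated bound $C\varphi(\lambda)\bigl(1+\frac{\zeta(\lambda)}{\lambda}(n^{-1/2}+m^{-1/2})\bigr)\log\frac{2}{\delta}$. Any extra powers of $\log\frac{2}{\delta}$ coming from $\mathcal{D}$ would be absorbed into $C$, as the paper's convention allows for fixed $\delta$, or else the final logarithmic factor would have to be stated with a higher power.

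The main obstacle is the uniform boundedness of $\mathcal{D}^{\lambda}_{m,n}$, specifically the $m^{-1/2}$ part of its second summand. This requires the lower bound $m\lambda\gtrsim1$, which must come from $\lambda>\lambda^{*}$ and a lower bound on $\mathcal{N}(\lambda)$. I would first try using $\mathcal{N}(\lambda)\ge \frac{\|J_T^*J_T\|}{\|J_T^*J_T\|+\lambda}\ge c$ for $\lambda\le\kappa$. If that is not enough, the fallback is to keep the factor $\frac{\zeta(\lambda)}{\lambda}m^{-1/2}$ inside $\mathcal{D}$; it then appears squared, which is still dominated by the displayed bracket when that bracket is of order one.
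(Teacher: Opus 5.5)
\textbf{Verdict: the proposal has a genuine gap.} Your reduction is the one the paper intends. The paper gives no written proof, only the remark that for $\lambda>\lambda^{*}$ Proposition \ref{resultHK} becomes more transparent. That means inserting Lemma \ref{lem:LuPer} (so $\mathcal{B}_{m,\lambda}/\sqrt{\lambda}\le C$) and Lemma \ref{lem:dem}, and absorbing the last term via $\sqrt{\lambda}\le C\varphi(\lambda)$, exactly as you do.

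The gap is your justification that $\mathcal{D}^{\lambda}_{m,n}\le C$. The inequality $\zeta(\lambda)/\lambda\le C\lambda^{-1/2}$ points the wrong way. Assumption \ref{source_ker} says $\zeta^{2}$ is covered by qualification $p=1$, i.e.\ $t/\zeta^{2}(t)$ is nondecreasing. Hence $\zeta(\lambda)\ge c\sqrt{\lambda}$ and $\zeta(\lambda)/\lambda\ge c\lambda^{-1/2}$. For $\zeta(t)=t^{r}$ with $r\le\frac12$, one has $\zeta(\lambda)/\lambda=\lambda^{r-1}\ge\lambda^{-1/2}$ on $(0,1]$, so your bound holds only in the borderline case $\zeta(t)\asymp\sqrt{t}$.

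Consequently, neither $\lambda\ge Cn^{-1}\log\frac{n}{\delta}$ nor $m\lambda\ge c$ controls $X:=\frac{\zeta(\lambda)}{\lambda}\bigl(\frac{1}{\sqrt{n}}+\frac{1}{\sqrt{m}}\bigr)$. Concretely, take $m\ge n^{2}$ and $\lambda\asymp n^{-1}\log\frac{n}{\delta}$. Then $\mathcal{N}(\lambda)/\lambda\le\kappa\lambda^{-2}\le m$, so both $\lambda\ge\lambda^{*}$ and (\ref{par_choice}) hold. Yet $X\asymp n^{\frac12-r}$ up to logarithms whenever $r<\frac12$.

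This is not an artifact of Lemma \ref{lem:dem}. The quantity actually inside $\mathcal{D}^{\lambda}_{m,n}$ satisfies $\sqrt{\mathcal{N}_{\infty}(\lambda)/\lambda}\,n^{-1/2}\ge\sqrt{\mathcal{N}(\lambda)/(n\lambda)}$. In this regime that lower bound diverges whenever $\mathcal{N}(\lambda)$ grows faster than $\log\frac{1}{\lambda}$. When $X$ is large, $\mathcal{D}^{\lambda}_{m,n}$ grows like $X^{2}$ and the third term of Proposition \ref{resultHK} like $X^{3}\varphi(\lambda)$. So the bound linear in $X$ claimed by the theorem does not follow from the proposition for all admissible $\lambda$.

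What is missing is an extra restriction such as $X\le 1$, i.e.\ $\lambda\ge\lambda_{0}$ with $\lambda_{0}$ from (\ref{xi_eq1}). This is exactly the choice made right after the theorem, and the paper's own statement tacitly relies on it. Under $X\le 1$ you get $\mathcal{D}^{\lambda}_{m,n}\le C\log^{2}\frac{2}{\delta}$, and your term-by-term collapse then goes through.

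Your fallback points in this direction, but it is an additional hypothesis, not a proof. It is also needed for the $n$-part just as much as for the $m$-part.

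Finally, the paper's convention makes $C$ independent of $\delta$. The powers of $\log\frac{2}{\delta}$ coming from $\mathcal{D}^{\lambda}_{m,n}$ therefore cannot be absorbed into $C$. Tracked honestly, the resulting bound carries $\log^{3}\frac{2}{\delta}$ rather than $\log\frac{2}{\delta}$.
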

$ $

Based on Theorem \ref{main_HK}, the regularization parameter $\lambda$ can be chosen according to the following rule:

 $$
 \frac{\zeta(\lambda)}{\lambda}\left(\frac{1}{\sqrt{n}}+\frac{1}{\sqrt{m}}\right)=1
 $$
 or
 \begin{equation}\label{xi_eq1}
 \frac{\lambda}{\zeta(\lambda)}=\frac{1}{\sqrt{n}}+\frac{1}{\sqrt{m}}.
\end{equation}
 Note that due to monotonicity of $\zeta$, it is obvious that the equation (\ref{xi_eq1}) always has a unique solution. \\
 Let $\lambda=\lambda_0$ is a solution of (\ref{xi_eq1})
 then under the conditions of Theorem \ref{main_HK}, with probability at least $1-\delta,$ we have
 $$
 \|f_q-J_Tf_{\z,\Znu}^{\lambda_{m,n}}\|_{L_{2},\rho_{T}} \le C\varphi(\lambda_0)\log\frac{2}{\delta}.
 $$
 
 \begin{corollary}\label{col_1} If $f_q$ satisfies Assumption \ref{source_f} with $\varphi(t)=t^{\gamma}$  and $\K$ satisfies Assumption \ref{source_ker} with $\zeta(t)=t^{r},$ where $\gamma, r \in (0,\frac{1}{2}],$ then
  \begin{equation}\label{pover_func1}
 \|f_q-J_Tf_{\z,\Znu}^{\lambda_{m,n}}\|_{L_{2},\rho_{T}}\le C\left(\frac{1}{\sqrt{n}}+\frac{1}{\sqrt{m}}\right)^{\frac{\gamma}{1-r}}\log\frac{2}{\delta}.
 \end{equation}
 \end{corollary}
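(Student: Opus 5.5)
The plan is to specialize Theorem \ref{main_HK} to the power-type index functions and then pick $\lambda$ by the balancing rule (\ref{xi_eq1}). With $\varphi(t)=t^{\gamma}$ and $\zeta(t)=t^{r}$, $0<\gamma,r\le\frac12$, first check that these functions are admissible. Both are continuous and strictly increasing with value $0$ at the origin, and their squares $t^{2\gamma}$, $t^{2r}$ are concave because $2\gamma,2r\le1$. Also $\sqrt{t}/\varphi(t)=t^{1/2-\gamma}$ is nondecreasing, so we are in the low-smoothness regime of Assumption \ref{source_f}. Since $\zeta^2(t)=t^{2r}$ with $2r\le 1$, it is covered by qualification $p=1$, so Assumption \ref{source_ker} applies. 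Hence, provided $\lambda>\lambda^*$ and (\ref{sample_choice}), (\ref{par_choice}) hold, Theorem \ref{main_HK} gives, with probability at least $1-\delta$,
$$
\|f_q-J_Tf_{\z,\Znu}^{\lambda_{m,n}}\|_{L_{2},\rho_{T}}\le C\lambda^{\gamma}\left(1+\lambda^{r-1}\left(\frac{1}{\sqrt{n}}+\frac{1}{\sqrt{m}}\right)\right)\log\frac{2}{\delta}.
$$

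Next, solve (\ref{xi_eq1}) explicitly. Here $\lambda/\zeta(\lambda)=\lambda^{1-r}$, and $1-r\ge\frac12>0$, so the map $\lambda\mapsto\lambda^{1-r}$ is strictly increasing. The unique solution is therefore
$$
\lambda_0=\left(\frac{1}{\sqrt{n}}+\frac{1}{\sqrt{m}}\right)^{\frac{1}{1-r}}.
$$
At $\lambda=\lambda_0$ the bracket equals $2$. Consequently $\varphi(\lambda_0)=\lambda_0^{\gamma}=\left(n^{-1/2}+m^{-1/2}\right)^{\gamma/(1-r)}$, and substituting this gives (\ref{pover_func1}) after absorbing the factor $2$ into $C$.

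The main obstacle is to verify that $\lambda_0$ is an admissible parameter, i.e.\ that the hypotheses of Theorem \ref{main_HK} hold at $\lambda=\lambda_0$.

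First, for (\ref{par_choice}): we need $\lambda_0\ge Cn^{-1}\log(n/\delta)$. This holds for $n$ large enough, since $\lambda_0\ge n^{-1/(2(1-r))}\ge n^{-1}$ when $r\le\frac12$, with the logarithmic factor absorbed for sufficiently large $n$. Likewise $\lambda_0\le l$ once $n$ and $m$ exceed a fixed threshold.

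Second, we need $\lambda_0>\lambda^*$. Using Lemma \ref{lem:dem}, $\mathcal{N}(\lambda)\le\mathcal{N}_\infty(\lambda)\le C\lambda^{2r-1}$. So $\lambda^*$, defined by $\mathcal{N}(\lambda^*)=m\lambda^*$, satisfies $m(\lambda^*)^{2-2r}\le C$, i.e.\ $\lambda^*\le Cm^{-1/(2-2r)}$. On the other hand, $\lambda_0\ge m^{-1/(2(1-r))}$, which is the same order. I would handle this by replacing $\lambda_0$ with a constant multiple $c\lambda_0$. This changes only the constant in (\ref{pover_func1}), and by monotonicity of $\mathcal{N}(\lambda)/\lambda$ it places the parameter above $\lambda^*$. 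If needed, I would instead state the corollary under the implicit standing assumption of Theorem \ref{main_HK} that $\lambda>\lambda^*$.

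Finally, the subsample size $|\Znu|$ is taken to satisfy (\ref{sample_choice}) with $\mathcal{N}_\infty(\lambda_0)\le C\lambda_0^{2r-1}$, which completes the argument.
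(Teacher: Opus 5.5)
Your argument is the paper's proof: you specialize Theorem~\ref{main_HK} to $\varphi(t)=t^{\gamma}$ and $\zeta(t)=t^{r}$, solve (\ref{xi_eq1}) to get $\lambda_0=(n^{-1/2}+m^{-1/2})^{1/(1-r)}$, and substitute; the paper does the same and simply takes the hypotheses of Theorem~\ref{main_HK} for granted at $\lambda_0$. Your extra admissibility checks go beyond the paper and are sound, including the constant-multiple fix for $\lambda>\lambda^{*}$, which works because the bracket stays $\le 2$. The one exception is the endpoint $r=\frac12$: there $\lambda_0\asymp n^{-1}$ whenever $m\gtrsim n$, so the factor $\log(n/\delta)$ in (\ref{par_choice}) cannot be absorbed into a constant, and that condition must be assumed rather than derived.
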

 \begin{proof}
 From (\ref{xi_eq1}) it follows
$
\lambda_{0}^{1-r}=\left(\frac{1}{\sqrt{n}}+\frac{1}{\sqrt{m}}\right) $ or
$  \lambda_{0} =\left(\frac{1}{\sqrt{n}}+\frac{1}{\sqrt{m}}\right)^{\frac{1}{1-r}}.$
According to Theorem \ref{main_HK} we obtain
 $$
 \|f_q-J_Tf_{\z,\Znu}^{\lambda_{m,n}}\|_{L_{2},\rho_{T}}\le C\left(\frac{1}{\sqrt{n}}+\frac{1}{\sqrt{m}}\right)^{\frac{\gamma}{1-r}}\log\frac{2}{\delta}.
 $$
\end{proof}
 \section{The case of unknown the Radon-Nikodym derivative}
 \subsection{Main result}
 In what follows, we assume that the exact values of the Radon-Nikodym derivative $\beta=\frac{d\rho_T}{d\rho_S}$ are unknown. In the context of such setting,
  we will deal with the matrix $$B_{M,N}=\diag(\beta_{M,N}^{\lambda_{M,N}}(x_1),\beta_{M,N}^{\lambda_{M,N}}(x_2),\ldots,\beta_{M,N}^{\lambda_{M,N}}(x_n))$$
  instead of the matrix $B$. Here, we assume that  sizes $M$ and $N$  of i.i.d. samples $(x_{1}^{'},x_{2}^{'},\ldots, x_{M}^{'})$ and $(x_1,x_2,\ldots, x_N)$ drawn respectively from $\rho_T$ and $\rho_S$ are much larger then $m$ and $n$ appearing in Theorem \ref{resultHK}.
Then the approximate solution (\ref{eq2}) has the form
\begin{eqnarray}\label{app_BMN}
f^{\lambda_{M,N}}_{\z,\Znu}=(\lambda I+\PZnu S_{X_S}^{*}B_{M,N}S_{X_S}\PZnu )^{-1}\PZnu S_{X_S}^{*}B_{M,N}\overline{y}.
\end{eqnarray}
Note (see, e.g., \cite{GizewskiMayer21}) that for any function $f\in\HK$ it holds
\begin{eqnarray}\label{err_Sbeta}
\|S_{X_S}^{*}BS_{X_S}f-S_{X_S}^{*}B_{M,N}S_{X_S}f\|_{\HK}\le\kappa^3E_{\beta}\|f\|_{\HK},
\end{eqnarray}
where $E_{\beta}:=\|\beta-\tilde{\beta}\|_{\HK},$ $\tilde{\beta}$ is approximation to $\beta,$ and $\kappa>0$ is some constant. 
\begin{lemma}\label{my:lem5}
For any $0<\delta<1$, with probability at least $1-\delta$, it holds
 \begin{eqnarray}\label{eq_my5}
 \begin{split}
\|(\lambda I&+J_{T}^{*}J_{T})^{\frac{1}{2}}(\lambda I + S_{\X_{S}}^{*}B_{M,N}S_{\X_{S}})^{-\frac{1}{2}}\|_{\HK\rightarrow\HK}\\
&\le C\left (\left(\frac{\mathcal{B}_{m,\lambda}\log\frac{2}{\delta}}{\sqrt{\lambda}}+\left[E_{\beta}+\left(\frac{1}{\sqrt{n}}+\frac{1}{\sqrt{m}}\right)\right]\sqrt{\frac{\mathcal{N}_{\infty}(\lambda)}{\lambda}}\log\frac{2}{\delta}+\frac{1}{2}\right)^{2}+\frac{3}{4}\right)^{\frac{1}{2}}.
\end{split}
\end{eqnarray}
\end{lemma}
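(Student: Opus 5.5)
The plan is to copy the argument used for Lemma \ref{my:lem1}, with $B$ replaced by $B_{M,N}$; the only new ingredient is the perturbation bound (\ref{err_Sbeta}). Write $A_\lambda:=\lambda I+J_T^*J_T$ and $\widehat A_\lambda:=\lambda I+S_{\X_S}^*B_{M,N}S_{\X_S}$. The quantity to bound is
\[
\|A_\lambda^{1/2}\widehat A_\lambda^{-1/2}\|^2=\|A_\lambda^{1/2}\widehat A_\lambda^{-1}A_\lambda^{1/2}\|.
\]
Introduce the whitened difference
\[
\Delta:=A_\lambda^{-1/2}\bigl(J_T^*J_T-S_{\X_S}^*B_{M,N}S_{\X_S}\bigr)A_\lambda^{-1/2}.
\]
Then $A_\lambda^{-1/2}\widehat A_\lambda A_\lambda^{-1/2}=I-\Delta$. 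One subtlety: $B_{M,N}$ may have negative entries if $\tilde\beta$ is not nonnegative. In that case I either assume nonnegativity or truncate, so that $\widehat A_\lambda\ge\lambda I$ is invertible.

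The first step is to split $\|\Delta\|$ into three pieces using the intermediate operators $S_{\X_T}^*S_{\X_T}$ and $S_{\X_S}^*BS_{\X_S}$.
\begin{itemize}
\item The piece with $J_T^*J_T-S_{\X_T}^*S_{\X_T}$ is controlled by (\ref{bound:1}), together with $\|A_\lambda^{-1/2}\|\le\lambda^{-1/2}$. This gives $\mathcal{B}_{m,\lambda}\log\frac{2}{\delta}/\sqrt\lambda$.
\item The piece with $S_{\X_T}^*S_{\X_T}-S_{\X_S}^*BS_{\X_S}$ is controlled by Lemma \ref{my:lem2}. This gives $\sqrt{\mathcal{N}_\infty(\lambda)/\lambda}\,(n^{-1/2}+m^{-1/2})\log\frac{2}{\delta}$.
\item The new piece is $A_\lambda^{-1/2}S_{\X_S}^*(B-B_{M,N})S_{\X_S}A_\lambda^{-1/2}$.
\end{itemize}

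Bounding the new piece is the main obstacle. A crude use of (\ref{err_Sbeta}) only gives $\kappa^3E_\beta/\lambda$, which is worse than the claimed $E_\beta\sqrt{\mathcal{N}_\infty(\lambda)/\lambda}$. Instead I would use the explicit sum
\[
A_\lambda^{-1/2}S_{\X_S}^*(B-B_{M,N})S_{\X_S}f=\frac1n\sum_i(\beta-\tilde\beta)(x_i)\,f(x_i)\,A_\lambda^{-1/2}\K(\cdot,x_i).
\]
Three facts control each term:
\begin{itemize}
\item $|(\beta-\tilde\beta)(x_i)|\le\sqrt\kappa\,E_\beta$ by the reproducing property;
\item $\|A_\lambda^{-1/2}\K(\cdot,x_i)\|=\sqrt{\mathcal{N}_{x_i}(\lambda)}\le\sqrt{\mathcal{N}_\infty(\lambda)}$;
\item $|f(x_i)|\le\sqrt\kappa\|f\|$.
\end{itemize}
These give $\|A_\lambda^{-1/2}S_{\X_S}^*(B-B_{M,N})S_{\X_S}\|\le CE_\beta\sqrt{\mathcal{N}_\infty(\lambda)}$, and the right factor $A_\lambda^{-1/2}$ then contributes $\lambda^{-1/2}$. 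The constants absorb powers of $\kappa$, and $\log\frac2\delta\ge\log 2$ lets the $E_\beta$ term carry the log factor as stated. A union bound over the two probabilistic events, with $\delta$ rescaled, keeps the confidence at $1-\delta$ up to constants.

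The second step converts the bound on $\|\Delta\|$ into the bound on $\|(I-\Delta)^{-1}\|$. Set
\[
a:=\frac{\mathcal{B}_{m,\lambda}\log\frac2\delta}{\sqrt\lambda}+\Bigl[E_\beta+\frac1{\sqrt n}+\frac1{\sqrt m}\Bigr]\sqrt{\frac{\mathcal{N}_\infty(\lambda)}{\lambda}}\log\frac2\delta ,
\]
so that $\|\Delta\|\le Ca$. I would proceed exactly as in the proof of Lemma \ref{my:lem1}, which is where the form $(a+\frac12)^2+\frac34$ comes from. Concretely, I would bound $\|A_\lambda\widehat A_\lambda^{-1}\|$ through the identity
\[
A_\lambda\widehat A_\lambda^{-1}=I+\bigl(A_\lambda-\widehat A_\lambda\bigr)\widehat A_\lambda^{-1},
\]
estimate the correction via $\|\Delta\|$ and $\|A_\lambda^{1/2}\widehat A_\lambda^{-1/2}\|$, and solve the resulting quadratic inequality $x^2\le 1+Ca\,x$ for $x=\|A_\lambda^{1/2}\widehat A_\lambda^{-1/2}\|$. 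This yields
\[
x\le\frac{Ca}{2}+\sqrt{\frac{C^2a^2}{4}+1}\le C\bigl((a+\tfrac12)^2+\tfrac34\bigr)^{1/2},
\]
since $(a+\frac12)^2+\frac34=a^2+a+1\ge 1$.
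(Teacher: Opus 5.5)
Your proposal is correct and is essentially the paper's argument: the same splitting of $J_T^*J_T-S_{\X_S}^*B_{M,N}S_{\X_S}$ through $S_{\X_T}^*S_{\X_T}$ and $S_{\X_S}^*BS_{\X_S}$, the same kernel-sum estimate $CE_\beta\sqrt{\mathcal{N}_\infty(\lambda)}$ for the new piece (the paper's (\ref{eq_n5})), and Cordes' inequality at the end. The only variation is that the paper avoids the quadratic by using the second-order identity $A_\lambda\widehat A_\lambda^{-1}=(A_\lambda-\widehat A_\lambda)A_\lambda^{-1}(A_\lambda-\widehat A_\lambda)\widehat A_\lambda^{-1}+(A_\lambda-\widehat A_\lambda)A_\lambda^{-1}+I$, which gives $\|A_\lambda\widehat A_\lambda^{-1}\|\le C(a^2+a+1)$ directly; your positivity caveat on $B_{M,N}$ is one the paper uses tacitly through $\|\widehat A_\lambda^{-1}\|\le\lambda^{-1}$. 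One small correction: your inequality $x^2\le 1+Ca\,x$ comes from the factorization $(A_\lambda-\widehat A_\lambda)\widehat A_\lambda^{-1}=\bigl[(A_\lambda-\widehat A_\lambda)A_\lambda^{-1/2}\bigr]\bigl[A_\lambda^{1/2}\widehat A_\lambda^{-1/2}\bigr]\widehat A_\lambda^{-1/2}$, so it needs the one-sided bound $\lambda^{-1/2}\|A_\lambda^{-1/2}(A_\lambda-\widehat A_\lambda)\|\le Ca$ (which is what your three estimates actually prove) together with $x^2\le\|A_\lambda\widehat A_\lambda^{-1}\|$ from Proposition \ref{Cordes}; the two-sided $\|\Delta\|$ alone only yields $x^2\le 1+\|\Delta\|x^2$, which is useless unless $\|\Delta\|<1$.
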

The proof of Lemma \ref{my:lem5} is given in Appendix \ref{ap:a}.

For convenience, we denote by $\overline{\mathcal{D}}^{\lambda}_{m,n}$ the quantity
\begin{eqnarray}\nonumber
\begin{split}
\overline{\mathcal{D}}^{\lambda}_{m,n}&=\overline{\mathcal{D}}(\lambda, m,n):=\left(\frac{\mathcal{B}_{m,\lambda}\log\frac{2}{\delta}}{\sqrt{\lambda}}+\left[E_{\beta}+\left(\frac{1}{\sqrt{n}}+\frac{1}{\sqrt{m}}\right)\right]\sqrt{\frac{\mathcal{N}_{\infty}(\lambda)}{\lambda}}\log\frac{2}{\delta}+\frac{1}{2}\right)^{2}+\frac{3}{4}.
\end{split}
\end{eqnarray}

\begin{proposition}\label{result_main}
Assume that in the plain Nystr\"om subsampling the values $|\Znu|$ and $\lambda$ satisfy (\ref{sample_choice}), (\ref{par_choice}), correspondingly. If $f_q$ satisfies Assumption \ref{source_f}, then with probability at least $1-\delta$ it holds 
\begin{eqnarray}\nonumber
\begin{split}
\|f_q-J_Tf_{\z,\Znu}^{\lambda_{M,N}}\|_{L_{2},\rho_{T}}&\le C\varphi(\lambda)
+ C \overline{\mathcal{D}}^{\lambda}_{m,n}
\frac{\mathcal{B}_{m,\lambda}}{\sqrt{\lambda}}
\varphi(\lambda)\log^2\frac{2}{\delta}\\
&+C\overline{\mathcal{D}}^{\lambda}_{m,n}
 \left[E_{\beta} +\left(\frac{1}{\sqrt{n}}+\frac{1}{\sqrt{m}}\right)\right] \sqrt{\frac{\mathcal{N}_{\infty}(\lambda)}{\lambda}}
\varphi(\lambda)\log^2\frac{2}{\delta}\\
&+C \overline{\mathcal{D}}^{\lambda}_{m,n}
\sqrt{\lambda} \left[E_{\beta} + \left(\frac{1}{\sqrt{n}}+\frac{1}{\sqrt{m}}\right)\right] \sqrt{\frac{\mathcal{N}_{\infty}(\lambda)}{\lambda}}
\log^2\frac{2}{\delta}
 \end{split}
\end{eqnarray}
where $f_{\z,\z^{\nu}}^{\lambda_{M,N}}$ is of the form (\ref{app_BMN}). 
\end{proposition}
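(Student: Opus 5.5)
The plan is to copy the proof of Proposition \ref{resultHK}, replacing $B$ by $B_{M,N}$ everywhere, and to control the extra perturbation through (\ref{err_Sbeta}) and Lemma \ref{my:lem5}. Write $A_{M,N}:=S_{X_S}^{*}B_{M,N}S_{X_S}$, $A:=S_{X_S}^{*}BS_{X_S}$ and $L_\lambda:=\lambda I+J_T^{*}J_T$. Introduce the intermediate element
$$f^{\lambda}_{\Znu}:=(\lambda I+\PZnu J_T^{*}J_T\PZnu)^{-1}\PZnu J_T^{*}f_q .$$
Then split
$$f_q-J_Tf^{\lambda_{M,N}}_{\z,\Znu}=\bigl(f_q-J_Tf^{\lambda}_{\Znu}\bigr)+J_T\bigl(f^{\lambda}_{\Znu}-f^{\lambda_{M,N}}_{\z,\Znu}\bigr).$$

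\textbf{Step 1: the first term.} This term is independent of $B_{M,N}$, so it is treated exactly as in Proposition \ref{resultHK}. Use the source condition (\ref{source_cond_f}), the qualification (\ref{cond_qualific}), the projection bounds (\ref{eq:prob10})--(\ref{eq:prob1}), (\ref{saturat}) and operator monotonicity (\ref{mon_op}). Together these give $C\varphi(\lambda)$.

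\textbf{Step 2: rewrite the second term.} Using the identity $(\lambda I+\PZnu A_{M,N}\PZnu)^{-1}(\lambda I+\PZnu A_{M,N}\PZnu)=I$, write
$$f^{\lambda}_{\Znu}-f^{\lambda_{M,N}}_{\z,\Znu}=(\lambda I+\PZnu A_{M,N}\PZnu)^{-1}\PZnu\Bigl[(A_{M,N}-J_T^{*}J_T)\PZnu f^{\lambda}_{\Znu}+\bigl(J_T^{*}f_q-S_{X_S}^{*}B_{M,N}\overline y\bigr)\Bigr].$$
Here $f^{\lambda}_{\Znu}=\PZnu f^{\lambda}_{\Znu}$. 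Apply $J_T$ and insert $L_\lambda^{-1/2}L_\lambda^{1/2}$ and $(\lambda I+A_{M,N})^{\pm1/2}$. The resulting factors are handled as follows:
\begin{itemize}
\item $\|J_TL_\lambda^{-1/2}\|\le1$ by (\ref{lem:polar});
\item two factors $\|L_\lambda^{1/2}(\lambda I+A_{M,N})^{-1/2}\|$, each bounded via Lemma \ref{my:lem5}, whose product gives $C\overline{\mathcal D}^{\lambda}_{m,n}$;
\item the middle factor, which is bounded by Lemma \ref{lem:add0} applied with $B_{M,N}$ in place of $B$. The proof of that lemma only uses positivity, so I assume it holds for any nonnegative diagonal matrix; $\beta_{M,N}\ge0$ would need to be ensured, for example by truncation.
\end{itemize}
It then remains to bound $\|L_\lambda^{-1/2}[\cdots]\|_{\HK}$.

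\textbf{Step 3: the operator part.} Split
$$A_{M,N}-J_T^{*}J_T=(A_{M,N}-A)+(A-S_{X_T}^{*}S_{X_T})+(S_{X_T}^{*}S_{X_T}-J_T^{*}J_T).$$
The three pieces are controlled as follows:
\begin{itemize}
\item The third piece, by (\ref{bound:1}), gives $\mathcal B_{m,\lambda}\log\frac2\delta$.
\item The second piece, by Lemma \ref{my:lem2}, gives $\sqrt{\mathcal N_\infty(\lambda)}(n^{-1/2}+m^{-1/2})\log\frac2\delta$.
\item For the first piece, $\|L_\lambda^{-1/2}(A_{M,N}-A)\|$ is bounded by $\lambda^{-1/2}\kappa^3E_\beta$ using (\ref{err_Sbeta}). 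However, to match the stated form $E_\beta\sqrt{\mathcal N_\infty(\lambda)}$, I would instead argue pointwise. $(A-A_{M,N})f=\frac1n\sum_i(\beta-\beta_{M,N})(x_i)f(x_i)\K(\cdot,x_i)$, and $\|L_\lambda^{-1/2}\K(\cdot,x_i)\|\le\sqrt{\mathcal N_\infty(\lambda)}$, with $|(\beta-\beta_{M,N})(x_i)|\le\sqrt\kappa E_\beta$. This yields $CE_\beta\sqrt{\mathcal N_\infty(\lambda)}\|f\|_{\HK}$.
\end{itemize}
These are multiplied by $\|f^{\lambda}_{\Znu}\|_{\HK}\le C\varphi(\lambda)/\sqrt\lambda$ from Lemma \ref{my:lem4}. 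This produces the second and third terms of the claim.

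\textbf{Step 4: the data part.} Split
$$J_T^{*}f_q-S_{X_S}^{*}B_{M,N}\overline y=\bigl(J_T^{*}f_q-S_{X_S}^{*}B\overline y\bigr)+S_{X_S}^{*}(B-B_{M,N})\overline y.$$
The first piece is bounded by Lemma \ref{my:lem3}. The second piece is handled pointwise as in Step 3 with $|y_i|\le y_0$, which gives $CE_\beta\sqrt{\mathcal N_\infty(\lambda)}$. Combined, these give $C[E_\beta+n^{-1/2}+m^{-1/2}]\sqrt{\mathcal N_\infty(\lambda)}\log\frac2\delta$. Written as $\sqrt\lambda[\cdots]\sqrt{\mathcal N_\infty(\lambda)/\lambda}$, this is the last term.

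\textbf{Probability and main obstacle.} Lemma \ref{my:lem5}, Lemma \ref{my:lem2}, Lemma \ref{my:lem3}, (\ref{bound:1}) and the Nystr\"om events all hold with high probability. Taking a union bound over these events with $\delta$ rescaled produces the $\log^2\frac2\delta$ factors. I expect the main obstacle to be Step 3's $E_\beta$ piece, namely getting $E_\beta\sqrt{\mathcal N_\infty(\lambda)}$ rather than the cruder $E_\beta/\sqrt\lambda$, together with checking that Lemma \ref{lem:add0} transfers to $B_{M,N}$.
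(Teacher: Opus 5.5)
Your proposal is correct and follows essentially the paper's own proof. Your two-term split is the paper's $\overline{\sigma}_1+(\overline{\sigma}_2+\overline{\sigma}_3)$, with the same operator chain through (\ref{lem:polar}), Lemma \ref{my:lem5}, (\ref{eq:prob20}) and Lemma \ref{my:lem4}, the same splitting of $S_{X_S}^{*}B_{M,N}S_{X_S}-J_T^{*}J_T$, and the same pointwise $E_\beta\sqrt{\mathcal{N}_{\infty}(\lambda)}$ bound as in (\ref{eq_n5}); the only deviations are cosmetic (qualification plus operator monotonicity for $\overline{\sigma}_1$ instead of the paper's Cordes factorization, and one chain for the data part instead of separate $\overline{\sigma}_{31},\overline{\sigma}_{32}$), and your nonnegativity caveat for applying Lemma \ref{lem:add0} with $B_{M,N}$ is something the paper uses without comment.
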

\begin{proof}
We start with a decomposition
\begin{eqnarray}\label{err_decomp}
\begin{split}
f_{q}-J_{T}f_{\z,\z^{\nu}}^{\lambda_{M,N}}&=f_{q}-
J_{T}\left(\lambda I+P_{\z^{\nu}}S_{X_S}^{\ast} B_{M,N}S_{X_S}P_{\z^{\nu}}\right)^{-1}
P_{\z^{\nu}}S_{X_S}^{\ast} B_{M,N}\overline{y}
=\overline{\sigma}_1+\overline{\sigma}_2+\overline{\sigma}_3,
\end{split}
\end{eqnarray}
where
\begin{eqnarray}
\begin{split}
\overline{\sigma}_1&:=f_{q} -
J_T\left(\lambda I+P_{\z^{\nu}}J_{T}^{\ast}J_{T}P_{\z^{\nu}}\right)^{-1}P_{\z^{\nu}}J_{T}^{\ast}f_q,\\
\overline{\sigma}_2&:=J_T\left(\lambda I+P_{\z^{\nu}}J_{T}^{\ast}J_{T}P_{\z^{\nu}}\right)^{-1}P_{\z^{\nu}}J_{T}^{\ast}f_q- J_{T}\left(\lambda I+P_{\z^{\nu}}S_{X_S}^{\ast} B_{M,N}S_{X_S}P_{\z^{\nu}}\right)^{-1}P_{\z^{\nu}}J_{T}^{\ast}f_q,\nonumber\\
\overline{\sigma}_3&=J_{T}\left(\lambda I+P_{\z^{\nu}}S_{X_S}^{\ast} B_{M,N}S_{X_S}P_{\z^{\nu}}\right)^{-1}P_{\z^{\nu}}J_{T}^{\ast}f_q-
J_{T}\left(\lambda I+P_{\z^{\nu}}S_{X_S}^{\ast} B_{M,N}S_{X_S}P_{\z^{\nu}}\right)^{-1}P_{\z^{\nu}}S_{X_S}^{\ast}B_{M,N}\overline{y}.\\
\nonumber
\end{split}
\end{eqnarray}
 We estimate the norm of each terms
$
\overline{\sigma}_i,$ $i=\overline{1,3}.$ 

Recall (see \cite{VV}), if $g$ is a Borel measurable function on $[0,l^2]$,  $A$ is a linear operator and $\|A\|\le l,$ then it holds
\begin{eqnarray}\label{opp_sim}
\begin{split}
A^{*}g(AA^{*})&=g(A^{*}A)A^{*},\\
Ag(A^{*}A)&=g(AA^{*})A.
\end{split}
\end{eqnarray}
Further, 

\begin{eqnarray}\label{ovsig1_decomp}
\begin{split}
\overline{\sigma}_1&=
\left(I-J_T\left(\lambda I+P_{\z^{\nu}}J_{T}^{\ast}J_{T}P_{\z^{\nu}}\right)^{-1}P_{\z^{\nu}}J_{T}^{\ast}\right)f_q=\left(I-(J_T\PZnu J^{\ast}_{T})\left(\lambda I+J_T\PZnu J^{\ast}_{T}\right)^{-1}\right)\varphi(J_TJ^{\ast}_{T})\mu_q\\
&=\lambda(\lambda I + J_{T}J_{T}^{*})^{-\frac{1}{2}}
(\lambda I + J_{T}J_{T}^{*})^{\frac{1}{2}}(\lambda I +J_{T}\PZnu J_{T}^{*})^{-1}
(\lambda I + J_{T}J_{T}^{*})^{\frac{1}{2}}(\lambda I + J_{T}J_{T}^{*})^{-\frac{1}{2}}\varphi(J_{T}J_{T}^{*})\mu_{q}\nonumber
\end{split}
\end{eqnarray}
and
\begin{eqnarray}\nonumber
\begin{split}
\|\overline{\sigma}_1\|_{L_{2,\rho_{T}}}&\le \lambda\|(\lambda I + J_{T}J_{T}^{*})^{-\frac{1}{2}}\|_{L_{2,\rho_{T}}\rightarrow L_{2,\rho_{T}}}
\|(\lambda I + J_{T}J_{T}^{*})^{\frac{1}{2}}(\lambda I +J_{T}\PZnu J_{T}^{*})^{-\frac{1}{2}}\|_{L_{2,\rho_{T}}\rightarrow L_{2,\rho_{T}} }\\
&\times\|(\lambda I +J_{T}\PZnu J_{T}^{*})^{-\frac{1}{2}}(\lambda I + J_{T}J_{T}^{*})^{\frac{1}{2}}\|_{L_{2,\rho_{T}}\rightarrow L_{2,\rho_{T}} }
\|(\lambda I + J_{T}J_{T}^{*})^{-\frac{1}{2}}\varphi(J_{T}J_{T}^{*})\mu_{q}\|_{L_{2,\rho_{T}} }.
\end{split}
\end{eqnarray}
Meanwhile, note that 
\begin{eqnarray}\nonumber
\begin{split}
\|(\lambda I &+J_{T}\PZnu J_{T}^{*})^{-1}(\lambda I + J_{T}J_{T}^{*})\|_{L_{2,\rho_{T}}\rightarrow L_{2,\rho_{T}} }\le \|(\lambda I +J_{T}\PZnu J_{T}^{*})^{-1}\left(J_{T}J_{T}^{*}-J_{T}\PZnu J_{T}^{*}\right)\|_{L_{2,\rho_{T}}\rightarrow L_{2,\rho_{T}} }+1\\
&\le \|(\lambda I +J_{T}\PZnu J_{T}^{*})^{-1}\|_{L_{2,\rho_{T}}\rightarrow L_{2,\rho_{T}}} 
\|J_T(I-\PZnu)\|^{2}_{\HK\rightarrow L_{2,\rho_{T}}}+1.
\end{split}
\end{eqnarray}
By means of (\ref{eq:g}) and (\ref{eq:prob1}) we get
\begin{eqnarray}\label{b_add_err}
\begin{split}
\|(\lambda I& +J_{T}\PZnu J_{T}^{*})^{-1}(\lambda I + J_{T}J_{T}^{*})\|_{L_{2,\rho_{T}}\rightarrow L_{2,\rho_{T}}} \le \frac{\tilde{\gamma}}{\lambda} \cdot 3\lambda+1=3\tilde{\gamma}+1.
\end{split}
\end{eqnarray}
This together with  Lemma \ref{lem:polar}, Proposition \ref{Cordes}, and (\ref{norm_forL2}), implies that
\begin{eqnarray}\label{sigma1}
\begin{split}
\|\overline{\sigma}_1\|_{L_{2,\rho_{T}}}&\le 
C\phi(\lambda).
\end{split}
\end{eqnarray}

Now, we at the point to estimate the norm of $\overline{\sigma}_2$. Recall that
\begin{eqnarray}\nonumber
\begin{split}
\overline{\sigma}_2&=J_T\left(\left(\lambda I+P_{\z^{\nu}}J_{T}^{\ast}J_{T}P_{\z^{\nu}}\right)^{-1}-\left(\lambda I+P_{\z^{\nu}}S_{X_S}^{\ast} B_{M,N}S_{X_S}P_{\z^{\nu}}\right)^{-1}\right)P_{\z^{\nu}}J_{T}^{\ast}f_q\\
&=J_T\left(\lambda I+P_{\z^{\nu}}S_{X_S}^{\ast} B_{M,N}S_{X_S}P_{\z^{\nu}}\right)^{-1}\PZnu\left [S_{X_S}^{\ast} B_{M,N}S_{X_S}-J_{T}^{\ast}J_{T}\right] \PZnu \left(\lambda I+P_{\z^{\nu}}J_{T}^{\ast}J_{T}P_{\z^{\nu}}\right)^{-1}P_{\z^{\nu}}J_{T}^{\ast}f_q.\\
\end{split}
\end{eqnarray}
From this we have
\begin{eqnarray}\label{sig2:decomp}
\begin{split}
\|\overline{\sigma}_2\|_{L_{2,\rho_{T}}}\le \|\overline{\sigma}_{21}\|_{L_{2,\rho_{T}}}+\|\overline{\sigma}_{22}\|_{L_{2,\rho_{T}}},
\end{split}
\end{eqnarray}
where
\begin{eqnarray}\nonumber
\begin{split}
\overline{\sigma}_{21}&:=J_T\left(\lambda I+P_{\z^{\nu}}S_{X_S}^{\ast} B_{M,N}S_{X_S}P_{\z^{\nu}}\right)^{-1}\PZnu\left [S_{X_S}^{\ast} B_{M,N}S_{X_S}-S_{X_S}^{\ast} BS_{X_S}\right]\left(\lambda I+P_{\z^{\nu}}J_{T}^{\ast}J_{T}P_{\z^{\nu}}\right)^{-1}P_{\z^{\nu}}J_{T}^{\ast}f_q,\\
\overline{\sigma}_{22}&:= J_T\left(\lambda I+P_{\z^{\nu}}S_{X_S}^{\ast} B_{M,N}S_{X_S}P_{\z^{\nu}}\right)^{-1}\PZnu\left [S_{X_S}^{\ast} BS_{X_S}-J_{T}^{\ast}J_{T}\right]\left(\lambda I+P_{\z^{\nu}}J_{T}^{\ast}J_{T}P_{\z^{\nu}}\right)^{-1}P_{\z^{\nu}}J_{T}^{\ast}f_q.
\end{split}
\end{eqnarray}
We are going to estimate each term in the decomposition (\ref{sig2:decomp}). We start with estimation of $\|\overline{\sigma}_{21}\|_{L_{2,\rho_{T}}}.$ 

\begin{eqnarray}\nonumber
\begin{split}
\|\overline{\sigma}_{21}\|_{ L_{2,\rho_{T}}}&\le\|J_{T}\left(\lambda I+J^{\ast}_{T}J_{T}\right)^{-\frac{1}{2}}\|_{\HK\rightarrow L_{2,\rho_{T}}}\|\left(\lambda I+J^{\ast}_{T}J_{T}\right)^{\frac{1}{2}}\left(\lambda I+S_{X_S}^{\ast} B_{M,N}S_{X_S}\right)^{-\frac{1}{2}}\|_{\HK\rightarrow\HK}\\
&\times \|\left(\lambda I+S_{X_S}^{\ast} B_{M,N}S_{X_S}\right)^{\frac{1}{2}}\left(\lambda I+P_{\z^{\nu}}S_{X_S}^{\ast} B_{M,N}S_{X_S}P_{\z^{\nu}}\right)^{-1}\PZnu\left(\lambda I+S_{X_S}^{\ast} B_{M,N}S_{X_S}\right)^{\frac{1}{2}}\|_{\HK\rightarrow\HK}\\
&\times \|\left(\lambda I+S_{X_S}^{\ast} B_{M,N}S_{X_S}\right)^{-\frac{1}{2}}\left(\lambda I+J^{\ast}_{T}J_{T}\right)^{\frac{1}{2}}\|_{\HK\rightarrow\HK}\\
&\times\|\left(\lambda I+J^{\ast}_{T}J_{T}\right)^{-\frac{1}{2}}
\left [S_{X_S}^{\ast} BS_{X_S}-S_{X_S}^{\ast} B_{M,N}S_{X_S}\right]\|_{\HK\rightarrow\HK}\\
&\times \| \left(\lambda I+P_{\z^{\nu}}J_{T}^{\ast}J_{T}P_{\z^{\nu}}\right)^{-1}P_{\z^{\nu}}J_{T}^{\ast}f_q\|_{\HK} .
\end{split}
\end{eqnarray}

Meanwhile, for any $f\in\HK$ it holds
\begin{eqnarray}\label{eq_n5}
 \begin{split}
& \|\left(\lambda I+J^{\ast}_{T}J_{T}\right)^{-\frac{1}{2}}
\left [S_{X_S}^{\ast} BS_{X_S}f-S_{X_S}^{\ast} B_{M,N}S_{X_S}f\right]\|_{\HK} \\
&=\frac{1}{N}\Big\|\sum_{i=1}^{N}\, \left(\lambda I+J^{\ast}_{T}J_{T}\right)^{-\frac{1}{2}} \K(\cdot,x_i)\left(\beta(x_i)-\tilde{\beta}(x_i)\right)f(x_i)\Big\|_{\HK}
\le \kappa^2 \sqrt{\mathcal{N}_{\infty}(\lambda)}\|\beta-\tilde{\beta}\|_{\HK}\|f\|_{\HK}.
 \end{split}
\end{eqnarray}

This together with  (\ref{lem:polar}), (\ref{eq:prob20}), Lemma \ref{my:lem5}, Proposition \ref{Cordes}, and Lemma \ref{my:lem4} imply that 
\begin{eqnarray}\label{ovsig_21}
\begin{split}
\|\overline{\sigma}_{21}\|_{ L_{2,\rho_{T}}}&\le C \left[\left(\frac{\mathcal{B}_{m,\lambda}\log\frac{2}{\delta}}{\sqrt{\lambda}}+\left[E_{\beta}+\left(\frac{1}{\sqrt{n}}+\frac{1}{\sqrt{m}}\right)\right]\sqrt{\frac{\mathcal{N}_{\infty}(\lambda)}{\lambda}}\log\frac{2}{\delta}+\frac{1}{2}\right)^{2}+\frac{3}{4}\right]\\
&\times E_{\beta}  \sqrt{\frac{\mathcal{N}_{\infty}(\lambda)}{\lambda}}
\varphi(\lambda)\log^2\frac{2}{\delta}.
 \end{split}
\end{eqnarray}
Now we are at the point to estimate $\overline{\sigma}_{22}.$ 
\begin{eqnarray}\nonumber
\begin{split}
\|\overline{\sigma}_{22}\|_{ L_{2,\rho_{T}}}&\le\|J_{T}\left(\lambda I+J^{\ast}_{T}J_{T}\right)^{-\frac{1}{2}}\|_{\HK\rightarrow L_{2,\rho_{T}}}\|\left(\lambda I+J^{\ast}_{T}J_{T}\right)^{\frac{1}{2}}\left(\lambda I+S_{X_S}^{\ast} B_{M,N}S_{X_S}\right)^{-\frac{1}{2}}\|_{\HK\rightarrow\HK}\\
&\times \|\left(\lambda I+S_{X_S}^{\ast} B_{M,N}S_{X_S}\right)^{\frac{1}{2}}\left(\lambda I+P_{\z^{\nu}}S_{X_S}^{\ast} B_{M,N}S_{X_S}P_{\z^{\nu}}\right)^{-1}\PZnu\left(\lambda I+S_{X_S}^{\ast} B_{M,N}S_{X_S}\right)^{\frac{1}{2}}\|_{\HK\rightarrow\HK}\\
&\times \|\left(\lambda I+S_{X_S}^{\ast} B_{M,N}S_{X_S}\right)^{-\frac{1}{2}}\left(\lambda I+J^{\ast}_{T}J_{T}\right)^{\frac{1}{2}}\|_{\HK\rightarrow\HK}\\
&\times\|\left(\lambda I+J^{\ast}_{T}J_{T}\right)^{-\frac{1}{2}}
\left [S_{X_S}^{\ast} BS_{X_S}-J^{\ast}_{T}J_{T}\right]\|_{\HK\rightarrow\HK}\\
&\times \| \left(\lambda I+P_{\z^{\nu}}J_{T}^{\ast}J_{T}P_{\z^{\nu}}\right)^{-1}P_{\z^{\nu}}J_{T}^{\ast}f_q\|_{\HK} .
\end{split}
\end{eqnarray}
Applying  (\ref{lem:polar}), Lemma \ref{my:lem5}, (\ref{add_s2}), Lemma \ref{my:lem1}, (\ref{eq:prob20}), Proposition \ref{Cordes}, and Lemma \ref{my:lem4}, with probability at least $1-\delta$ we obtain
\begin{eqnarray}\label{ov_sig22}
\begin{split}
\|\overline{\sigma}_{22}\|_{ L_{2,\rho_{T}}}&\le C
  \left[\left(\frac{\mathcal{B}_{m,\lambda}\log\frac{2}{\delta}}{\sqrt{\lambda}}+\left[E_{\beta}+\left(\frac{1}{\sqrt{n}}+\frac{1}{\sqrt{m}}\right)\right]\sqrt{\frac{\mathcal{N}_{\infty}(\lambda)}{\lambda}}\log\frac{2}{\delta}+\frac{1}{2}\right)^{2}+\frac{3}{4}\right]\\
 &\times \left(\sqrt{\mathcal{N}_{\infty}(\lambda)}\left(\frac{1}{\sqrt{n}}+\frac{1}{\sqrt{m}}\right)+\mathcal{B}_{m,\lambda}\log\frac{2}{\delta}\right)
\frac{1}{\sqrt{\lambda}}\varphi(\lambda).
\end{split}
\end{eqnarray}
Summing up (\ref{ovsig_21}) and (\ref{ov_sig22}), we get
\begin{eqnarray}\label{ov_sig2}
\begin{split}
\|\overline{\sigma}_{2}\|_{ L_{2,\rho_{T}}}& \le
C \left[\left(\frac{\mathcal{B}_{m,\lambda}\log\frac{2}{\delta}}{\sqrt{\lambda}}+\left[E_{\beta}+\left(\frac{1}{\sqrt{n}}+\frac{1}{\sqrt{m}}\right)\right]\sqrt{\frac{\mathcal{N}_{\infty}(\lambda)}{\lambda}}\log\frac{2}{\delta}+\frac{1}{2}\right)^{2}+\frac{3}{4}\right]\\
&\times \left[ E_{\beta}  \sqrt{\frac{\mathcal{N}_{\infty}(\lambda)}{\lambda}}+ \sqrt{\frac{\mathcal{N}_{\infty}(\lambda)}{\lambda}}\left(\frac{1}{\sqrt{n}}+\frac{1}{\sqrt{m}}\right)+\frac{\mathcal{B}_{m,\lambda}}{\sqrt{\lambda}}\log\frac{2}{\delta}\right]
\varphi(\lambda)\log^2\frac{2}{\delta}.
\end{split}
\end{eqnarray}
The rest of the proof is about the estimation of the norm of $\overline{\sigma}_3.$
Recall that
\begin{eqnarray}\nonumber
\begin{split}
\overline{\sigma}_3:=J_{T}\left(\lambda I+P_{\z^{\nu}}S_{X_S}^{\ast} B_{M,N}S_{X_S}P_{\z^{\nu}}\right)^{-1}P_{\z^{\nu}}J_{T}^{\ast}f_q-
J_{T}\left(\lambda I+P_{\z^{\nu}}S_{X_S}^{\ast} B_{M,N}S_{X_S}P_{\z^{\nu}}\right)^{-1}P_{\z^{\nu}}S_{X_S}^{\ast} B_{M,N}\overline{y}.
\end{split}
\end{eqnarray}
As before, we give a decomposition
\begin{eqnarray}\label{ovsig3_dec}
\begin{split}
\|\overline{\sigma}_3\|_{ L_{2,\rho_{T}}}\le \|\overline{\sigma}_{31}\|_{ L_{2,\rho_{T}}}+ \|\overline{\sigma}_{32}\|_{ L_{2,\rho_{T}}},
\end{split}
\end{eqnarray}
where
\begin{eqnarray}\nonumber
\begin{split}
\overline{\sigma}_{31}&:= J_{T}\left(\lambda I+P_{\z^{\nu}}S_{X_S}^{\ast} B_{M,N}S_{X_S}P_{\z^{\nu}}\right)^{-1}P_{\z^{\nu}}\left(J_{T}^{\ast}f_q-
S_{X_S}^{\ast} B\overline{y}\right),\\
\overline{\sigma}_{32}&:= J_{T}\left(\lambda I+P_{\z^{\nu}}S_{X_S}^{\ast} B_{M,N}S_{X_S}P_{\z^{\nu}}\right)^{-1}P_{\z^{\nu}}\left(S_{X_S}^{\ast} B\overline{y}-S_{X_S}^{\ast} B_{M,N}\overline{y}\right).
\end{split}
\end{eqnarray}
We are going to estimate each of the norm in the right-hand side of (\ref{ovsig3_dec}).
\begin{eqnarray}\nonumber
\begin{split}
\|\overline{\sigma}_{31}\|_{ L_{2,\rho_{T}}}&\le\|J_{T}\left(\lambda I+J^{\ast}_{T}J_{T}\right)^{-\frac{1}{2}}\|_{\HK\rightarrow L_{2,\rho_{T}}}\|\left(\lambda I+J^{\ast}_{T}J_{T}\right)^{\frac{1}{2}}\left(\lambda I+S_{X_S}^{\ast} B_{M,N}S_{X_S}\right)^{-\frac{1}{2}}\|_{\HK\rightarrow\HK}\\
&\times \|\left(\lambda I+S_{X_S}^{\ast} B_{M,N}S_{X_S}\right)^{\frac{1}{2}}\left(\lambda I+P_{\z^{\nu}}S_{X_S}^{\ast} B_{M,N}S_{X_S}P_{\z^{\nu}}\right)^{-1}\PZnu\left(\lambda I+S_{X_S}^{\ast} B_{M,N}S_{X_S}\right)^{\frac{1}{2}}\|_{\HK\rightarrow\HK}\\
&\times \|\left(\lambda I+S_{X_S}^{\ast} B_{M,N}S_{X_S}\right)^{-\frac{1}{2}}\left(\lambda I+J^{\ast}_{T}J_{T}\right)^{\frac{1}{2}}\|_{\HK\rightarrow\HK}\\
&\times\|\left(\lambda I+J^{\ast}_{T}J_{T}\right)^{-\frac{1}{2}}
\left(J_{T}^{\ast}f_q-
S_{X_S}^{\ast} B\overline{y}\right)]\|_{\HK\rightarrow\HK}
\end{split}
\end{eqnarray}
By means of  (\ref{lem:polar}), (\ref{bound:2}), Proposition \ref{Cordes},  Lemma \ref{my:lem5}, and Lemma \ref{my:lem3}, we obtain
\begin{eqnarray}\label{ovsig_31}
\begin{split}
\|\overline{\sigma}_{31}\|_{ L_{2,\rho_{T}}}&\le C  \left[\left(\frac{\mathcal{B}_{m,\lambda}\log\frac{2}{\delta}}{\sqrt{\lambda}}+\left[E_{\beta}+\left(\frac{1}{\sqrt{n}}+\frac{1}{\sqrt{m}}\right)\right]\sqrt{\frac{\mathcal{N}_{\infty}(\lambda)}{\lambda}}\log\frac{2}{\delta}+\frac{1}{2}\right)^{2}+\frac{3}{4}\right]\\
&\times  \sqrt{\frac{\mathcal{N}_{\infty}(\lambda)}{n}}
\log^2\frac{2}{\delta}.
 \end{split}
\end{eqnarray}
Further, we estimate $\overline{\sigma}_{32}.$
\begin{eqnarray}\nonumber
\begin{split}
\|\overline{\sigma}_{32}\|_{ L_{2,\rho_{T}}}&\le\|J_{T}\left(\lambda I+J^{\ast}_{T}J_{T}\right)^{-\frac{1}{2}}\|_{\HK\rightarrow L_{2,\rho_{T}}}\|\left(\lambda I+J^{\ast}_{T}J_{T}\right)^{\frac{1}{2}}\left(\lambda I+S_{X_S}^{\ast} BS_{X_S}\right)^{-\frac{1}{2}}\|_{\HK\rightarrow\HK}\\
&\times\|\left(\lambda I+S_{X_S}^{\ast} BS_{X_S}\right)^{\frac{1}{2}}\left(\lambda I+S_{X_S}^{\ast} B_{M,N}S_{X_S}\right)^{-\frac{1}{2}}\|_{\HK\rightarrow\HK}\\
&\times \|\left(\lambda I+S_{X_S}^{\ast} B_{M,N}S_{X_S}\right)^{\frac{1}{2}}\left(\lambda I+P_{\z^{\nu}}S_{X_S}^{\ast} B_{M,N}S_{X_S}P_{\z^{\nu}}\right)^{-1}\PZnu\left(\lambda I+S_{X_S}^{\ast} B_{M,N}S_{X_S}\right)^{\frac{1}{2}}\|_{\HK\rightarrow\HK}\\
&\times \|\left(\lambda I+S_{X_S}^{\ast} B_{M,N}S_{X_S}\right)^{-\frac{1}{2}}\left(\lambda I+J^{\ast}_{T}J_{T}\right)^{\frac{1}{2}}\|_{\HK\rightarrow\HK}\\
&\times\|\left(\lambda I+J^{\ast}_{T}J_{T}\right)^{-\frac{1}{2}}
\left(S_{X_S}^{\ast} B\overline{y}-S_{X_S}^{\ast} B_{M,N}\overline{y}\right)]\|_{\HK\rightarrow\HK}
\end{split}
\end{eqnarray}
Note that for any $\overline{y}\in\mathbb{R}^{n}$ we have
\begin{eqnarray}\nonumber
 \begin{split}
& \|\left(\lambda I+J^{\ast}_{T}J_{T}\right)^{-\frac{1}{2}}
\left [S_{X_S}^{\ast} B\overline{y}-S_{X_S}^{\ast} B_{M,N}\overline{y}\right]\|_{\HK} 
=\frac{1}{N}\Big\|\sum_{i=1}^{N}\, \left(\lambda I+J^{\ast}_{T}J_{T}\right)^{-\frac{1}{2}} \K(\cdot,x_i)\left(\beta(x_i)-\tilde{\beta}(x_i)\right)y_i\Big\|_{\HK}\\
&\le \kappa^2 y_0\sqrt{\mathcal{N}_{\infty}(\lambda)}\|\beta-\tilde{\beta}_{M,N,\Znu}^{\alpha_{M,N}}\|_{\HK}\le C E_{\beta}\sqrt{\mathcal{N}_{\infty}(\lambda)}.
 \end{split}
\end{eqnarray}
This together with Lemma \ref{lem:polar}, (\ref{eq:prob2}), (\ref{bound:2}), Lemma \ref{my:lem5}, Proposition \ref{Cordes}, and Lemma  \ref{my:lem1} implies that
\begin{eqnarray}\label{ovsig_32}
\begin{split}
\|\overline{\sigma}_{32}\|_{ L_{2,\rho_{T}}}&\le C  \left[\left(\frac{\mathcal{B}_{m,\lambda}\log\frac{2}{\delta}}{\sqrt{\lambda}}+\left[E_{\beta}+\left(\frac{1}{\sqrt{n}}+\frac{1}{\sqrt{m}}\right)\right]\sqrt{\frac{\mathcal{N}_{\infty}(\lambda)}{\lambda}}\log\frac{2}{\delta}+\frac{1}{2}\right)^{2}+\frac{3}{4}\right]\\
&\times E_{\beta}  \sqrt{\mathcal{N}_{\infty}(\lambda)}
\log\frac{2}{\delta}.
 \end{split}
\end{eqnarray}
Summing up (\ref{ovsig_31}) and (\ref{ovsig_32}), we derive
\begin{eqnarray}\label{ovsig_3}
\begin{split}
\|\overline{\sigma}_{3}\|_{ L_{2,\rho_{T}}}&\le C \left[\left(\frac{\mathcal{B}_{m,\lambda}\log\frac{2}{\delta}}{\sqrt{\lambda}}+\left[E_{\beta}+\left(\frac{1}{\sqrt{n}}+\frac{1}{\sqrt{m}}\right)\right]\sqrt{\frac{\mathcal{N}_{\infty}(\lambda)}{\lambda}}\log\frac{2}{\delta}+\frac{1}{2}\right)^{2}+\frac{3}{4}\right] \\
&\times \left[ \sqrt{\frac{\mathcal{N}_{\infty}(\lambda)}{n}} +E_{\beta}  \sqrt{\mathcal{N}_{\infty}(\lambda)}\right]
\log^2\frac{2}{\delta}.
 \end{split}
\end{eqnarray}
Combining (\ref{sigma1}), (\ref{ov_sig2}) and (\ref{ovsig_3}), with probability at least $1-\delta$ we finally obtain
\begin{eqnarray}\nonumber
\begin{split}
\|f_q-J_Tf_{\z,\Znu}^{\lambda_{M,N}}\|_{L_{2},\rho_{T}}&\le C\varphi(\lambda)
+ C \overline{\mathcal{D}}^{\lambda}_{m,n}
\frac{\mathcal{B}_{n,\lambda}}{\sqrt{\lambda}}
\varphi(\lambda)\log^2\frac{2}{\delta}\\
&+C\overline{\mathcal{D}}^{\lambda}_{m,n}
 \left[E_{\beta} +\left(\frac{1}{\sqrt{n}}+\frac{1}{\sqrt{m}}\right)\right] \sqrt{\frac{\mathcal{N}_{\infty}(\lambda)}{\lambda}}
\varphi(\lambda)\log^2\frac{2}{\delta}\\
&+C \overline{\mathcal{D}}^{\lambda}_{m,n}
\sqrt{\lambda} \left[E_{\beta} + \left(\frac{1}{\sqrt{n}}+\frac{1}{\sqrt{m}}\right)\right] \sqrt{\frac{\mathcal{N}_{\infty}(\lambda)}{\lambda}}
\log^2\frac{2}{\delta}.
 \end{split}
\end{eqnarray}
The proposition  is completely proved.
\end{proof}
\subsection{Refinement of main result}
Recall, $E_{\beta}:=\|\beta-\tilde{\beta}\|_{\HK}$ and
assume that  $\beta=\frac{d\rho_T}{d\rho_S}$ satisfies Assumption 
\ref{source_beta}.

The following result will be used in the sequel.
\begin{proposition}\cite[Corollary 4.4]{SolMylRecov25}\label{col_HK}
Denote $\theta_{\psi,\zeta}(t)=\frac{t\psi(t)}{\zeta(t)}$ and $\alpha=\alpha_{N,M}:=\theta_{\psi,\zeta}^{-1}\left(\frac{1}{\sqrt{N}}+\frac{1}{\sqrt{M}}\right),$ then with probability at least $1-\delta$ it holds
\begin{equation}\label{err_HK1}
E_{\beta}\le C\psi\left(\theta_{\psi,\zeta}^{-1}\left(\frac{1}{\sqrt{N}}+\frac{1}{\sqrt{M}}\right)\right)\log^2\frac{2}{\delta}.
\end{equation}
\end{proposition}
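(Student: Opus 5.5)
The statement is the error bound $E_\beta=\|\beta-\tilde\beta\|_{\HK}\le C\psi(\alpha_{N,M})\log^2\frac{2}{\delta}$ for the estimator $\tilde\beta$ of the Radon--Nikodym derivative. Here $\tilde\beta$ is the regularized Nystr\"om kernel-mean-matching approximation built from the $N$ source points and $M$ target points with parameter $\alpha$. I would follow the standard bias/variance splitting in $\HK$. Write $\tilde\beta=(\alpha I+\PZnu S_{X_S}^*S_{X_S}\PZnu)^{-1}\PZnu S_{X_S}^*\hat g$, where $\hat g$ is the empirical embedding of the target sample, so that $S_{X_S}^*\hat g$ approximates $J_S^*J_S\beta=J_T^*\mathbf 1$. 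Then decompose
\[
\beta-\tilde\beta=\bigl[\beta-(\alpha I+\PZnu J_S^*J_S\PZnu)^{-1}\PZnu J_S^*J_S\beta\bigr]+\bigl[\text{operator perturbation}\bigr]+\bigl[\text{data perturbation}\bigr],
\]
which mirrors the splitting $\overline\sigma_1,\overline\sigma_2,\overline\sigma_3$ used in the proof of Proposition \ref{result_main}, but measured in the $\HK$ norm.

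\emph{Bias term.} Use Assumption \ref{source_beta}, $\beta=\psi(J_S^*J_S)\mu_\beta$. Apply the qualification inequality (\ref{cond_qualific}) for $\psi$, together with the projection estimates (\ref{eq:prob10}) and (\ref{saturat}) (with $J_S$ in place of $J_T$), to control $(I-\PZnu)\psi(J_S^*J_S)$ by $C\psi(3\alpha)\le C\psi(\alpha)$. This gives a bias of order $\psi(\alpha)$.

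\emph{Variance terms.} Multiply and divide by $(\alpha I+J_S^*J_S)^{\pm1/2}$. Use Lemma \ref{lem:add0}-type bounds for the projected resolvent. Apply Lemma \ref{hu:06} with the map $x\mapsto(\alpha I+J_S^*J_S)^{-1/2}\K(\cdot,x)$, whose norm is bounded by $\sqrt{\mathcal N_\infty(\alpha)}$; this gives a bound $C\sqrt{\mathcal N_\infty(\alpha)}(N^{-1/2}+M^{-1/2})\log\frac{2}{\delta}$. Since the target is an $\HK$-norm bound, one extra factor $\alpha^{-1/2}$ appears from $\|(\alpha I+\ldots)^{-1/2}\|$. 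Lemma \ref{lem:dem} then yields $\sqrt{\mathcal N_\infty(\alpha)/\alpha}\le C\zeta(\alpha)/\alpha$. The operator-perturbation term carries an additional factor $\|\tilde\beta$-intermediate$\|\lesssim\psi(\alpha)$, so it is dominated by the same quantity. The second $\log$ factor comes from combining two high-probability events, namely the resolvent comparison and the concentration step.

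\emph{Balancing.} The total bound is $C\bigl(\psi(\alpha)+\frac{\zeta(\alpha)}{\alpha}(N^{-1/2}+M^{-1/2})\bigr)\log^2\frac{2}{\delta}$. The choice $\theta_{\psi,\zeta}(\alpha)=\alpha\psi(\alpha)/\zeta(\alpha)=N^{-1/2}+M^{-1/2}$ makes the two terms equal, which gives (\ref{err_HK1}).

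The main obstacle I expect is the resolvent comparison $\|(\alpha I+J_S^*J_S)^{1/2}(\alpha I+S_{X_S}^*S_{X_S})^{-1/2}\|\le C$. It needs $\alpha$ to be large enough relative to $N$ (cf. (\ref{par_choice})), and the choice of $\alpha$ must be verified to satisfy this condition. I would first try the Neumann-series argument behind (\ref{bound:2}), applied to the source operator.
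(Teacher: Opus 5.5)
The paper gives no proof of this statement. It is imported from \cite[Corollary 4.4]{SolMylRecov25} and used as a black box. Your plan is the natural way to derive it: an $\HK$-norm bias of order $\psi(\alpha)$, a sample error of order $\frac{\zeta(\alpha)}{\alpha}\bigl(N^{-1/2}+M^{-1/2}\bigr)$ via Lemma~\ref{lem:dem}, and balancing through $\theta_{\psi,\zeta}$. The balancing computation is correct. Two of your justifications are wrong, although the final bound survives. Write $A=J_S^*J_S$ and $A_\alpha=\alpha I+A$.

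First, the intermediate approximant $(\alpha I+\PZnu A\PZnu)^{-1}\PZnu A\beta$ does not have $\HK$-norm $O(\psi(\alpha))$. Since $\beta\in\HK$, it tends to $\beta$, and its norm is only bounded by $\|\beta\|_{\HK}+C\psi(\alpha)$. With this $O(1)$ factor the operator-perturbation term is $C\frac{\zeta(\alpha)}{\alpha}N^{-1/2}\log^2\frac{2}{\delta}$, which is still of sample-error order.

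Second, the bias is not controlled by (\ref{cond_qualific}) and (\ref{saturat}) alone. Qualification cannot be applied directly to $\alpha(\alpha I+\PZnu A\PZnu)^{-1}\PZnu\psi(A)\mu_\beta$, because $\psi(A)\neq\psi(\PZnu A\PZnu)$. Operator monotonicity only gives $\psi(\|A-\PZnu A\PZnu\|)$, and $\|A-\PZnu A\PZnu\|$ is in general only $O(\sqrt{\alpha})$. In $L_{2,\rho_T}$ the paper avoids this through $\|J_TJ_T^*-J_T\PZnu J_T^*\|=\|J_T(I-\PZnu)\|^2\le 3\lambda$, which has no $\HK$ counterpart. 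Use instead the exact splitting
\[
\beta-(\alpha I+\PZnu A\PZnu)^{-1}\PZnu A\beta=(I-\PZnu)\beta+\alpha(\alpha I+\PZnu A\PZnu)^{-1}\PZnu\beta-(\alpha I+\PZnu A\PZnu)^{-1}\PZnu A(I-\PZnu)\beta .
\]
The middle term is at most $\|\alpha(\alpha I+\PZnu A\PZnu)^{-1}\PZnu A_\alpha^{1/2}\|\,\|A_\alpha^{-1/2}\psi(A)\|\,\tilde\varkappa\le\sqrt{\alpha}\cdot\frac{\psi(\alpha)}{\sqrt{\alpha}}\,\tilde\varkappa$. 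The second factor uses concavity of $\psi^2$, exactly as in (\ref{norm_forL2}). The last term is at most $\|(\alpha I+\PZnu A\PZnu)^{-1}\PZnu A^{1/2}\|\,\|A^{1/2}(I-\PZnu)\|\,\|(I-\PZnu)\beta\|\le\frac{\sqrt3}{2}\|(I-\PZnu)\beta\|$, using (\ref{eq:prob10}) for $J_S$. Then (\ref{saturat}) gives $\|(I-\PZnu)\beta\|\le C\psi(3\alpha)\le C\psi(\alpha)$.

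Smaller points:
\begin{itemize}
\item The resolvent comparison you single out as the main obstacle is automatic for this $\alpha$. By Lemma~\ref{lem:dem}, $\sqrt{\mathcal N_\infty(\alpha)/\alpha}\,(N^{-1/2}+M^{-1/2})\le C\frac{\zeta(\alpha)}{\alpha}(N^{-1/2}+M^{-1/2})=C\psi(\alpha)\le C\psi(l)$. The argument of Lemma~\ref{my:lem1} then gives $\|A_\alpha^{1/2}(\alpha I+S_{X_S}^*S_{X_S})^{-1/2}\|\le C\log\frac{2}{\delta}$.
\item Multiplying that factor by the $\log\frac{2}{\delta}$ from Lemma~\ref{hu:06} is what produces $\log^2\frac{2}{\delta}$. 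A union bound over two events only replaces $\delta$ by $\delta/2$; it does not square the logarithm.
\item $\mathcal N_\infty$ and Lemma~\ref{lem:dem} refer to $J_T^*J_T$, whereas your resolvents involve $J_S^*J_S$. The transfer follows from $J_T^*J_T\le b_0J_S^*J_S$ (since $d\rho_T=\beta\,d\rho_S$ and $\beta\le b_0$). This gives $\langle\K(\cdot,x),A_\alpha^{-1}\K(\cdot,x)\rangle_{\HK}\le b_0\mathcal N_x(b_0\alpha)\le C\zeta^2(\alpha)/\alpha$.
\item The data vector of the estimator is the target mean embedding $\frac1M\sum_{j=1}^M\K(\cdot,x_j')$, not $S_{X_S}^*\hat g$. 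Lemma~\ref{hu:06} then bounds $A_\alpha^{-1/2}\bigl(\frac1M\sum_{j}\K(\cdot,x_j')-\frac1N\sum_i\beta(x_i)\K(\cdot,x_i)\bigr)$ directly.
\end{itemize}
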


By means of Proposition \ref{col_HK}, we reformulate the estimate from Proposition \ref{result_main} as follows
 \begin{eqnarray}\nonumber
\begin{split}
&\|f_q-J_Tf_{\z,\Znu}^{\lambda_{M,N}}\|_{L_{2},\rho_{T}}\le C\varphi(\lambda)\\
&+C\overline{\mathcal{D}}^{\lambda}_{m,n}
\left[\psi\left(\theta_{\psi,\zeta}^{-1}\left(\frac{1}{\sqrt{N}}+\frac{1}{\sqrt{M}}\right)\right) +\left(\frac{1}{\sqrt{n}}+\frac{1}{\sqrt{m}}\right)\right]  \sqrt{\frac{\mathcal{N}_{\infty}(\lambda)}{\lambda}}
\varphi(\lambda)\log^2\frac{2}{\delta}\\
&\le C
 \left(1+  \overline{\mathcal{D}}^{\lambda}_{m,n}\left[\psi\left(\theta_{\psi,\zeta}^{-1}\left(\frac{1}{\sqrt{N}}+\frac{1}{\sqrt{M}}\right)\right) +\left(\frac{1}{\sqrt{n}}+\frac{1}{\sqrt{m}}\right)\right]  \sqrt{\frac{\mathcal{N}_{\infty}(\lambda)}{\lambda}}\right) \varphi(\lambda)\log^2\frac{2}{\delta}.
 \end{split}
\end{eqnarray}
Applying Lemma \ref{lem:dem} we get
\begin{eqnarray}\nonumber
\begin{split}
\|f_q-J_Tf_{\z,\Znu}^{\lambda_{M,N}}\|_{L_{2},\rho_{T}}&\le \overline{C}\left(1+  \left[\psi\left(\theta_{\psi,\zeta}^{-1}\left(\frac{1}{\sqrt{N}}+\frac{1}{\sqrt{M}}\right)\right) +\left(\frac{1}{\sqrt{n}}+\frac{1}{\sqrt{m}}\right)\right]\frac{ \zeta(\lambda)}{\lambda}\right) \varphi(\lambda)\log^2\frac{2}{\delta}.
 \end{split}
\end{eqnarray}
Thus, we proof the following statement.
\begin{theorem}\label{main}
Assume that in the plain Nystr\"om subsampling the values $|\Znu|$ and $\lambda$ satisfy (\ref{sample_choice}), (\ref{par_choice}), correspondingly. If $f_q$ satisfies Assumption \ref{source_f}, then with probability at least $1-\delta$ it holds 
\begin{eqnarray}\nonumber
\begin{split}
\|f_q-J_Tf_{\z,\Znu}^{\lambda_{M,N}}\|_{L_{2},\rho_{T}}&\le \overline{C}\left(1+  \left[\psi\left(\theta_{\psi,\zeta}^{-1}\left(\frac{1}{\sqrt{N}}+\frac{1}{\sqrt{M}}\right)\right) +\left(\frac{1}{\sqrt{n}}+\frac{1}{\sqrt{m}}\right)\right]\frac{ \zeta(\lambda)}{\lambda}\right) \varphi(\lambda)\log^2\frac{2}{\delta}.
 \end{split}
\end{eqnarray}
\end{theorem}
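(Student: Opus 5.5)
The plan is to take Proposition \ref{result_main} as the starting point and simplify each term using Lemma \ref{lem:LuPer}, Lemma \ref{lem:dem} and Proposition \ref{col_HK}. Throughout we work with $\lambda\ge\lambda_{*}$; this is implicit in the regime of Theorem \ref{main_HK}, and otherwise (\ref{par_choice}) together with $m$ large enough gives it. The failure probabilities of Proposition \ref{result_main} and Proposition \ref{col_HK} are combined by a union bound with $\delta/2$ each, which only changes the constants.

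First, I would bound $\overline{\mathcal{D}}^{\lambda}_{m,n}$ by a constant times a power of $\log\frac{2}{\delta}$. By (\ref{est:G}), $\mathcal{B}_{m,\lambda}/\sqrt{\lambda}$ is bounded by an absolute constant. For the second summand inside $\overline{\mathcal{D}}^{\lambda}_{m,n}$, Lemma \ref{lem:dem} gives
$$\sqrt{\frac{\mathcal{N}_\infty(\lambda)}{\lambda}}\le C\frac{\zeta(\lambda)}{\lambda}.$$
I would then assume, as in the choice rule (\ref{xi_eq1}) and its analogue with $E_\beta$, that $\lambda$ is not below the point where
$$\Big[E_\beta+\tfrac{1}{\sqrt n}+\tfrac{1}{\sqrt m}\Big]\frac{\zeta(\lambda)}{\lambda}\lesssim 1 .$$
If this fails, the bracket term in the theorem exceeds $1$, and one instead keeps $\overline{\mathcal{D}}^{\lambda}_{m,n}$ as a factor of order $(1+\text{bracket})^2$ absorbed into $\overline C$. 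With the assumption in place, $\overline{\mathcal{D}}^{\lambda}_{m,n}\le C\log^2\frac{2}{\delta}$, and the resulting log powers are absorbed into $\overline C$ and $\log^2\frac{2}{\delta}$ in the same loose fashion the paper uses for Theorem \ref{main_HK}.

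Next, I would treat the four terms of Proposition \ref{result_main} one at a time.
\begin{itemize}
\item The first term is $C\varphi(\lambda)$.
\item The second term has $\mathcal{B}_{m,\lambda}/\sqrt\lambda\le C$, so it contributes $C\varphi(\lambda)$ times logs.
\item The third term becomes $C[E_\beta+n^{-1/2}+m^{-1/2}]\,\frac{\zeta(\lambda)}{\lambda}\,\varphi(\lambda)$ after applying Lemma \ref{lem:dem}.
\item The fourth term equals $\sqrt\lambda\,[\dots]\sqrt{\mathcal{N}_\infty(\lambda)/\lambda}$. To dominate it by the third term I need $\sqrt\lambda\le C\varphi(\lambda)$. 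This holds in the misspecified case because $\sqrt t/\varphi(t)$ is nondecreasing on $[0,l]$, hence bounded by $\sqrt l/\varphi(l)$.
\end{itemize}
Finally, I would replace $E_\beta$ using (\ref{err_HK1}) of Proposition \ref{col_HK}, and collect terms as $\overline{C}\,(1+[\dots]\zeta(\lambda)/\lambda)\,\varphi(\lambda)\log^2\frac{2}{\delta}$.

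The main obstacle I expect is controlling $\overline{\mathcal{D}}^{\lambda}_{m,n}$ uniformly. It enters multiplicatively and contains the same bracket squared, so a naive bound produces a cubic power of $[\dots]\zeta(\lambda)/\lambda$ rather than a linear one. The step to get right is therefore the restriction to the regime where this bracket is $O(1)$; outside it, the bound is trivial after enlarging $\overline C$, using that the error is a priori bounded for $\lambda\ge\lambda_*$ by the previous estimates. The other delicate point is tracking the powers of $\log\frac{2}{\delta}$, which I would simply dominate as described above.
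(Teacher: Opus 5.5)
Your main line is the paper's own argument. You start from Proposition \ref{result_main}, replace $E_\beta$ using Proposition \ref{col_HK}, convert $\sqrt{\mathcal{N}_\infty(\lambda)/\lambda}$ into $\zeta(\lambda)/\lambda$ with Lemma \ref{lem:dem}, and treat $\overline{\mathcal{D}}^{\lambda}_{m,n}$ as harmless. You are more careful than the paper, which drops the second and fourth terms of Proposition \ref{result_main} without comment. Your disposal of them is sound: $\mathcal{B}_{m,\lambda}/\sqrt{\lambda}=O(1)$ for the second, and $\sqrt{\lambda}\le C\varphi(\lambda)$, from the monotonicity of $\sqrt{t}/\varphi(t)$, for the fourth. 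The union bound is also fine.

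The step that fails is your handling of the regime where $X:=[E_\beta+n^{-1/2}+m^{-1/2}]\,\zeta(\lambda)/\lambda$ is large. There Proposition \ref{result_main} only gives a bound cubic in $X$, as you note yourself, and neither of your fallbacks repairs this.
\begin{itemize}
\item A factor $(1+X)^2$ depends on $n,m,N,M,\lambda$, so it cannot be absorbed into $\overline{C}$.
\item An a priori bound on the error would not help either. You do not actually have one: the ``previous estimates'' are exactly the cubic ones. And even an $O(1)$ bound is useless, because the target right-hand side $\overline{C}(1+X)\varphi(\lambda)\log^2\frac{2}{\delta}$ is not bounded below. For power-type $\varphi,\zeta$ and $\lambda$ below the balancing value of (\ref{xi_eq2}), still within (\ref{par_choice}), it can tend to $0$ while $X\to\infty$ and $X^{3}\varphi(\lambda)\to\infty$.
\end{itemize}
The reduction to $\lambda\ge\lambda_*$ is likewise an extra hypothesis. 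Condition (\ref{par_choice}) only gives $\lambda\gtrsim n^{-1}\log\frac{n}{\delta}$, while $\lambda_*$ is fixed by $\mathcal{N}(\lambda_*)=m\lambda_*$, so you would need $m$ large relative to $n$. Inside the regime $X\lesssim 1$ you do not need $\lambda\ge\lambda_*$ at all: Lemma \ref{lem:dem} together with $\sqrt{\lambda}\lesssim\zeta(\lambda)$ gives $\mathcal{B}_{m,\lambda}/\sqrt{\lambda}\le C(X+X^2)$.

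So what your argument actually proves is the stated bound under the additional assumption $X=O(1)$. That is precisely the regime selected by (\ref{xi_eq2}). The paper's proof proves no more than this either, since it simply absorbs $\overline{\mathcal{D}}^{\lambda}_{m,n}$ into $\overline{C}$. You should state $X=O(1)$ as a hypothesis rather than claim the complementary case is trivial. Similarly, honest bookkeeping gives a power of $\log\frac{2}{\delta}$ higher than $2$: the factors from $\overline{\mathcal{D}}^{\lambda}_{m,n}$, from Proposition \ref{result_main}, and from (\ref{err_HK1}) multiply. Reaching exactly $\log^2\frac{2}{\delta}$ relies on the same looseness as in the paper.
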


According to  Theorem \ref{main}, the parameter $\lambda$ should be selected as
$$\left[\psi\left(\theta_{\psi,\zeta}^{-1}\left(\frac{1}{\sqrt{N}}+\frac{1}{\sqrt{M}}\right)\right) +\left(\frac{1}{\sqrt{n}}+\frac{1}{\sqrt{m}}\right)\right]\frac{ \zeta(\lambda)}{\lambda}=1$$ or
\begin{equation}\label{xi_eq2}
\frac{\lambda}{ \zeta(\lambda)}=\psi\left(\theta_{\psi,\zeta}^{-1}\left(\frac{1}{\sqrt{N}}+\frac{1}{\sqrt{M}}\right)\right) +\left(\frac{1}{\sqrt{n}}+\frac{1}{\sqrt{m}}\right).
\end{equation}

Thus, it leads to the following statement.
\begin{corollary}
 If $f_q$ meets Assumption \ref{source_f} with $\varphi(t)=t^{\gamma},$ $\beta$ meets Assumption \ref{source_cond_beta} with $\psi(t)=t^{\eta},$ and $\K$ meets Assumption \ref{source_ker} with $\zeta(t)=t^{r},$ where $\gamma, \eta, r \in (0,\frac{1}{2}],$ then
 \begin{eqnarray}\label{pover_func2}
\begin{split}
\|f_q-J_Tf_{\z,\Znu}^{\lambda_{M,N}}\|_{L_{2},\rho_{T}}&\le \overline{C}\left[\left(\frac{1}{\sqrt{N}}+\frac{1}{\sqrt{M}}\right)^{\frac{\eta}{1+\eta-r}}+\left(\frac{1}{\sqrt{n}}+\frac{1}{\sqrt{m}}\right)\right]^{\frac{\gamma}{1-r}}\log^2\frac{2}{\delta}.
 \end{split}
\end{eqnarray}
\end{corollary}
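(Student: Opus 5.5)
The plan is to specialize Theorem \ref{main} to power-type index functions and choose $\lambda$ by the balancing rule (\ref{xi_eq2}), in the same way Corollary \ref{col_1} was derived from Theorem \ref{main_HK}. Write $\varepsilon_1:=\frac{1}{\sqrt{N}}+\frac{1}{\sqrt{M}}$ and $\varepsilon_2:=\frac{1}{\sqrt{n}}+\frac{1}{\sqrt{m}}$.

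\emph{Step 1: the density-ratio error.} With $\psi(t)=t^{\eta}$ and $\zeta(t)=t^{r}$ we have
\[
\theta_{\psi,\zeta}(t)=\frac{t\psi(t)}{\zeta(t)}=t^{1+\eta-r}.
\]
This function is strictly increasing since $1+\eta-r>0$, so $\theta_{\psi,\zeta}^{-1}(s)=s^{1/(1+\eta-r)}$. Proposition \ref{col_HK} then gives
\[
\psi\bigl(\theta_{\psi,\zeta}^{-1}(\varepsilon_1)\bigr)=\varepsilon_1^{\eta/(1+\eta-r)}.
\]
Denote by $\Lambda:=\varepsilon_1^{\eta/(1+\eta-r)}+\varepsilon_2$ the bracket appearing in Theorem \ref{main}.

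\emph{Step 2: choice of $\lambda$.} Equation (\ref{xi_eq2}) becomes $\lambda/\zeta(\lambda)=\lambda^{1-r}=\Lambda$. Its unique solution is $\lambda_0=\Lambda^{1/(1-r)}$, which is well defined because $r\le\frac12<1$. At $\lambda=\lambda_0$ the factor $\Lambda\,\zeta(\lambda_0)/\lambda_0$ equals $1$. Theorem \ref{main} therefore reduces to
\[
\|f_q-J_Tf_{\z,\Znu}^{\lambda_{M,N}}\|_{L_{2},\rho_{T}}\le 2\overline{C}\varphi(\lambda_0)\log^2\frac{2}{\delta}=2\overline{C}\,\Lambda^{\gamma/(1-r)}\log^2\frac{2}{\delta},
\]
which is (\ref{pover_func2}) after renaming the constant.

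\emph{Step 3: admissibility of $\lambda_0$.} This is the only non-routine point. Theorem \ref{main} is stated under (\ref{sample_choice}) and (\ref{par_choice}), and its derivation implicitly uses $\lambda\ge\lambda_*$ to control $\overline{\mathcal{D}}^{\lambda}_{m,n}$ via Lemma \ref{lem:LuPer}. I would check these as follows.
\begin{itemize}
\item Since $\Lambda\ge n^{-1/2}$ and $1/(1-r)\le 2$, we get $\lambda_0\ge n^{-1}$ up to constants. This matches (\ref{par_choice}) up to the logarithmic factor, which is absorbed into $C$ for $n$ large or handled by the convention on generic constants.
\item The subsample condition (\ref{sample_choice}) is an assumption on $|\Znu|$. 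By Lemma \ref{lem:dem} it is met as soon as $|\Znu|\ge C\lambda_0^{2r-1}\log\frac{1}{\lambda_0}\log\frac{1}{\delta}$.
\item For boundedness of $\overline{\mathcal{D}}$, note that $\mathcal{N}_\infty(\lambda_0)/\lambda_0\le C\lambda_0^{2r-2}=C\Lambda^{-2}$. Hence $[E_\beta+\varepsilon_2]\sqrt{\mathcal{N}_\infty(\lambda_0)/\lambda_0}\le C$, and this term stays bounded. The term $\mathcal{B}_{m,\lambda_0}/\sqrt{\lambda_0}$ is bounded by (\ref{est:G}) whenever $\lambda_0\ge\lambda_*$.
\end{itemize}
I expect the main obstacle to be making this last requirement explicit, i.e.\ showing $\lambda_0\ge\lambda_*$ (or simply assuming it, as in Theorem \ref{main_HK}). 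Once it holds, all $\log$ factors combine into $\log^2\frac{2}{\delta}$, which completes the proof.
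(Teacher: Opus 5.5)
Your Steps 1--2 are exactly the paper's proof: it computes $\theta_{\psi,\zeta}^{-1}(s)=s^{1/(1+\eta-r)}$, solves (\ref{xi_eq2}) as $\lambda^{1-r}=\Lambda$, and substitutes $\lambda_0=\Lambda^{1/(1-r)}$ into Theorem \ref{main}, so the argument is correct and follows the paper's route. Your Step 3 checks conditions the paper never checks, and one claim there is wrong: for $r=\frac12$, $N,M$ large and $m\gtrsim n$ you get $\lambda_0\asymp\varepsilon_2^2\asymp n^{-1}$, which misses (\ref{par_choice}) by the factor $\log\frac{n}{\delta}$; that factor cannot be absorbed into a constant independent of $n$, so, like $\lambda_0\ge\lambda_*$, it must be imposed as an extra assumption rather than derived.
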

\begin{proof}
Due to Proposition \ref{col_HK} we have
$$
\alpha^{1+\eta-r}_{M,N}=\frac{1}{\sqrt{N}}+\frac{1}{\sqrt{M}} \quad
\Rightarrow \qquad \alpha_{M,N}=\left(\frac{1}{\sqrt{N}}+\frac{1}{\sqrt{M}}\right)^{\frac{1}{1+\eta-r}}.
$$
From (\ref{xi_eq2}) it follows
$$
\lambda^{r-1}=\left[\left(\frac{1}{\sqrt{N}}+\frac{1}{\sqrt{M}}\right)^{\frac{\eta}{1+\eta-r}}+\left(\frac{1}{\sqrt{n}}+\frac{1}{\sqrt{m}}\right)\right]^{-1}  $$ or
$$  \lambda = \left[\left(\frac{1}{\sqrt{N}}+\frac{1}{\sqrt{M}}\right)^{\frac{\eta}{1+\eta-r}}+\left(\frac{1}{\sqrt{n}}+\frac{1}{\sqrt{m}}\right)\right]^{\frac{1}{1-r}}.
$$
Hence, applying Theorem \ref{result_main} we get
\begin{eqnarray}\nonumber
\begin{split}
\|f_q-J_Tf_{\z,\Znu}^{\lambda_{M,N}}\|_{L_{2},\rho_{T}}&\le \overline{C}\left[\left(\frac{1}{\sqrt{N}}+\frac{1}{\sqrt{M}}\right)^{\frac{\eta}{1+\eta-r}}+\left(\frac{1}{\sqrt{n}}+\frac{1}{\sqrt{m}}\right)\right]^{\frac{\gamma}{1-r}}\log^2\frac{2}{\delta}.
 \end{split}
\end{eqnarray}
\end{proof}
\begin{remark}
Let's compare accuracy estimates of approximation $f_q$ in the case when the Radon-Nikodym derivative $\beta$ is given (see (\ref{pover_func1})) and in the case of the unknown  Radon-Nikodym derivative  (\ref{pover_func2}). It is easy to see, that in  (\ref{pover_func2}) the extra term
\begin{equation}\label{add_term}
\left(\frac{1}{\sqrt{N}}+\frac{1}{\sqrt{M}}\right)^\frac{\eta}{1+\eta-r}
\end{equation}
 is occurred.
This is due to necessity of the additional discretization parameters  $M$ and $N$ in the case of the unknown  Radon-Nikodym derivative. If $M$ and $N$ are chosen  such that $\frac{1}{\sqrt{N}}+\frac{1}{\sqrt{M}}=O\left((\frac{1}{\sqrt{n}}+\frac{1}{\sqrt{m}})^{\frac{1+\eta-r}{\eta}}\right)$ then it allows us to guarantee the same order of accuracy as in (\ref{pover_func1}). Moreover, such choice ensure the minimum sample size $M$ and $N$ that preserve the convergence rate from Corollary \ref{col_1}, 
\end{remark}

\section{Numerical illustration}
In this section we provide  numerical experiments that support our theoretical results. The experiments were conducted in MATLAB on a system equipped with an Intel® Xeon® Platinum 8280 CPU @ 2.70 GHz, 128 GB of RAM, running Windows Server 2022, without the use of a GPU.\\ \\

Let $s\in(0,1).$ In view of \cite{Slob}, we define  the fractional  Sobolev space $W_2^{s}(0,1)$ as follows
 \begin{eqnarray}\label{sobol_space_gen}
\mathcal{W}_2^{s}(0,1):=\left\{u\in L_2(0,1)\colon \int_{0}^{1}\int_{0}^{1} \frac{|u(x)-u(y)|^{2}}{|x-y|^{n+2s}}dxdy<\infty\right\}
\end{eqnarray}
which endowed with the norm 
\begin{equation}\label{sobol_norm_gen}
\|u\|^{2}_{\mathcal{W}_2^{s}(0,1)}=\|u\|^{2}_{L_2(0,1)}+[u]^{2}_{\mathcal{W}_2^{s}(0,1)}=\int_{0}^{1} u^{2}(x)dx+
 \int_{0}^{1}\int_{0}^{1} \frac{|u(x)-u(y)|^{2}}{|x-y|^{n+2s}}dxdy.
\end{equation}
It is worth noting that  fractional Sobolev spaces are also referred to as Aronszajn, Gagliardo, or Sobolev-Slobodeckij spaces (see \cite{Slob}, \cite{Aron}, \cite{Gag}).

We choose the kernel as 
\begin{equation}\label{fBm_V}
\K(x,y)=1+\frac{1}{2}\left(x^{\frac{1}{2}}+y^{\frac{1}{2}}-|x-y|^{\frac{1}{2}}\right), \quad x,y\in[0,1].
\end{equation} 
It is known (see, e.g., \cite{MolchanGolosov69}, \cite{AlpLevan08}, \cite{Picard11}) that the kernel (\ref{fBm_V}) generates RKHS $\HK=\mathcal{W}_{2}^{\frac{3}{4}}(0,1),$ which is the fractional Sobolev space (\ref{sobol_space_gen}) with $s=\frac{3}{4}$.
In what follows we will use the fact that any function from the Sobolev space $\mathcal{W}_{2}^{s}(0,1)$ is continuous whenever $s > \frac{1}{2}$ (see, e.g., \cite{AdamsFournier}). 
More precise, to treat our condition $f\notin \HK,$ we are going to approximate discontinuous functions by continuous functions generated by the kernel (\ref{fBm_V}). Namely, we take
\begin{equation}\label{exact_func}
f(x)=\left\{ \begin{array}{cl}
 (x-0.5)^2  +0.5(x-0.5)+1, &  0\le x\le 0.5, \\ \\
(x-0.5)^4 -4(x-0.5)^2+2(x-0.5)+ 0.75, & \quad 0.5\le x\le1,
\end{array} \right.
\end{equation}
which has a discontinuity in $x=0.5$

Further, we simulate input-output pairs. Let's inputs $\X_{T}=\{x_1^{'},x_2^{'},\ldots, x_m^{'}\}$ in the target domain are sampled from the continuous uniform distribution $\rho_T\sim U(0,1)$ over $[0,1],$ and   
in the source domain $\X_{S}=\{x_1,x_2,\ldots, x_n\}$ are sampled from the beta distribution $\rho_S\sim B(\frac{1}{2},1)$. In this case (see, e.g. \cite{NgPer23}), the Radon-Nikodym derivative is known to be
$$
\beta(x)=2\sqrt{x}, \quad x\in[0,1].
$$
The outputs are simulated as follows:
$$
y_{i}=f(x_i)+\epsilon_i,\quad i=1,2,\ldots,n,
$$
where $\epsilon_i$ are zero-mean Gaussian random variables with standard deviation $\delta,$ and $f$ is defined by (\ref{exact_func}).

The numerical tests were conducted with $N = n = m = M = \{10^{4}, 10^{5}, 10^{6}\}$. In view of Theorem~\ref{main}, the regularization parameter $\lambda$ is of order $O(N^{-1}).$ Correspondingly, the optimal theoretical expected error is of order $O(10^{-2})$ for $N = 10^{4}$ and $O(10^{-3})$ for $N = 10^{5}$ and $N=10^{6}$.
The summary of the performance is presented in tables and plots below.  \\ \\
As can be seen from the plots, the main error is concentrated near the point of discontinuity.
\begin{figure}[ht]
\centering
\begin{minipage}{0.32\textwidth}
\centering
\includegraphics[width=\linewidth]{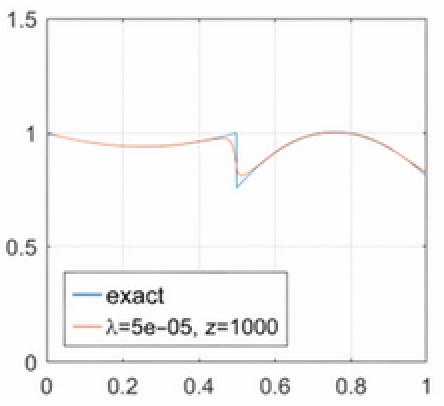}
(a)
\end{minipage}
\hfill
\begin{minipage}{0.32\textwidth}
\centering
\includegraphics[width=\linewidth]{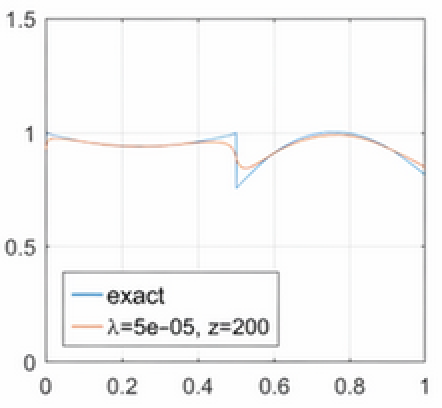}
(b) 
\end{minipage}
\hfill
\begin{minipage}{0.32\textwidth}
\centering
\includegraphics[width=\linewidth]{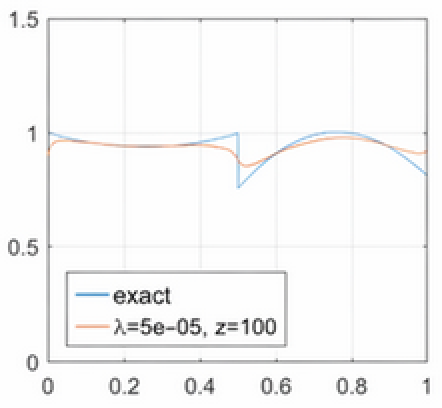}
(c) 
\end{minipage}
\end{figure}

The plots show simulations for $N=10^{5}, \, \lambda= 5\times 10^{-5},$ and  $\Znu=1000$ (plot (a)), $ \Znu=200$  (plot (b)), and $\Znu=100$  (plot (c)), correspondingly.

\begin{table}[H]
\label{tab:znu1}
\centering
\Large
\begin{tabular}{|c|c|c|c|c|c|}

\hline
 & $N$ & \multicolumn{4}{c|}{$\Znu$} \\
\cline{2-6}
$\lambda$ & $10^4$ & $10^3$ & $5\times10^2$ & $2\times10^2$ & $10^2$ \\
\hline
$10^{-4}$          & 0.007 & 0.017 & 0.021 & 0.029 & 0.041 \\
\hline
$5\times10^{-5}$   & 0.006 & 0.014 & 0.017 & 0.022 & 0.033 \\
\hline
$2.5\times10^{-5}$ & 0.005 & 0.012 & 0.014 & 0.018 & 0.027 \\
\hline
\end{tabular}
\caption*{Table 1}
\end{table}

\begin{table}[H]
\label{tab:znu2}
\centering
\Large
\begin{tabular}{|c|c|c|c|c|c|}
\hline
 & $N$ & \multicolumn{4}{c|}{$\Znu$} \\
\cline{2-6}
$\lambda$ & $10^5$ & $10^4$ & $5 \times 10^3$ & $2 \times 10^3$ & $10^3$ \\
\hline
$10^{-5}$ & 0.003 & 0.009 & 0.011 & 0.015 & 0.018 \\
\hline
$5 \times 10^{-6}$ & 0.003 & 0.007 & 0.009 & 0.012 & 0.015 \\
\hline
$2.5 \times 10^{-6}$ & 0.002 & 0.006 & 0.007 & 0.009 & 0.012 \\
\hline
\end{tabular}
\caption*{Table 2}
\end{table}

\begin{table}[H]
\label{tab:znu3}
\centering
\Large
\begin{tabular}{|c|c|c|c|c|c|}
\hline
 & $N$ & \multicolumn{4}{c|}{$\Znu$} \\
\cline{2-6}
$\lambda$ & $10^6$ & $10^5$ & $5 \times 10^4$ & $2 \times 10^4$ & $10^4$ \\
\hline
$10^{-6}$ &  & 0.003 & 0.004 & 0.006 & 0.008 \\
\hline
$5 \times 10^{-7}$ &  & 0.003 & 0.003 & 0.005 & 0.006 \\
\hline
$2.5 \times 10^{-7}$ &  & 0.002 & 0.003 & 0.004 & 0.005 \\
\hline
\end{tabular}
\caption*{Table 3}
\end{table}

In Table 3 the computations for the first column ($N=10^6$) were not performed due to computational resource limitations caused by the large amount of input data.

From the results presented in Tables 1–3, it follows that the expected order of accuracy is preserved even when the amount of available information is reduced by a factor of 10 to 100.

\section{Conclusions}
We have presented a convergence analysis of the regularized Nystr\"om subsampling method applied to unsupervised domain adaptation under covariate shift in the low-smoothness settings. The framework covers the misspecified case in which the target function lies outside the reproducing kernel Hilbert space.

When the Radon–Nikodym derivative $\beta$ is known, Corollary \ref{col_1} provides explicit convergence rates on the power scale: the excess risk is bounded by $C(n^{-1/2} + m^{-1/2})^{\gamma/(1-r)}$ up to logarithmic factors, where $\gamma$ and $r$ are the smoothness parameters of the target function and the kernel sections, respectively.

We extended the analysis to the practically relevant setting where $\beta$ is unknown and must be estimated from additional samples of sizes $N$ and $M$. Theorem \ref{main} shows that the convergence rate depends on an additional parameters $N$ and $M$ due to the error estimate  of the Radon-Nikodym derivative $\beta$. 
Theoretical results  are supported by numerical experiments.

\appendix
\newpage

\renewcommand{\thetheorem}{A.\arabic{theorem}}
\renewcommand{\theproposition}{A.\arabic{proposition}}
\renewcommand{\thedefinition}{A.\arabic{definition}}
\renewcommand{\thecorollary}{A.\arabic{corollary}}
\renewcommand{\thelemma}{A.\arabic{lemma}}
\renewcommand{\theremark}{A.\arabic{remark}}
\renewcommand{\theexample}{A.\arabic{example}}
\renewcommand{\theequation}{A.\arabic{equation}}
\section{Appendix. Proof of Auxiliary Statements}\label{ap:a}
{\bf{Proof of Lemma \ref{my:lem2}.}} \\
To prove the inequality (\ref{eq_my2}), we consider two random variables defined as
\begin{eqnarray}\nonumber
\begin{split}
\xi_1(x)&:=\left(\lambda I+J^{\ast}_{T}J_{T}\right)^{-\frac{1}{2}}\K(\cdot,x)f(x),\\
\xi_2(x)&:=\left(\lambda I+J^{\ast}_{T}J_{T}\right)^{-\frac{1}{2}}\K(\cdot,x)\beta(x)f(x).
\end{split}
\end{eqnarray}
Further,
$$\|\xi_1(x)\|_{\HK}=\|\left(\lambda I+J^{\ast}_{T}J_{T}\right)^{-\frac{1}{2}}\K(\cdot,x)f(x)\|_{\HK}\le \sqrt{\mathcal{N}_{\infty}(\lambda)}\|f\|_{\HK},$$
$$\|\xi_2(x)\|_{\HK}=\|\left(\lambda I+J^{\ast}_{T}J_{T}\right)^{-\frac{1}{2}}\K(\cdot,x)\beta(x)f(x)\|_{\HK}\le \beta_0\sqrt{\mathcal{N}_{\infty}(\lambda)}\|f\|_{\HK}.$$
Moreover,
\begin{eqnarray}\nonumber
\begin{split}
\mathbb{E}\xi_1&=\int_{X} 
\left(\lambda I+J^{\ast}_{T}J_{T}\right)^{-\frac{1}{2}}\K(\cdot,x)f(x)d\rho_T(x)\\
&=
\int_{X} 
\left(\lambda I+J^{\ast}_{T}J_{T}\right)^{-\frac{1}{2}}\K(\cdot,x)\beta(x)f(x)d\rho_{S}(x)
=\mathbb{E}\xi_2.
\end{split}
\end{eqnarray}

From this, we have that $\mathbb{E}\xi_1=\mathbb{E}\xi_2$.

In addition, 
\begin{eqnarray}\nonumber
\mathbb{E}\|\xi_1\|^{2}_{\HK}=\int_{X} 
\|\left(\lambda I+J^{\ast}_{T}J_{T}\right)^{-\frac{1}{2}}\K(\cdot,x)f(x)\|^2_{\HK}d\rho_{T}(x)\le\left(C\sqrt{\mathcal{N}_{\infty}(\lambda)}\|f(x)\|_{\HK}\right)^{2}.
\end{eqnarray}
In the same way we estimate
\begin{eqnarray}\nonumber
\begin{split}
\mathbb{E}\|\xi_2\|^{2}_{\HK}&=\int_{X} 
\|\left(\lambda I+J^{\ast}_{T}J_{T}\right)^{-\frac{1}{2}}\K(\cdot,x)\beta(x)f(x)\|^2_{\HK}d\rho_{S}(x)\\
&=\int_{X} 
\|\left(\lambda I+J^{\ast}_{T}J_{T}\right)^{-\frac{1}{2}}\K(\cdot,x)\|^{2}_{\HK}\|\beta(x)\|^{2}_{\HK}\|f(x)\|^{2}_{\HK}d\rho_S(x)
\le \left(C\sqrt{\mathcal{N}_{\infty}(\lambda)}\|f(x)\|_{\HK}\right)^{2}.
\end{split}
\end{eqnarray}
Applying Proposition  \ref{concentrat}, with probability at least $1-\delta$ we obtain 
\begin{eqnarray}\nonumber
\begin{split}
&\|(\lambda I+J^{\ast}_{T}J_{T})^{-\frac{1}{2}}(S_{X_S}^{\ast} BS_{X_S}-S_{X_{T}}^{\ast}S_{X_{T}})\|_{\HK\rightarrow\HK}
\le \|\frac{1}{m}\sum_{j=1}^{m}\,\xi_1(x^{'}_j)-\mathbb{E}\xi_1\|_{\HK}+\|\frac{1}{n}\sum_{i=1}^{n}\,\xi_2(x_i)-\mathbb{E}\xi_2\|_{\HK}\\
&\le C\sqrt{\mathcal{N}_{\infty}(\lambda)}\left(\frac{1}{\sqrt{n}}+\frac{1}{\sqrt{m}}\right)\log\frac{2}{\delta}.
\end{split}
\end{eqnarray}
Lemma is proved.
\\
\\
{\bf{Proof of Lemma \ref{my:lem1}.}} \\
To prove the inequality (\ref{eq_my1}), first, note that $AB^{-1}=(A-B)A^{-1}(A-B)B^{-1}+(A-B)A^{-1}+I,$ then
\begin{eqnarray}\label{dec:1}
 \begin{split}
(\lambda I&+J_{T}^{*}J_{T})(\lambda I + S_{\X_{S}}^{*}BS_{\X_{S}})^{-1}\\
&=(J_{T}^{*}J_{T}-S_{\X_{S}}^{*}BS_{\X_{S}})(\lambda I+J_{T}^{*}J_{T})^{-1}(J_{T}^{*}J_{T}-S_{\X_{S}}^{*}BS_{\X_{S}})(\lambda I + S_{\X_{S}}^{*}BS_{\X_{S}})^{-1}\\
&+(J_{T}^{*}J_{T}-S_{\X_{S}}^{*}BS_{\X_{S}})(\lambda I + J_{T}^{*}J_{T})^{-1}+I.
\end{split}
\end{eqnarray}
Now, let's decompose the first term in (\ref{dec:1}):
\begin{eqnarray}\nonumber 
 \begin{split}
&(J_{T}^{*}J_{T}-S_{\X_{S}}^{*}BS_{\X_{S}})(\lambda I+J_{T}^{*}J_{T})^{-1}(J_{T}^{*}J_{T}-S_{\X_{S}}^{*}BS_{\X_{S}})(\lambda I + S_{\X_{S}}^{*}BS_{\X_{S}})^{-1}\\
&=\left[(J_{T}^{*}J_{T}-S_{\X_{T}}^{*}S_{\X_{T}})(\lambda I+J_{T}^{*}J_{T})^{-\frac{1}{2}}+(S_{\X_{T}}^{*}S_{\X_{T}}-S_{\X_{S}}^{*}BS_{\X_{S}})(\lambda I+J_{T}^{*}J_{T})^{-\frac{1}{2}}\right]\\
&\times\left[(\lambda I+J_{T}^{*}J_{T})^{-\frac{1}{2}}(J_{T}^{*}J_{T}-S_{\X_{T}}^{*}S_{\X_{T}})+(\lambda I+J_{T}^{*}J_{T})^{-\frac{1}{2}}(S_{\X_{T}}^{*}S_{\X_{T}}-S_{\X_{S}}^{*}BS_{\X_{S}})\right](\lambda I + S_{\X_{S}}^{*}BS_{\X_{S}})^{-1}.
\end{split}
\end{eqnarray}
For the second term  in (\ref{dec:1}) we have
\begin{eqnarray}\nonumber
 \begin{split}
 &(J_{T}^{*}J_{T}-S_{\X_{S}}^{*}BS_{\X_{S}})(\lambda I + J_{T}^{*}J_{T})^{-1}\\
&=\left[(J_{T}^{*}J_{T}-S_{\X_{T}}^{*}S_{\X_{T}})(\lambda I+J_{T}^{*}J_{T})^{-\frac{1}{2}}+(S_{\X_{T}}^{*}S_{\X_{T}}-S_{\X_{S}}^{*}BS_{\X_{S}})(\lambda I+J_{T}^{*}J_{T})^{-\frac{1}{2}}\right](\lambda I+J_{T}^{*}J_{T})^{-\frac{1}{2}}.
\end{split}
\end{eqnarray}
Further,
 \begin{eqnarray}\nonumber
 \begin{split}
\|(\lambda I&+J_{T}^{*}J_{T})(\lambda I + S_{\X_{S}}^{*}BS_{\X_{S}})^{-1}\|_{\HK\rightarrow\HK}\\
&\le\Big[\|(\lambda I+J_{T}^{*}J_{T})^{-\frac{1}{2}}(J_{T}^{*}J_{T}-S_{\X_{T}}^{*}S_{\X_{T}})\|^{2}_{\HK\rightarrow\HK}\\
&+2\|(S_{\X_{T}}^{*}S_{\X_{T}}-S_{\X_{S}}^{*}BS_{\X_{S}})(\lambda I+J_{T}^{*}J_{T})^{-\frac{1}{2}}\|_{\HK\rightarrow\HK}\|(\lambda I+J_{T}^{*}J_{T})^{-\frac{1}{2}}(J_{T}^{*}J_{T}-S_{\X_{T}}^{*}S_{\X_{T}})\|_{\HK\rightarrow\HK}\\
&+\|(\lambda I+J_{T}^{*}J_{T})^{-\frac{1}{2}}(S_{\X_{T}}^{*}S_{\X_{T}}-S_{\X_{S}}^{*}BS_{\X_{S}})\|^{2}_{\HK\rightarrow\HK}\Big]\| (\lambda I + S_{\X_{S}}^{*}BS_{\X_{S}})^{-1}\|_{\HK\rightarrow\HK}\\
&+\Big[\|J_{T}^{*}J_{T}-S_{\X_{T}}^{*}S_{\X_{T}})(\lambda I+J_{T}^{*}J_{T})^{-\frac{1}{2}}\|_{\HK\rightarrow\HK}+\|(S_{\X_{T}}^{*}S_{\X_{T}}-S_{\X_{S}}^{*}BS_{\X_{S}})(\lambda I+J_{T}^{*}J_{T})^{-\frac{1}{2}}\|_{\HK\rightarrow\HK}\Big]\\
&\times\|(\lambda I+J_{T}^{*}J_{T})^{-\frac{1}{2}}\|_{\HK\rightarrow\HK}+1.
\end{split}
\end{eqnarray}
Next, applying Lemma \ref{my:lem2}, (\ref{bound:2}), and (\ref{eq:g}) with probability at least $1-\delta$, we have
 \begin{eqnarray}\nonumber
 \begin{split}
&\|(\lambda I+J_{T}^{*}J_{T})(\lambda I + S_{\X_{S}}^{*}BS_{\X_{S}})^{-1}\|_{\HK\rightarrow\HK}\\
&\le C\left[\left(\mathcal{B}_{m,\lambda}+\sqrt{\mathcal{N}_{\infty}(\lambda)}\left(\frac{1}{\sqrt{n}}+\frac{1}{\sqrt{m}}\right)\right)^{2}\right]\frac{1}{\lambda}\log^{2}\frac{2}{\delta}
+ C\left[\mathcal{B}_{m,\lambda}+\sqrt{\mathcal{N}_{\infty}(\lambda)}\left(\frac{1}{\sqrt{n}}+\frac{1}{\sqrt{m}}\right)\right]\frac{1}{\sqrt{\lambda}}\log\frac{2}{\delta}+1\\
&\le C \left[\left(\frac{\mathcal{B}_{m,\lambda}\log\frac{2}{\delta}}{\sqrt{\lambda}}+\sqrt{\frac{\mathcal{N}_{\infty}(\lambda)}{\lambda}}\left(\frac{1}{\sqrt{n}}+\frac{1}{\sqrt{m}}\right)\log\frac{2}{\delta}+\frac{1}{2}\right)^{2}\right]+\frac{3}{4}.
\end{split}
\end{eqnarray}

Further, using Proposition \ref{Cordes} we derive to the statement of the lemma.
\\
\\
{\bf{Proof of Lemma \ref{my:lem3}.}} \\
To prove the inequality (\ref{eq_my3}), 
 let consider the map $\xi(x,y)\colon X\times Y \rightarrow \HK$ defined by
$$\xi(x,y)= \left(\lambda I+J^{\ast}_{T}J_{T}\right)^{-\frac{1}{2}}\K(\cdot,x)\beta(x)y.$$
it is clear that
$$
\|\xi\|_{\HK}=\|\left(\lambda I+J^{\ast}_{T}J_{T}\right)^{-\frac{1}{2}}\K(\cdot,x)\beta(x) y\|_{\HK}\le \beta_0 y_0\sqrt{\mathcal{N}_x(\lambda)},
$$
Moreover, for $p(x,y)=\rho(y|x)\rho_{S}(x)$
 \begin{eqnarray}\nonumber
\begin{split}
\mathbb{E}\xi&=\int_{X\times Y}\left(\lambda I+J^{\ast}_{T}J_{T}\right)^{-\frac{1}{2}}\K(\cdot,x)\beta(x) y dp(x,y)= \int_{X} \left(\lambda I+J^{\ast}_{T}J_{T}\right)^{-\frac{1}{2}}\K(\cdot,x)\int_{Y} y d\rho(y|x)\beta(x)d\rho_{S}(x)\\
&= \int_{X} \left(\lambda I+J^{\ast}_{T}J_{T}\right)^{-\frac{1}{2}}\K(\cdot,x)\int_{Y} y d\rho(y|x)d\rho_{T}(x)= \int_{X} \left(\lambda I+J^{\ast}_{T}J_{T}\right)^{-\frac{1}{2}}\K(\cdot,x)f(x)d\rho_{T}(x)\\
&= \left(\lambda I+J^{\ast}_{T}J_{T}\right)^{-\frac{1}{2}}J_{T}^{\ast}f.
\end{split}
\end{eqnarray}
In addition, 
 \begin{eqnarray}\nonumber
\begin{split}
\mathbb{E}\|\xi\|^2_{\HK}&=\int_{X\times Y}\|\left(\lambda I+J^{\ast}_{T}J_{T}\right)^{-\frac{1}{2}}\K(\cdot,x)\beta(x) y\|^2_{\HK} dp(x,y)\\
=&\int_{X}\|\left(\lambda I+J^{\ast}_{T}J_{T}\right)^{-\frac{1}{2}}\K(\cdot,x)\|^2_{\HK}\int_{Y}|y|^{2}|\beta(x)|^2d\rho(y|x)d\rho_S(x)\\
&\le \beta_0 y_0^2\int_{X}\|\left(\lambda I+J^{\ast}_{T}J_{T}\right)^{-\frac{1}{2}}\K(\cdot,x)\|^2_{\HK}d\rho_T(x)\le \beta_0 y_0^2(\sqrt{\mathcal{N}_{\infty}(\lambda)})^2.
\end{split}
\end{eqnarray}
Note that 
$$
\frac{1}{n}\sum_{i=1}^{n}\,\xi(x_i,y_i)=\left(\lambda I+J^{\ast}_{T}J_{T}\right)^{-\frac{1}{2}}\frac{1}{n}\sum_{i=1}^{n}\K(\cdot,x_i)\beta(x_i) y_i=\left(\lambda I+J^{\ast}_{T}J_{T}\right)^{-\frac{1}{2}}S_{X_S}^{\ast}B\overline{y}.
$$
Hence, by Proposition \ref{concentrat}, with probability at least $1-\delta$ we get
 \begin{eqnarray}\nonumber
\begin{split}
\|(\lambda I&+J^{\ast}_{T}J_{T})^{-\frac{1}{2}}
\left [J_{T}^{\ast}f_q-S_{X_S}^{\ast} B\overline{y}\right]\|_{\HK}\\
&=C\|\frac{1}{n}\sum_{i=1}^{n}\,\xi(x_i,y_i)-\mathbb{E}\xi\|_{\HK}\le C \left(\frac{\sqrt{\mathcal{N}_{x}(\lambda)}}{n}+\sqrt{\frac{\mathcal{N}_{\infty}(\lambda)}{n}}\right)\log\frac{1}{\delta}\\
&\le C \sqrt{\mathcal{N}_{\infty}(\lambda)}\frac{1}{\sqrt{n}}\log\frac{1}{\delta}.
\end{split}
\end{eqnarray}
\\
\\
{\bf{Proof of Lemma \ref{my:lem4}.}} \\
We start with a decomposition of the left-hand side of (\ref{sig_22})
\begin{eqnarray}\nonumber
\begin{split}
 \| \left(\lambda I+P_{\z^{\nu}}J_{T}^{\ast}J_{T}P_{\z^{\nu}}\right)^{-1}P_{\z^{\nu}}J_{T}^{\ast}f_q\|_{\HK} 
&\le \|\PZnu J_{T}^{\ast}(\lambda I+J_{T}P_{\z^{\nu}}J_{T}^{\ast})^{-1}
\varphi(J_{T}P_{\z^{\nu}}J_{T}^{\ast})\mu_q\|_{\HK}\\
&+\|\PZnu J_{T}^{\ast}(\lambda I+J_{T}P_{\z^{\nu}}J_{T}^{\ast})^{-1}[\varphi(J_{T}J_{T}^{\ast})-\varphi(J_{T}P_{\z^{\nu}}J_{T}^{\ast})]\mu_q\|_{\HK}.
 \end{split}
\end{eqnarray}
Keeping in mind that $\varphi$ is operator monotone and applying  (\ref{qualific_root}) and (\ref{eq:g}), we get
 \begin{eqnarray}\nonumber
\begin{split}
 \| \left(\lambda I+P_{\z^{\nu}}J_{T}^{\ast}J_{T}P_{\z^{\nu}}\right)^{-1}P_{\z^{\nu}}J_{T}^{\ast}f_q\|_{\HK} \le \frac{C}{\sqrt{\lambda}}\varphi(\lambda)+\frac{C}{\sqrt{\lambda}}\varphi(\|J_T(I-\PZnu)\|^{2}_{\HK\rightarrow L_{2,\rho_T}})
 \end{split}
\end{eqnarray}
Since the function $\varphi(t)$ is covered by the qualification $p=1$, for any $C>1$ we have
\begin{equation}\label{eq:qual}
\frac{t}{\varphi(t)} \le \frac{Ct}{\varphi(Ct)} \quad
\Rightarrow \qquad \varphi(Ct)\le C \varphi(t).\nonumber
\end{equation}
This together with (\ref{eq:prob1}) completes the proof of the lemma.
\\
\\
{\bf{Proof of Lemma \ref{my:lem5}.}} \\
To prove the inequality (\ref{eq_my5}), first, note that $AB^{-1}=(A-B)A^{-1}(A-B)B^{-1}+(A-B)A^{-1}+I,$ then
\begin{eqnarray}\label{dec:2}
 \begin{split}
(\lambda I&+J_{T}^{*}J_{T})(\lambda I + S_{\X_{S}}^{*}B_{M,N}S_{\X_{S}})^{-1}\\
&=(J_{T}^{*}J_{T}-S_{\X_{S}}^{*}B_{M,N}S_{\X_{S}})(\lambda I+J_{T}^{*}J_{T})^{-1}(J_{T}^{*}J_{T}-S_{\X_{S}}^{*}B_{M,N}S_{\X_{S}})(\lambda I + S_{\X_{S}}^{*}B_{M,N}S_{\X_{S}})^{-1}\\
&+(J_{T}^{*}J_{T}-S_{\X_{S}}^{*}B_{M,N}S_{\X_{S}})(\lambda I + J_{T}^{*}J_{T})^{-1}+I.
\end{split}
\end{eqnarray}
Now, let's decompose the first term in (\ref{dec:2}):
\begin{eqnarray}\nonumber 
 \begin{split}
&(J_{T}^{*}J_{T}-S_{\X_{S}}^{*}B_{M,N}S_{\X_{S}})(\lambda I+J_{T}^{*}J_{T})^{-1}(J_{T}^{*}J_{T}-S_{\X_{S}}^{*}B_{M,N}S_{\X_{S}})(\lambda I + S_{\X_{S}}^{*}B_{M,N}S_{\X_{S}})^{-1}\\
&=\left[(J_{T}^{*}J_{T}-S_{\X_{S}}^{*}BS_{\X_{S}})(\lambda I+J_{T}^{*}J_{T})^{-\frac{1}{2}}+(S_{\X_{S}}^{*}BS_{\X_{S}}-S_{\X_{S}}^{*}B_{M,N}S_{\X_{S}})(\lambda I+J_{T}^{*}J_{T})^{-\frac{1}{2}}\right]\\
&\times\left[(\lambda I+J_{T}^{*}J_{T})^{-\frac{1}{2}}(J_{T}^{*}J_{T}-S_{\X_{S}}^{*}BS_{\X_{S}})+(\lambda I+J_{T}^{*}J_{T})^{-\frac{1}{2}}(S_{\X_{S}}^{*}BS_{\X_{S}}-S_{\X_{S}}^{*}B_{M,N}S_{\X_{S}})\right](\lambda I + S_{\X_{S}}^{*}BS_{\X_{S}})^{-1}.
\end{split}
\end{eqnarray}
For the second term  in (\ref{dec:2}) we have
\begin{eqnarray}\nonumber
 \begin{split}
 &(J_{T}^{*}J_{T}-S_{\X_{S}}^{*}B_{M,N}S_{\X_{S}})(\lambda I + J_{T}^{*}J_{T})^{-1}\\
&=\left[J_{T}^{*}J_{T}-S_{\X_{S}}^{*}BS_{\X_{S}})(\lambda I+J_{T}^{*}J_{T})^{-\frac{1}{2}}+(S_{\X_{S}}^{*}BS_{\X_{S}}-S_{\X_{S}}^{*}B_{M,N}S_{\X_{S}})(\lambda I+J_{T}^{*}J_{T})^{-\frac{1}{2}}\right](\lambda I+J_{T}^{*}J_{T})^{-\frac{1}{2}}.
\end{split}
\end{eqnarray}
Further,
 \begin{eqnarray}\nonumber
 \begin{split}
\|(\lambda I&+J_{T}^{*}J_{T})(\lambda I + S_{\X_{S}}^{*}BS_{\X_{S}})^{-1}\|_{\HK\rightarrow\HK}\\
&\le\Big[\|(\lambda I+J_{T}^{*}J_{T})^{-\frac{1}{2}}(J_{T}^{*}J_{T}-S_{\X_{S}}^{*}BS_{\X_{S}})\|^{2}_{\HK\rightarrow\HK}\\
&+2\|(S_{\X_{S}}^{*}BS_{\X_{S}}-S_{\X_{S}}^{*}B_{M,N}S_{\X_{S}})(\lambda I+J_{T}^{*}J_{T})^{-\frac{1}{2}}\|_{\HK\rightarrow\HK}\|(\lambda I+J_{T}^{*}J_{T})^{-\frac{1}{2}}(J_{T}^{*}J_{T}-S_{\X_{S}}^{*}BS_{\X_{S}})\|_{\HK\rightarrow\HK}\\
&+\|(\lambda I+J_{T}^{*}J_{T})^{-\frac{1}{2}}(S_{\X_{S}}^{*}BS_{\X_{S}}-S_{\X_{S}}^{*}B_{M,N}S_{\X_{S}})\|^{2}_{\HK\rightarrow\HK}\Big]\| (\lambda I + S_{\X_{S}}^{*}BS_{\X_{S}})^{-1}\|_{\HK\rightarrow\HK}\\
&+\Big[\|J_{T}^{*}J_{T}-S_{\X_{S}}^{*}BS_{\X_{S}})(\lambda I+J_{T}^{*}J_{T})^{-\frac{1}{2}}\|_{\HK\rightarrow\HK}+\|(S_{\X_{S}}^{*}BS_{\X_{S}}-S_{\X_{S}}^{*}B_{M,N}S_{\X_{S}})(\lambda I+J_{T}^{*}J_{T})^{-\frac{1}{2}}\|_{\HK\rightarrow\HK}\Big]\\
&\times\|(\lambda I+J_{T}^{*}J_{T})^{-\frac{1}{2}}\|_{\HK\rightarrow\HK}+1\\
&=\Big(\|(\lambda I+J_{T}^{*}J_{T})^{-\frac{1}{2}}(J_{T}^{*}J_{T}-S_{\X_{S}}^{*}BS_{\X_{S}})\|_{\HK\rightarrow\HK}
+\|(\lambda I+J_{T}^{*}J_{T})^{-\frac{1}{2}}(S_{\X_{S}}^{*}BS_{\X_{S}}-S_{\X_{S}}^{*}B_{M,N}S_{\X_{S}})\|^{2}_{\HK\rightarrow\HK}\Big)^{2}\\
&\times\| (\lambda I + S_{\X_{S}}^{*}BS_{\X_{S}})^{-1}\|_{\HK\rightarrow\HK}
+\Big[\|J_{T}^{*}J_{T}-S_{\X_{S}}^{*}BS_{\X_{S}})(\lambda I+J_{T}^{*}J_{T})^{-\frac{1}{2}}\|_{\HK\rightarrow\HK}\\
&+\|(S_{\X_{S}}^{*}BS_{\X_{S}}-S_{\X_{S}}^{*}B_{M,N}S_{\X_{S}})(\lambda I+J_{T}^{*}J_{T})^{-\frac{1}{2}}\|_{\HK\rightarrow\HK}\Big]
\|(\lambda I+J_{T}^{*}J_{T})^{-\frac{1}{2}}\|_{\HK\rightarrow\HK}+1.
\end{split}
\end{eqnarray}
From the proof of Lemma \ref{my:lem1} it follows that
\begin{eqnarray}\label{eq_nB}
 \begin{split}
\|(\lambda I+J_{T}^{*}J_{T})^{-\frac{1}{2}}(J_{T}^{*}J_{T}-S_{\X_{S}}^{*}BS_{\X_{S}})\|_{\HK\rightarrow\HK}\le \mathcal{B}_{m,\lambda}\log\frac{2}{\delta}+\sqrt{\mathcal{N}_{\infty}(\lambda)}\left(\frac{1}{\sqrt{n}}+\frac{1}{\sqrt{m}}\right).
  \end{split}
\end{eqnarray}

This together with Lemma \ref{my:lem2}, (\ref{bound:2}), (\ref{eq_n5}), and (\ref{eq:g})  implies that
 \begin{eqnarray}\nonumber
 \begin{split}
\|(\lambda I&+J_{T}^{*}J_{T})(\lambda I + S_{\X_{S}}^{*}B_{M,N}S_{\X_{S}})^{-1}\|_{\HK\rightarrow\HK}\\
&\le C\left(\mathcal{B}_{m,\lambda}+\sqrt{\mathcal{N}_{\infty}(\lambda)}\left(\frac{1}{\sqrt{n}}+\frac{1}{\sqrt{m}}\right)+\sqrt{\mathcal{N}_{\infty}(\lambda)}E_{\beta}\right)^{2}\frac{1}{\lambda}\log^{2}\frac{2}{\delta}\\
&+ C\left[\mathcal{B}_{m,\lambda}+\sqrt{\mathcal{N}_{\infty}(\lambda)}\left(\frac{1}{\sqrt{n}}+\frac{1}{\sqrt{m}}\right)+\sqrt{\mathcal{N}_{\infty}(\lambda)}E_{\beta}\right]\frac{1}{\sqrt{\lambda}}\log\frac{2}{\delta}+1\\
&\le C \left(\frac{\mathcal{B}_{m,\lambda}\log\frac{2}{\delta}}{\sqrt{\lambda}}+\left[E_{\beta}+\left(\frac{1}{\sqrt{n}}+\frac{1}{\sqrt{m}}\right)\right]\sqrt{\frac{\mathcal{N}_{\infty}(\lambda)}{\lambda}}\log\frac{2}{\delta}+\frac{1}{2}\right)^{2}+\frac{3}{4}.
\end{split}
\end{eqnarray}

Finally, using Proposition \ref{Cordes} we derive to the statement of the lemma.
\newpage

\renewcommand{\thetheorem}{B.\arabic{theorem}}
\renewcommand{\theproposition}{B.\arabic{proposition}}
\renewcommand{\thedefinition}{B.\arabic{definition}}
\renewcommand{\thecorollary}{B.\arabic{corollary}}
\renewcommand{\thelemma}{B.\arabic{lemma}}
\renewcommand{\theremark}{B.\arabic{remark}}
\renewcommand{\theexample}{B.\arabic{example}}
\renewcommand{\theequation}{B.\arabic{equation}}
\section{Appendix. Proof of Proposition \ref{resultHK} }\label{ap:b}
\begin{proof}
We start with a decomposition
\begin{eqnarray}\label{err_decomp}
\begin{split}
f_{q}-J_{T}f_{\z,\z^{\nu}}^{\lambda_{m,n}}&=f_{q}-
J_{T}\left(\lambda I+P_{\z^{\nu}}S_{X_S}^{\ast} BS_{X_S}P_{\z^{\nu}}\right)^{-1}
P_{\z^{\nu}}S_{X_S}^{\ast} B\overline{y}
=\sigma_1+\sigma_2+\sigma_3,
\end{split}
\end{eqnarray}
where
\begin{eqnarray}
\begin{split}
\sigma_1&:=f_{q} -
J_T\left(\lambda I+P_{\z^{\nu}}J_{T}^{\ast}J_{T}P_{\z^{\nu}}\right)^{-1}P_{\z^{\nu}}J_{T}^{\ast}f_q,\\
\sigma_2&:=J_T\left(\lambda I+P_{\z^{\nu}}J_{T}^{\ast}J_{T}P_{\z^{\nu}}\right)^{-1}P_{\z^{\nu}}J_{T}^{\ast}f_q- J_{T}\left(\lambda I+P_{\z^{\nu}}S_{X_S}^{\ast} BS_{X_S}P_{\z^{\nu}}\right)^{-1}P_{\z^{\nu}}J_{T}^{\ast}f_q,\nonumber\\
\sigma_3&:=J_{T}\left(\lambda I+P_{\z^{\nu}}S_{X_S}^{\ast} BS_{X_S}P_{\z^{\nu}}\right)^{-1}P_{\z^{\nu}}J_{T}^{\ast}f_q-
J_{T}\left(\lambda I+P_{\z^{\nu}}S_{X_S}^{\ast} BS_{X_S}P_{\z^{\nu}}\right)^{-1}
P_{\z^{\nu}}S_{X_S}^{\ast} B\overline{y} .\nonumber
\end{split}
\end{eqnarray}
 We estimate the norm of each terms
$
\sigma_i,$ $i=\overline{1,3}.$
Note that the norms of $\sigma_1$  and $\overline{\sigma}_1$ are equal (see (\ref{ovsig1_decomp}), (\ref{sigma1})). Thus, 
\begin{eqnarray}\label{sig1}
\begin{split}
\|\sigma_1\|_{L_{2,\rho_{T}}}=\|\overline{\sigma}_1\|_{L_{2,\rho_{T}}}&\le 
C\varphi(\lambda).
\end{split}
\end{eqnarray}

Now, we at the point to estimate the norm of $\sigma_2$. Recall that
\begin{eqnarray}\nonumber
\begin{split}
\sigma_2&:=J_T\left(\left(\lambda I+P_{\z^{\nu}}J_{T}^{\ast}J_{T}P_{\z^{\nu}}\right)^{-1}-\left(\lambda I+P_{\z^{\nu}}S_{X_S}^{\ast} BS_{X_S}P_{\z^{\nu}}\right)^{-1}\right)P_{\z^{\nu}}J_{T}^{\ast}f_q\\
&=J_T\left(\lambda I+P_{\z^{\nu}}S_{X_S}^{\ast} BS_{X_S}P_{\z^{\nu}}\right)^{-1}\PZnu\left [S_{X_S}^{\ast} BS_{X_S}-J_{T}^{\ast}J_{T}\right] \PZnu \left(\lambda I+P_{\z^{\nu}}J_{T}^{\ast}J_{T}P_{\z^{\nu}}\right)^{-1}P_{\z^{\nu}}J_{T}^{\ast}f_q.
\end{split}
\end{eqnarray}
From this we have

\begin{eqnarray}\nonumber
\begin{split}
\|\sigma_{2}\|_{L_{2,\rho_{T}}}&\le\|J_{T}\left(\lambda I+J^{\ast}_{T}J_{T}\right)^{-\frac{1}{2}}\|_{\HK\rightarrow L_{2,\rho_{T}}}\|\left(\lambda I+J^{\ast}_{T}J_{T}\right)^{\frac{1}{2}}\left(\lambda I+S_{X_S}^{\ast} BS_{X_S}\right)^{-\frac{1}{2}}\|_{\HK\rightarrow\HK}\\
&\times \|\left(\lambda I+S_{X_S}^{\ast} BS_{X_S}\right)^{\frac{1}{2}}\left(\lambda I+P_{\z^{\nu}}S_{X_S}^{\ast} BS_{X_S}P_{\z^{\nu}}\right)^{-1}\PZnu\left(\lambda I+S_{X_S}^{\ast} BS_{X_S}\right)^{\frac{1}{2}}\|_{\HK\rightarrow\HK}\\
&\times \|\left(\lambda I+S_{X_S}^{\ast} BS_{X_S}\right)^{-\frac{1}{2}}\left(\lambda I+J^{\ast}_{T}J_{T}\right)^{\frac{1}{2}}\|_{\HK\rightarrow\HK}\\
&\times\|\left(\lambda I+J^{\ast}_{T}J_{T}\right)^{-\frac{1}{2}}
\left [S_{X_S}^{\ast} BS_{X_S}-J_{T}^{\ast}J_{T}\right]\|_{\HK\rightarrow\HK}\\
&\times \|\left(\lambda I+P_{\z^{\nu}}J_{T}^{\ast}J_{T}P_{\z^{\nu}}\right)^{-1}P_{\z^{\nu}}J_{T}^{\ast}f_q\|_{\HK}.
\end{split}
\end{eqnarray}
Note that
\begin{eqnarray}\nonumber
\begin{split}
&\|\left(\lambda I+J^{\ast}_{T}J_{T}\right)^{-\frac{1}{2}}
\left [S_{X_S}^{\ast} BS_{X_S}-J_{T}^{\ast}J_{T}\right]\|_{\HK\rightarrow\HK}\le
\|\left(\lambda I+J^{\ast}_{T}J_{T}\right)^{-\frac{1}{2}}\left [S_{X_S}^{\ast} BS_{X_S}-J_{T}^{\ast}J_{T}\right]\|_{\HK\rightarrow\HK} \\
&\le \|\left(\lambda I+J^{\ast}_{T}J_{T}\right)^{-\frac{1}{2}}\left [S_{X_S}^{\ast} BS_{X_S}-S_{\X_T}^{\ast}S_{\X_T}\right]\|_{\HK\rightarrow\HK}+\|\left(\lambda I+J^{\ast}_{T}J_{T}\right)^{-\frac{1}{2}}\left[S_{\X_T}^{\ast}S_{\X_T}-J_{T}^{\ast}J_{T}\right]\|_{\HK\rightarrow\HK}.
\end{split}
\end{eqnarray}
Applying Lemma \ref{my:lem2} and (\ref{bound:1}), we get
\begin{eqnarray}\label{add_s2}
\begin{split}
\|\left(\lambda I+J^{\ast}_{T}J_{T}\right)^{-\frac{1}{2}}
\left [S_{X_S}^{\ast} BS_{X_S}-J_{T}^{\ast}J_{T}\right]\|_{\HK\rightarrow\HK}\le
 C\left(\sqrt{\mathcal{N}_{\infty}(\lambda)}\left(\frac{1}{\sqrt{n}}+\frac{1}{\sqrt{m}}\right)+\mathcal{B}_{m,\lambda}\right)\log\frac{2}{\delta}.
\end{split}
\end{eqnarray}

This together with Lemma \ref{lem:polar}, (\ref{eq:prob20}), Lemma \ref{my:lem1}, (\ref{bound:2}), Proposition \ref{Cordes}, and Lemma \ref{my:lem4} implies that 
\begin{eqnarray}\label{sigma2}
\begin{split}
&\|\sigma_2\|_{L_{2,\rho_T}} \le C\left[ \left(\frac{\mathcal{B}_{m,\lambda}\log\frac{2}{\delta}}{\sqrt{\lambda}}+\sqrt{\frac{\mathcal{N}_{\infty}(\lambda)}{\lambda}}\left(\frac{1}{\sqrt{n}}+\frac{1}{\sqrt{m}}\right)\log\frac{2}{\delta}+\frac{1}{2}\right)^{2}+\frac{3}{4}\right]\frac{\mathcal{B}_{m,\lambda}}{\sqrt{\lambda}}\varphi(\lambda)\log\frac{2}{\delta}\\
&+ C\left[ \left(\frac{\mathcal{B}_{m,\lambda}\log\frac{2}{\delta}}{\sqrt{\lambda}}+\sqrt{\frac{\mathcal{N}_{\infty}(\lambda)}{\lambda}}\left(\frac{1}{\sqrt{n}}+\frac{1}{\sqrt{m}}\right)\log\frac{2}{\delta}+\frac{1}{2}\right)^{2}+\frac{3}{4}\right]\sqrt{\frac{\mathcal{N}_{\infty}(\lambda)}{\lambda}}\left(\frac{1}{\sqrt{n}}+\frac{1}{\sqrt{m}}\right)\varphi(\lambda)\log\frac{2}{\delta}.
\end{split}
\end{eqnarray}
The rest of the proof is about the estimation of $\|\sigma_3\|_{L_{2,\rho_T}}.$ First, we rewrite $\|\sigma_3\|_{L_{2,\rho_T}}$ as follows
 \begin{eqnarray}\nonumber
\begin{split}
\|\sigma_3\|_{L_{2,\rho_T}}&=\|J_{T}\left(\lambda I+J^{\ast}_{T}J_{T}\right)^{-\frac{1}{2}}\|_{\HK\rightarrow L_{2,\rho_{T}}}\|\left(\lambda I+J^{\ast}_{T}J_{T}\right)^{\frac{1}{2}}\left(\lambda I+S_{X_S}^{\ast} BS_{X_S}\right)^{-\frac{1}{2}}\|_{\HK\rightarrow\HK}\\
&\times \|\left(\lambda I+S_{X_S}^{\ast} BS_{X_S}\right)^{\frac{1}{2}}\left(\lambda I+P_{\z^{\nu}}S_{X_S}^{\ast} BS_{X_S}P_{\z^{\nu}}\right)^{-1}\PZnu\left(\lambda I+S_{X_S}^{\ast} BS_{X_S}\right)^{\frac{1}{2}}\|_{\HK\rightarrow\HK}\\
&\times \|\left(\lambda I+S_{X_S}^{\ast} BS_{X_S}\right)^{-\frac{1}{2}}\left(\lambda I+J^{\ast}_{T}J_{T}\right)^{\frac{1}{2}}\|_{\HK\rightarrow\HK}\\
&\times\|\left(\lambda I+J^{\ast}_{T}J_{T}\right)^{-\frac{1}{2}}
\left [J_{T}^{\ast}f_q-S_{X_S}^{\ast} B\overline{y}\right]\|_{\HK}.
 \end{split}
\end{eqnarray}
By means of Lemma \ref{lem:polar}, (\ref{eq:prob2}), (\ref{bound:2}), Proposition \ref{Cordes}, and Lemmas  \ref{my:lem1}, \ref{my:lem3}, we obtain
 \begin{eqnarray}\label{sigma3}
\begin{split}
&\|\sigma_3\|_{L_{2,\rho_T}}\le C\left[ \left(\frac{\mathcal{B}_{m,\lambda}\log\frac{2}{\delta}}{\sqrt{\lambda}}+\sqrt{\frac{\mathcal{N}_{\infty}(\lambda)}{\lambda}}\left(\frac{1}{\sqrt{n}}+\frac{1}{\sqrt{m}}\right)\log\frac{2}{\delta}+\frac{1}{2}\right)^{2}+\frac{3}{4}\right]
 \sqrt{\mathcal{N}_{\infty}(\lambda)}\frac{1}{\sqrt{n}}\log\frac{2}{\delta}.
\end{split}
\end{eqnarray}

Thus, summing up (\ref{sig1}), (\ref{sigma2}), and (\ref{sigma3}) with probability at least $1-\delta$, we finally obtain
 \begin{eqnarray}\label{err_gen}\nonumber
\begin{split}
&\|f_q-J_Tf_{\z,\Znu}^{\lambda_{m,n}}\|_{L_{2},\rho_{T}} \le C\varphi(\lambda)+C\left[ \left(\frac{\mathcal{B}_{m,\lambda}\log\frac{2}{\delta}}{\sqrt{\lambda}}+\sqrt{\frac{\mathcal{N}_{\infty}(\lambda)}{\lambda}}\left(\frac{1}{\sqrt{n}}+\frac{1}{\sqrt{m}}\right)\log\frac{2}{\delta}+\frac{1}{2}\right)^{2}+\frac{3}{4}\right]\frac{\mathcal{B}_{m,\lambda}}{\sqrt{\lambda}}\varphi(\lambda)\log\frac{2}{\delta}\\
&+ C\left[ \left(\frac{\mathcal{B}_{m,\lambda}\log\frac{2}{\delta}}{\sqrt{\lambda}}+\sqrt{\frac{\mathcal{N}_{\infty}(\lambda)}{\lambda}}\left(\frac{1}{\sqrt{n}}+\frac{1}{\sqrt{m}}\right)\log\frac{2}{\delta}+\frac{1}{2}\right)^{2}+\frac{3}{4}\right]\sqrt{\frac{\mathcal{N}_{\infty}(\lambda)}{\lambda}}\left(\frac{1}{\sqrt{n}}+\frac{1}{\sqrt{m}}\right)\varphi(\lambda)\log\frac{2}{\delta}\\
&+C\left[ \left(\frac{\mathcal{B}_{m,\lambda}\log\frac{2}{\delta}}{\sqrt{\lambda}}+\sqrt{\frac{\mathcal{N}_{\infty}(\lambda)}{\lambda}}\left(\frac{1}{\sqrt{n}}+\frac{1}{\sqrt{m}}\right)\log\frac{2}{\delta}+\frac{1}{2}\right)^{2}+\frac{3}{4}\right]
 \sqrt{\mathcal{N}_{\infty}(\lambda)}\frac{1}{\sqrt{n}}\log\frac{2}{\delta}.
\end{split}
\end{eqnarray}

\end{proof}
\end{document}